\newcommand{\boxalg}{{\sc box}}
\newcommand{\randalg}{{\sc rand}}
\newcommand{\dooalg}{{\sc doo}}
\newcommand{\staticalg}{{\sc static}}
\newcommand{\rawalg}{{\sc rawplanner}}
\newtheorem{thm}{Theorem}
\newtheorem{lem}[thm]{Lemma}
\newtheorem{cor}[thm]{Corollary}
\DeclareMathOperator{\argmax}{arg\,max}
\DeclareMathOperator{\argmin}{arg\,min}
\newcommand{\field}[1]{\mathbb{#1}}
\newcommand{\R}{\field{R}} 
\newcommand{\T}{^{\textrm T}} 
\providecommand{\dif}{\mathop{}\!\mathrm d}
\newcommand{\mat}[1]{\boldsymbol{#1}} 
\newcommand{\mI}{\mat{I}}
\newcommand{\hS}{\hat{\Sigma}}
\newcommand{\hmu}{\hat{\mu}}
\begin{document}

\runninghead{Kim et al.}

\title{Learning to guide task and motion planning using score-space representation}

\author{Beomjoon Kim, Zi Wang, Leslie Pack Kaelbling and
 Tom\'as Lozano-P\'erez\affilnum{1}}

\affiliation{\affilnum{1} Massachusetts Institute of Technology, 
Computer Science and Artificial Intelligence Laboratory, USA}

\corrauth{Beomjoon Kim, Massachusetts Institute of Technology,
Computer Science and Artificial Intelligence Laboratory,
Cambridge, MA, 02139, USA}
\email{beomjoon@mit.edu}

\begin{abstract}

In this paper, we propose a learning algorithm that speeds up the
search in task and motion planning problems.  
Our algorithm proposes solutions to three different
challenges that arise in learning to improve planning efficiency: 
what to predict,  how to represent a planning problem instance, 
and how to transfer knowledge from one problem instance to another. 
We propose a method that predicts constraints on the search space
based on a generic representation of a planning problem instance,
called score-space, where we represent a problem instance in terms of the
performance of a set of solutions attempted so far. Using this representation,
we transfer knowledge, in the form of constraints, from previous problems based on
the similarity in score space.
We design a sequential algorithm that efficiently
predicts these constraints, and evaluate it in three
different challenging task and motion planning problems. Results indicate that 
our approach performs orders of magnitudes faster than an unguided planner. 
\end{abstract}

\keywords{Task and motion planning, score-space representation, black-box function
optimization}
\maketitle
\newcommand{\probspace}{\Omega}
\newcommand{\prob}{\omega}
\newcommand{\minsolconspace}{\Theta_{\min}}
\newcommand{\solcon}{\theta}
\newcommand{\optsolcon}{\theta_*}
\newcommand{\testprob}{\prob_{n+1}}
\newcommand{\expmat}{\mathbf{D}}
\newcommand{\minliblist}{\mathcal{L}}
\newcommand{\nextsolconidx}{\theta_{next}}

\newcommand{\solspace}{\mathcal{X}}
\newcommand{\apprxsolconspace}{\Theta}
\newcommand{\solconspace}{\Theta}

\newcommand{\origset}{\Theta}
\newcommand{\regsetx}{\bar{x}}
\newcommand{\minsetx}{x}
\newcommand{\minset}{\Theta_{min}}
\newcommand{\bestminset}{\Theta_{\min}^*}
\newcommand{\minsetlist}{\minliblist^*}
\newcommand{\gpucb}{{\sc gpucb}}

\section{1. Introduction}

Task and motion planning (TAMP) problems are sequential
decision making problems in which a robot is required to
search for a sequence of decisions that account for both
discrete and continuous aspects of the world to achieve 
a high-level goal. These decisions are intricately related to
one another, and they must abide by collision-constraints and
transition dynamics.

 A variety of planners have been developed
for TAMP problems~\citep{GravotISRR05,TLPKBel13,TLPIROS14,SrivastavaICRA14,
ToussaintIJCAI15,DantamRSS17}.
However, their worst-case computation time generally scales
exponentially with problem size, and each new problem instance must be
solved from scratch, making them inefficient for real-world tasks.
 In contrast, humans are able to short-cut their
planning process by learning to adapt previous planning experience to
reduce the search space intelligently for new problem instances.
This observation motivates the design of an algorithm that learns from
experience to make predictions that guide the search of a
planner.  We face three important questions in designing the learning
algorithm: (1) what to predict, (2) how to represent a problem
instance, and (3) how to transfer knowledge from past experience to
the current problem instance.

\begin{figure}
\centering
\includegraphics[height=4.5cm,width=4.2cm]{./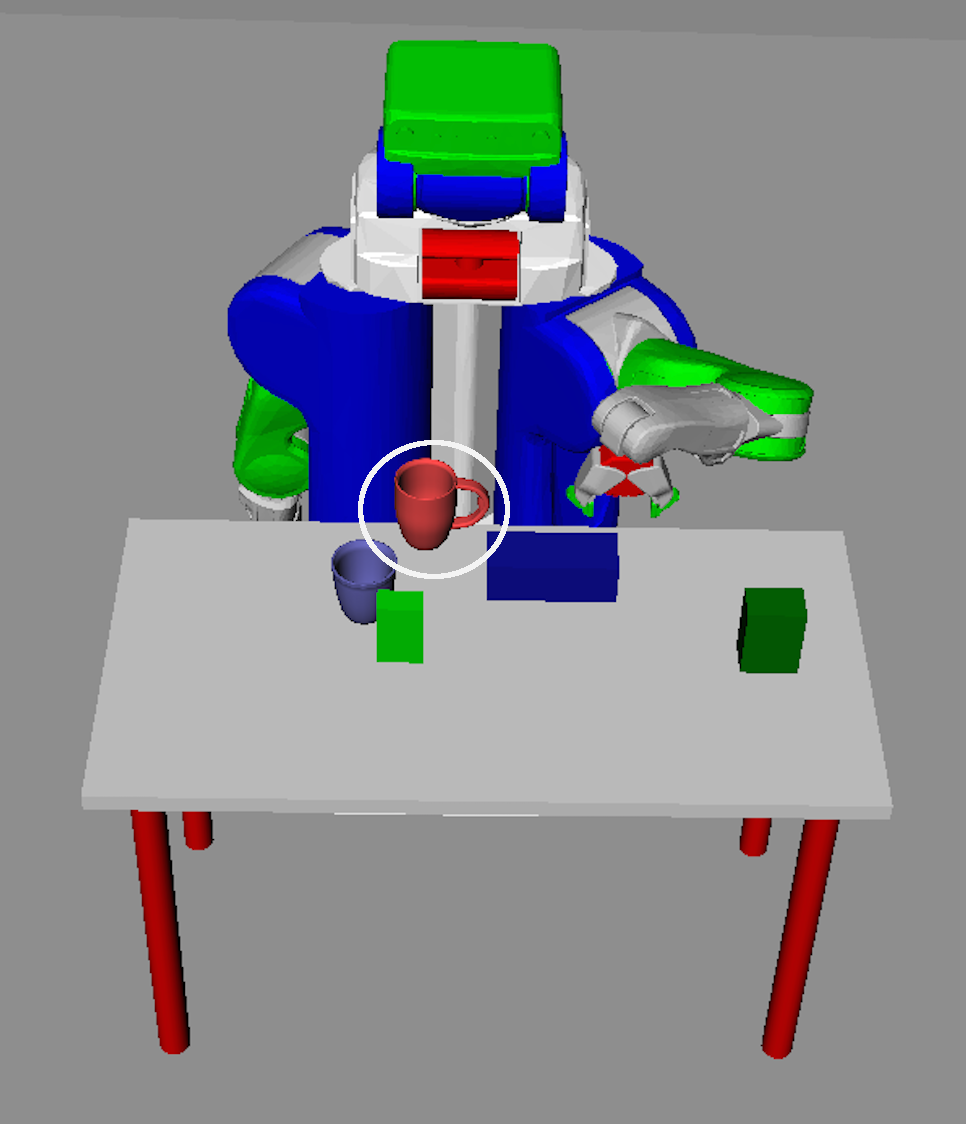}
\includegraphics[height=4.5cm,width=4.2cm]{./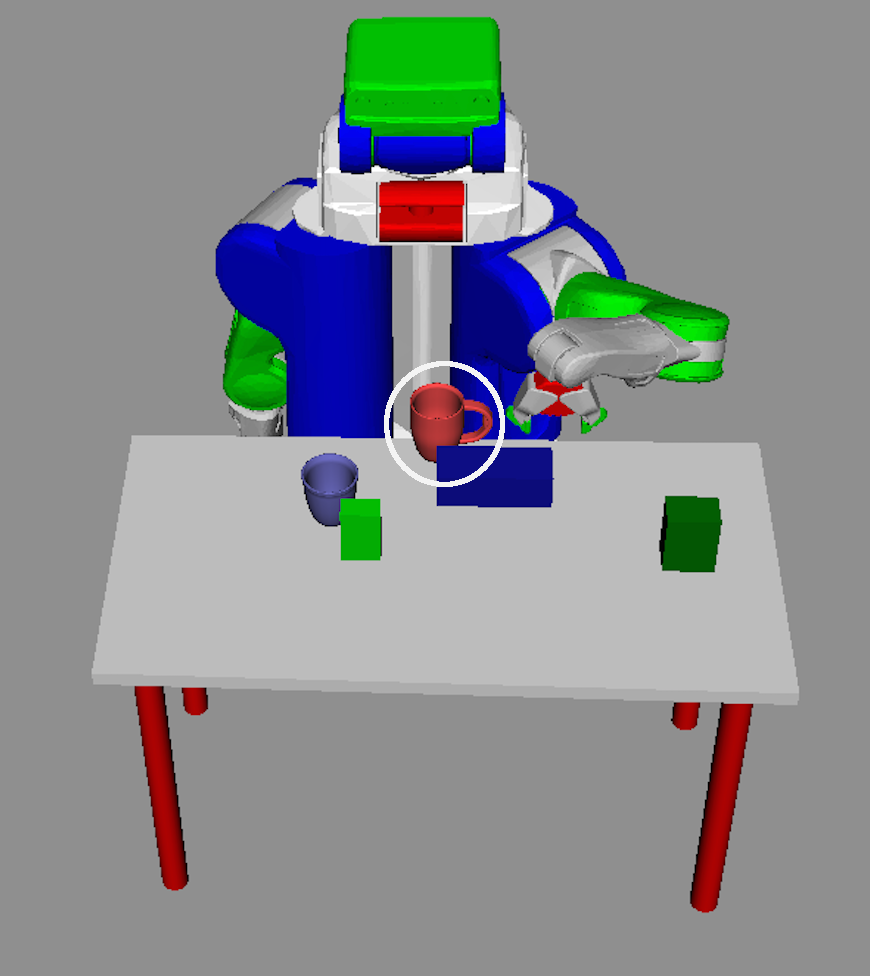}\\
\caption{The task is to pick up the blue object by planning a collision-free path from an
 initial configuration to a pre-grasp configuration. When the obstacle (pink cup) moves 
even by 0.02m, this requires a qualitatively different collision-free path.}
\label{fig:motiv2}
\end{figure}

The first challenge is what to predict.  Previous approaches to using
learning to speed up planning have tried predicting a complete solution,
or a subgoal that fully specifies the robot configuration and world state,
including object poses. However, because of the intricate 
relationship between object poses and the robot's free-space, a small
change in the environment may completely alter the space of feasible
solutions.  This lack of regularity in the relationship between a problem
instance and its solution makes it difficult to predict a complete
solution or a subgoal based on experience. This difficulty is 
illustrated in Figure~\ref{fig:motiv2} in a pick domain.

\begin{figure}
\centering
\includegraphics[scale=0.2]{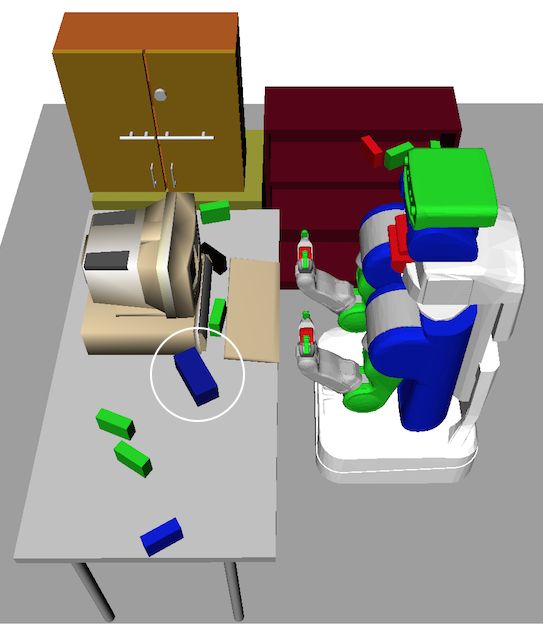}
\includegraphics[scale=0.2]{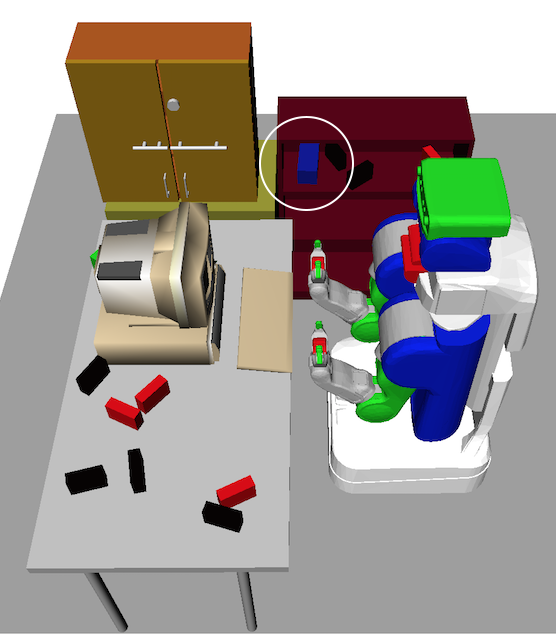}
\caption{Two instances of the grasp selection domain. The arrangement and number of
  obstacles vary randomly across different planning problem instances. The objective
is to find an arm trajectory to a pre-grasp pose for the blue box, marked with a circle, whose pose
is randomly determined in each problem instance}
\label{fig:grasp_domain}
\end{figure}

Building on this observation, we instead learn to predict
constraints on the solution by finding a subset of decision
variables that can be predicted reliably, while
 leaving the rest of the decision variables to be
filled in by the planner. This decomposition is based on the intuition
that constraints can generalize more effectively across problem
instances than a complete solution or a subgoal. For instance,
consider a robot trying to pick an object from a shelf, as shown in
Figure \ref{fig:grasp_domain} (right). A constraint that forces
the robot to approach the object either from the side or the top, depending 
on whether the object is on the table or the shelf, can be used more reliably 
across different arrangements of obstacles and object poses than a detailed 
path plan or a specific pre-grasp configuration.

We will refer to the subset of decision variables that we predict as
\emph{solution constraints}, or constraints for short.  
Solution constraints, when intelligently
chosen, will effectively reduce the search space while preserving the
robustness of the planner against changes in problem instances. 

These points are illustrated in Figure~\ref{fig:constraint_illustration}.
Notice how the reduced search space that satisfies the given constraint
is much smaller than the original search space. 
The planner now only has to find a solution within this space,
requiring much less computation than the unconstrained planner.
Also notice that unlike a complete plan, a constraint is more
general in the sense that a single constraint can be applied to
a set of problems instead of a single problem.

\begin{figure}
\centering
\includegraphics[scale=0.3]{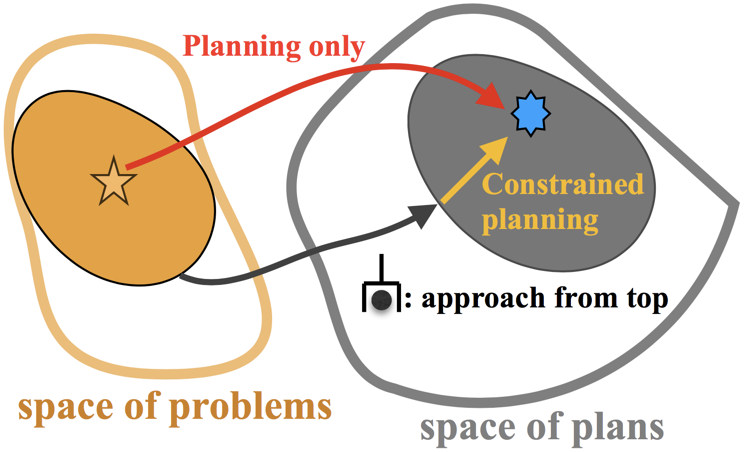}
\caption{A comparison of search done by a planner
without a constraint, and with constraint that forces
the planner to find a solution that approaches the target object from
the top. The black arrow indicates the work done by a machine learning algorithm,
which predicts a constraint, and the yellow arrow indicates 
the work done by a constrained planner.
The gray blob in the space of plans show the reduced search space
that satisfies the constraint. The yellow blob in the space of
problems indicate the set of problem instances for which
there exists a plan that satisfies the top-grasp constraint. 
The yellow star indicates the given problem instance.}
\label{fig:constraint_illustration}
\end{figure}

The second challenge is representing problem instances. In most
of the TAMP problems that we are contemplating, a manual design of 
a generic feature representation for predicting constraints is difficult: there are 
varying numbers and shapes of objects for each problem instance, 
and some object relations matter for some problem instances but
not for others. For instance, in Figure \ref{fig:grasp_domain} (left),
the obstacles on the desk influence how the robot
should pick the blue object and the obstacles on the shelf are
irrelevant. On the other hand, in Figure  \ref{fig:grasp_domain} (right),
the obstacles on the desk are irrelevant.

In light of this, we propose a new type of representation for problem
instances, called \emph{score-space}. In a score-space representation,
 we represent a problem instance
in terms of a vector of the scores for a set of plans on that
instance, where each of the plans is computed based on one of a fixed
set of promising solution constraints. The intuition is that what matters in a 
planning problem is how an environment responds to a potential solution, and a 
good way to predict what solution constraints will work well is to consider
how effective other solution constraints have been. 
Figure~\ref{fig:score_space_illustration}
shows examples of score-space representations of different 
TAMP problem instances. 
\begin{figure}
\centering
\includegraphics[scale=0.3]{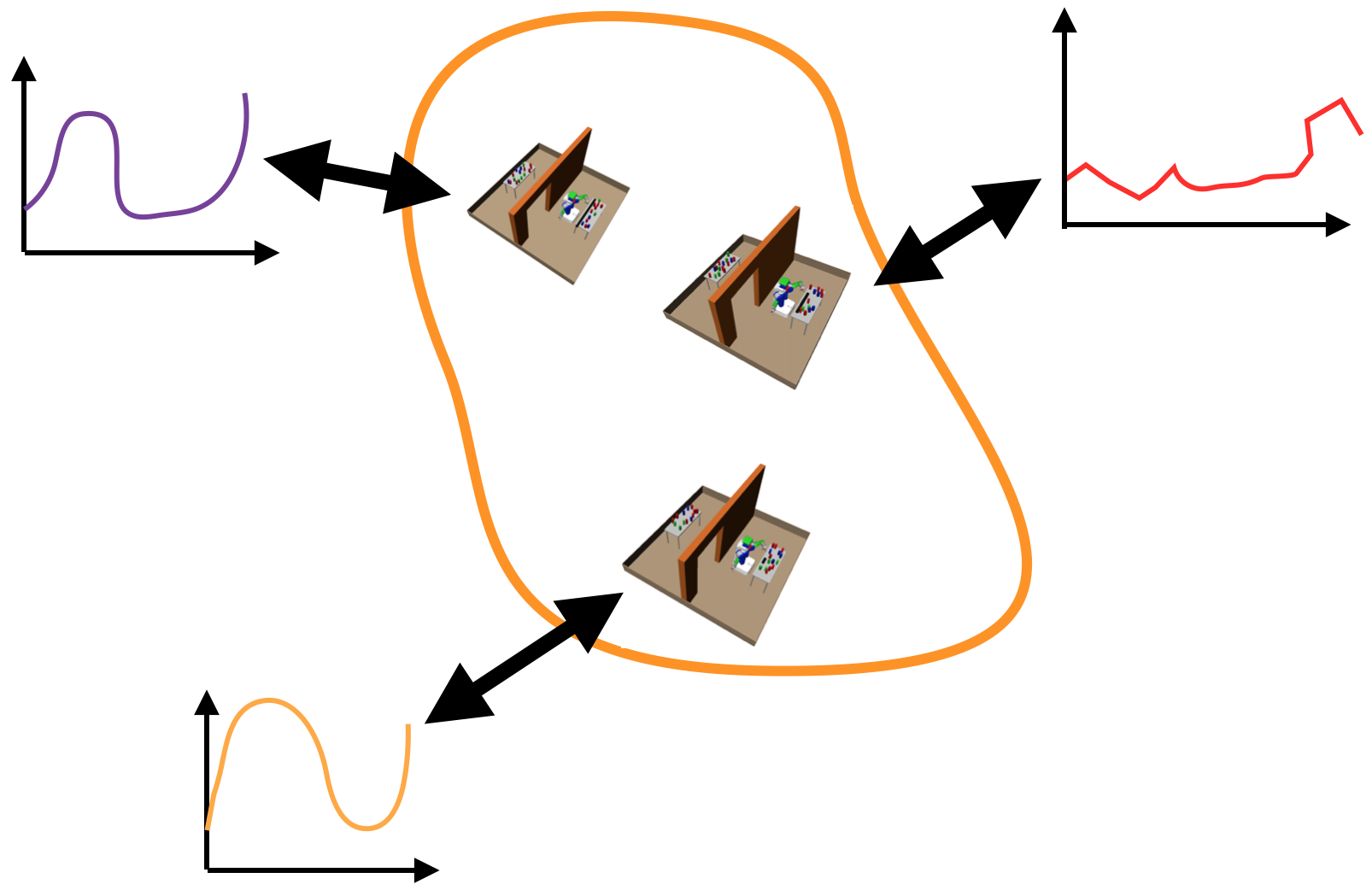}
\caption{An illustration of a score-space representation of
three different problem instances. We assume an one-dimensional 
constraint for the illustration purpose. The x-axis of
a score function represents different values of constraints,
and y-axis represents the score of a plan that conforms to a 
constraint.}
\label{fig:score_space_illustration}
\end{figure}

  The main advantage of the score-space representation
is that it gives direct information about similarity between problem instances and 
solution constraints, without depending on a hand-designed representation.
Similar problem instances will have similar patterns of scores on different
constraints, and similar constraints will have similar patterns of scores
on different problem instances.

 For instance, consider again the task in Figure \ref{fig:grasp_domain}. 
By computing a plan associated
with the solution constraint that forces the robot to  approach the 
object from the top and
observing that it has a low score, we can learn that it is occluded from
above, e.g. by a shelf, and can predict other more appropriate action choices.
This information is not biased by the designer's choice of representation: having
learned that approaching the object from the side did not work,
a learning algorithm may try to approach the object from the top. Such reasoning
is much more difficult when the learning algorithm is forced
to reason in terms of a particular representation of the problem,
such as poses of obstacles or 2D or 3D images of the scene. 
The main disadvantage is that the score-space information is
computationally expensive to obtain, since it requires computing
the plans associated with solution constraints.

This observation brings us to the third challenge of
 learning to plan, which is how to transfer knowledge from past experience
to the current problem instance efficiently.  We propose to solve this challenge
by using the expectation and correlation of the scores of solution
constraints from past problem instances to determine which solution
constraint to try next. Our intuition is that the solution constraints
that performed well or poorly together are more likely to do so in a
new problem instance; for instance, in the previous example for
picking an object from a shelf, grasps from the sides would have worked
well in most of the problem instances, while grasps from the top or bottom
would have worked poorly.

Building on this intuition, we propose an algorithm that directly
reasons with the correlation information in the score-space 
representation of a problem instance. We
assume that a score vector that represents a problem instance is
distributed according to a Gaussian distribution, 
and propose~\boxalg, which is an upper-confidence-bound (UCB)
type, experience-based, black-box function-optimization
technique~\citep{MunosFTML14,SrinivasICML10}.~\boxalg~learns 
to suggests solution constraints based on its belief
about the scores of solution constraints, whose prior
distribution is computed using scores of constraints
in the past planning problem instances, and whose 
posterior distribution is defined by updating the parameters
of the Gaussian distribution using the score feedback from the environment.
The overall work-flow is shown in Figure~\ref{fig:box_overall}.
 
\begin{figure*}
\centering
\includegraphics[scale=0.35]{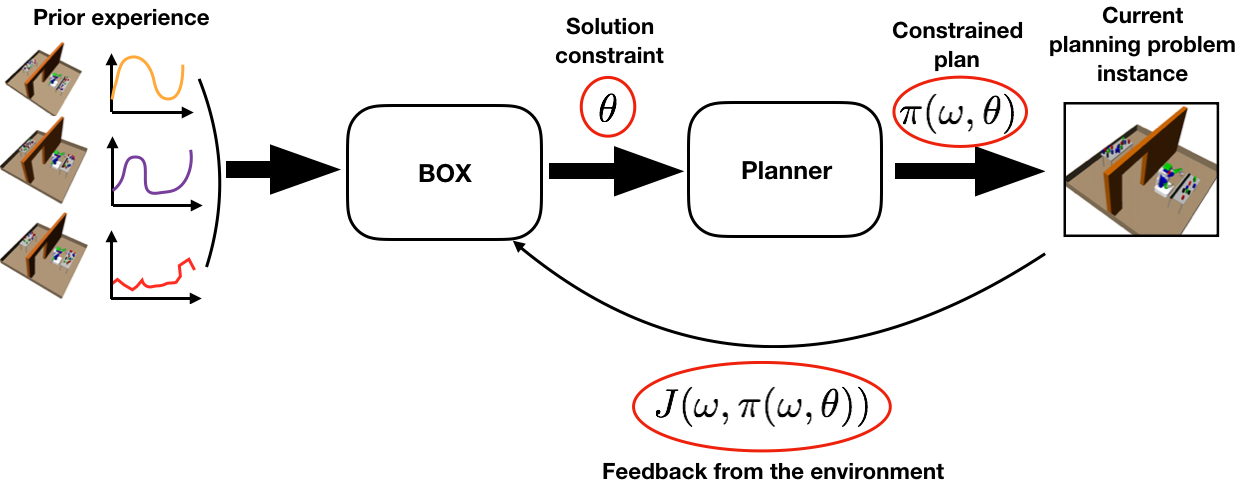}
\caption{Overall work-flow of BOX. It receives prior experience data
in the form of score values, which it uses to compute the prior belief
of scores of a current problem instance. It then suggests a constraint 
to the current problem instance, receives a score, and updates its
belief. This process is repeated until we find a solution.} 
\label{fig:box_overall}
\end{figure*}

This paper is an improved and extended version of our prior paper~\citep{KimICRA17}.
In particular, we make the following additional contributions. 
\begin{itemize}
\item We prove that our algorithm has a sublinear regret in number
of evaluations under
certain conditions, using the theoretical results from
Bayesian optimization literature~\citep{SrinivasICML10}. 
\item We propose a new algorithm 
that reduces the cardinality of an initial set of constraints. 
It reduces the worst-case evaluation time, the computation
time for updating the covariance matrix, and the number of
evaluations of a constraint. We add experiments to evaluate
the effect of this algorithm. 
\item We present a new simulation result that requires a significantly
longer planning horizon than the ones that we considered
in our prior work. Besides the longer horizon, this new
domain is especially hard because uniform random sampling
of actions would
frequently call a motion planner on infeasible problem 
instances. 
\end{itemize}

In all of the tasks, we show that~\boxalg~can accelerate planning 
significantly over a basic planner that does not use solution
constraints.  We also provide a comparison to a sampler that picks
solution constraints uniformly at random, and to a state-of-the-art black-box
function optimization technique called deterministic optimistic
optimization (DOO)~\citep{MunosNIPS11} that does not use the score space
representation. We find that~\boxalg~ outperforms these other methods.
Additional experiments on using a minimal constraint set built
using our greedy algorithm indicate that
we can almost completely eliminate the covariance matrix
inversion time for~\boxalg, and reduce the total planning time by
a factor of 5 for some domains.

\section{2. Related work}
\label{related}

There is a substantial body of work aimed at improving motion planning
performance on new problem instances based on previous experience on
similar problem
instances~\citep{BerensonICRA12,HodalEngMech08,HutchinsonCybernetics92,
jetchev2013,LienRSS09,PhillipsRSS12}.
The typical approach is to store a large set of solutions to earlier
instances so that, when presented with a new problem instance, one can
(a) retrieve the most relevant previous solution and (b) adapt it to
the new situation.  These methods differ in the way that they find the
most relevant previous solution and how it is adapted.

Several of these approaches define a similarity metric between problem
instances and retrieve solutions based on this metric.  For example,
~\cite{HodalEngMech08} use the distance between the start and
goal pairs as the metric, whereas,~\cite{HutchinsonCybernetics92}
 based their metric on descriptions of quickly generated low-quality 
solutions for the current and 
previous instance. \cite{jetchev2013}  use a mapping into a
task-relevant space and measure similarity in that space, 
with a learned metric.

Instead of defining solution similarity, \cite{BerensonICRA12} define
relevance of earlier solutions by measuring the degree of
constraint violation, for example collision, in the current situation.  A
related idea is developed by \cite{PhillipsRSS12}, where the search
graph of past solutions is saved and the search for the current
problem instance is biased towards the part of this past graph that is
still feasible.

For more complex robotic planning, such as for mobile manipulation in
cluttered environments, complete solutions are more difficult to adapt
to new problem instances.  In particular, the length of the plans is
highly variable and they contain both discrete and continuous
parameters.

Some earlier approaches have also
focused on predicting partial solutions, in the form of a goal state or
subgoals, instead of a complete solution. For instance, 
in the work of \cite{DraganISRR11}, the objective is to learn from
previous examples a classifier (or regressor) that, given a
hand-designed feature representation of a planning problem instance,
enables choosing a goal that leads to a good locally optimal
trajectory.  In the approach of \cite{FinneyRSS07}, 
the goal is to learn a model that
predicts partial paths or subgoals, from a given parametric
representation of a planning problem instance, aimed at enabling a
randomized motion planner to navigate through narrow passages.

Several approaches have been proposed for learning representations
for robot manipulation skills with varying set of objects~\citep{KroemerISER16,KroemerICRA17}.
 Specifically,~\cite{KroemerISER16} proposes
a feature selection method that, given a certain basic features of objects
in a scene, such as bounding boxes describing their shapes, predicts
relevance of each feature for the given task. The relevant features 
are then used with a low-level robot motion skill represented with 
a dynamic motion primitive. While this approach is quite general, it
still requires a human to design the basic set of features that
are relevant for the tasks that the robot will encounter. 

\cite{ZhuICCV17} proposed an approach to learn a representation
for learning a complete policy from RGB images. Authors designed an architecture
and defined a set of discrete high-level actions that allows an agent
to accomplish different tasks. Similar to our approach, they
learn a representation from planning experience. However,
they learn a complete policy, whereas our approach
learns to predict solution constraints. Moreover, their method 
does not take a robot into account - for manipulation tasks, however,
visual representation greatly suffers from occlusion by the end-effectors
of the robot. Our score-space representation  abstracts away
from this problem by directly representing a planning scene with
how well the predicted constraints works.

Our approach can be seen as a method for choosing actions from a
library; several methods have been proposed for this
problem. \cite{DeyAAAI2012} propose a method that finds a fixed ordering of
the actions in a library that optimizes a user-defined submodular
function, for example, the probability that a sequence of candidate
grasps will contain a successful one. Unlike our work, this method
produces a static list, which does not change across different problem
instances.  Later, \cite{DeyRSS2012}, generalized the
approach by producing an ordered list of classifiers (operating over
environment features) that select actions for a given problem
instance.  This approach again requires hand-designed features for the
problem instances.

We formulate our problem as a black-box function optimization problem.
In particular, \boxalg~is motivated by the \emph{principle of 
optimism in the face of uncertainty},
 which is well surveyed by 
\cite{MunosFTML14}. The main idea is 
to select the most ``optimistic'' item from the
given set of items, by constructing an upper bound on the values of
un-evaluated items. The performances of these algorithms are
heavily influenced by how the upper-bounds are constructed. 
For instance, \dooalg~(Deterministic Optimistic Optimization), developed
by~\cite{MunosNIPS11}, 
first constructs the upper bound on the target function using
manually specified semi-metric and a smoothness assumption on the 
target function. It then chooses the next point to evaluate based 
on this upper bound in order to balance exploration and exploitation.

Another suite  of algorithms for black-box function optimization problems are
 Bayesian optimization algorithms~\citep{SrinivasICML10,ZiAISTATS16,SnoekNIPS12}.
In Bayesian optimization, an upper bound of the target function is constructed
based on a hand-designed covariance matrix and the Gaussian process assumption.
For instance, \cite{SrinivasICML10} constructs an upper bound using
the confidence interval in an algorithm called Gaussian Process Upper Confidence Bound (\gpucb). 
At each time step, \gpucb~evaluates the point that has the 
highest UCB value, observes the function
value, and updates the Gaussian process. It returns the query point that resulted
in the highest value.

The problem with these approaches is that, in general, they require a human
to design a similarity function on problem instances and planning
solutions: \dooalg~requires
 a semi-metric, and Bayesian optimization algorithms require a kernel.
This introduces human bias in constructing the upper-bounds on the
target function, which strongly influences the performance of these
algorithms. Moreover, as we have shown in the introduction, designing
such a function is not trivial: even a small change in a problem instance 
 may induce a large change in scores.

Rather than relying on a hand-designed similarity function over problem instances
and planning solutions, our algorithm,~\boxalg, operates in a 
vector space of scores of possible solutions where the correlation
information among them can be computed easily.

\section{3. Problem Formulation}
Our premise is that calling the planner with a solution constraint,
although much more efficient than the completely unconstrained 
problem, takes a significant amount of time and may generate
significantly suboptimal plans if the solution
constraint used is not a good match for the problem instance.
Given a new instance we will call the planner with a fixed number of
solution constraints and return the best plan obtained. So, our
problem is which solution constraints should be tried,
and in what order. 

We formulate the problem as a black-box function optimization problem
over a discrete space of candidate solution constraints, and use
upper bounds constructed from experience on previous problem instances
as well as the accumulated experience on this instance to determine
which constraint to try next.

Formally, we have a sample space for problem instances, $\probspace$, whose
elements $\prob$ are distributed according to $P(\prob)$; a space
of possible planning solutions, $\solspace$; and a space of solution
constraints. The plan solution space includes all possible
assignments to all of the decision variables, and the solution constraint space
includes all possible assignments to a \emph{subset} of decision variables
for the given planning problem. The function $J(\prob,x)$ specifies the score of a
solution $x \in  \solspace$ on problem instance $ \omega \in \probspace$.
We assume a planner $\pi: \probspace \rightarrow \solspace$ that, given a problem
instance $\prob$ can return a solution $\pi(\prob) \in \solspace$ that is either
feasible or near-optimal depending on the nature of the problem. 
In addition, we assume that, given a solution constraint $\solcon$, the
planner $\pi$ will return $\pi(\prob,\solcon) \in \solspace$,which is a plan
subject to the solution constraint $\solcon$;  in general this
solution will not be optimal (so in general $J(\prob,\pi(\prob)) > J(\prob,\pi(\prob,
\solcon))$), unless $\solcon$ was perfectly suited to the problem
instance $\prob$, but constraining the plan to satisfy $\solcon$ will
make it significantly more efficient to compute. With a slight abuse of
notation, we will denote $J(\prob,\solcon) =  J(\prob,\pi(\prob,
\solcon))$, the evaluation of the solution constraint.

Let  $\apprxsolconspace= \{\solcon_1, \ldots, \theta_m\}$
be a set of samples from the space of solution constraints. 
Now, we formulate our problem 
as follows: given a ``training set'' of example problem instances
$\prob_1, \ldots, \prob_n$ sampled identically and independently from $P(\prob)$, 
a discrete set of solution constraints 
$\hat{\solconspace}$,
and the score function $J(\cdot,\cdot)$, generate a high-scoring 
solution to the ``test'' problem instance $\testprob$. 

An interesting problem that we do not explicitly address in this paper
is how to select the subset of decision variables for specifying 
constraints $\theta$. In this paper, we manually choose
a subset of decision variables as solution constraints, and
take the simple approach of solving the training problem 
instances and then extracting 
the $\theta$ values corresponding to these chosen constraints.
The details of constructing $\apprxsolconspace$
are provided in Algorithm~\ref{alg:GenTrainData}.

\subsection{3.1 Black-box function optimization with experience}
Instead of designing a problem-dependent representation for
problem instances, we represent a problem instance with 
a vector of scores of solution constraints
$\apprxsolconspace$, where
\begin{align*}
\Phi(\prob) = [J(\prob,\theta_1),\cdots,J(\prob,\theta_m)]
\end{align*}

$\Phi(\prob)$ here is a random vector that maps a sample
from the sample space of problem instances to $\mathbb{R}^m$. 
Using this representation, our training
data constructed from $n$ problem instances can be 
represented with a $n \times m$ matrix
$$\mathbf{D}=
\begin{bmatrix}
& \Phi(\omega_1) & \\
& \Phi(\omega_2) & \\
& \vdots & \\
& \Phi(\omega_n) &
\end{bmatrix}
$$
that we call the \emph{score matrix}.
Now, given a new problem instance, $\prob$, our goal is
to take advantage of one or more solution constraints 
in $\apprxsolconspace$ to find a high scoring plan
without evaluating all of solution constraints in 
$\apprxsolconspace$. To do this, 
we will develop a procedure that evaluates $J(\prob,\solcon)$ by computing
a plan $\pi(\prob,\solcon)$ for $k << m$
values of $\solcon$.  

We begin by making use of the intuition that some solution constraints
(via the plans they generate) are inherently more useful than others,
 independent of the problem instance. This leads to a naive score-space
approach, \staticalg, that tries solution constraints in $\apprxsolconspace$ 
in a static order according to the empirical mean scores in the $1\times m$ vector
computed by
\begin{align}
\hat{\mu}=\frac{1}{n}\sum_{i=1}^{n}\mathbf{D}_i
\label{eqn:mu}
\end{align}
where $i$ indicates the row of the score matrix,
and then returns the highest scoring plan obtained from trying
the top $k$ solution constraints.

This simple approach does not take advantage of the fact 
that there are correlations among the scores of solution constraints
across problem instances; that is, the score of a solution constraint
that has been already tried on this problem instance
can inform us about the scores of other untried but correlated
solution constraints. In order to exploit correlation, we assume that 
the random vector $\Phi$ is distributed according to a 
multivariate Gaussian distribution, $\mathcal{N}(\mu,\Sigma)$.

Now the score matrix is used to estimate the parameters of the 
prior distribution of $\Phi$, $\hat{\mu}$ and $\hat{\Sigma}$, 
 where $\hat{\mu}$ is defined in equation $\ref{eqn:mu}$ and 
\begin{align}
\hat{\Sigma} = \frac{1}{n-1}\sum_{i=1}^{n} (\mathbf{D}_i - \hat{\mu})^T (\mathbf{D}_i  - \hat{\mu})
\label{eqn:sigma}
\end{align}
is a $m \times m$ covariance matrix.
This prior distribution is updated given evidence about a 
new problem instance, in the form of score values.
Algorithm~\ref{alg:BOX} contains detailed pseudo-code for an algorithm
based on these ideas, called \boxalg, which stands for Blackbox
Optimization with eXperience.  It takes as input: $\testprob$, the ``test''
planning problem instance; $\zeta$, a constant governing the
magnitude of the exploration; $k$, the number of solution constraints to
evaluate; $\apprxsolconspace$, the set of solution constraints in the
training set;  $\hat{\mu}$ and $\hat{\Sigma}$, the parameters for
the prior distribution of $\Phi(\testprob)$;  $J$, the scoring function; and $\pi$,
the planner. 

The algorithm first estimates the parameters of the prior distribution.
Then, it iterates over solution constraints: first it selects a 
solution constraint, then it uses the chosen constraint to construct a new plan, and
the score of that plan combined with the prior computed from 
$\mathbf{D}$ is used to determine the next solution
constraint to evaluate. 

We will use  $\Theta_t$
to denote the constraints that have been tried up to time
$t$, $\bar \Theta_{t}=\Theta \setminus \Theta_{t}$ 
to denote the ones not tried, $\solcon^{(t)}$ to denote
 the index of the solution constraint chosen at time $t$, 
$x^{(t)}$ to denote the associated plan, $J^{(t)}$ to denote the
 score of that plan on the given problem instance, and
$J^{1:t}$ and $J^{\overline{1:t}}$ to denote the scores of tried and untried
solution constraints up to time $t$, respectively.

We will use this constraint notation to refer to
corresponding rows and columns in the mean vector and empirical covariance matrix. For 
instance, at time $t$, we can
rearrange the covariance matrix $\hat{\Sigma}$ as
\begin{align*}
\begin{bmatrix}
\hat{\Sigma}_{\bar{\Theta}_t,\bar{\Theta}_t} & \hat{\Sigma}_{\bar{\Theta}_t,\Theta_t}\\
\hat{\Sigma}_{\Theta_t,\bar \Theta_t} & \hat{\Sigma}_{\Theta_t,\Theta_{t}}\\  
\end{bmatrix}
\end{align*}
where the subscript represents a set of rows and columns of the matrix
$\hat{\Sigma}$. This way, the top-left block matrix is the covariance among
untried solution constraints, the top-right and bottom-left
represent covariance among tried and untried solution constraints,
and the bottom-right represents the covariance among the tried
solution constraints.

In line 1 of Algorithm~\ref{alg:BOX}, we first estimate the prior
distribution of the score function for a new problem instance $\prob_{n+1}$.
 For the consistency of notations we assume $\hmu^{(0)} = \hmu$ and $\hS^{(0)} = \hS$.
Line 2 selects the next
solution constraint to try based on the principle of
\emph{optimism in the face of uncertainty}, by selecting
the one with the maximum upper confidence bound (UCB).
The next three lines generate a plan using the chosen solution
constraint, and then evaluate it. At iteration $t$, 
given the experience of trying $\Theta_{t} =[\theta^{(1)},\cdots,\theta^{(t)}]$ 
and getting scores $J^{1:t} := [J^{(1)},\cdots,J^{(t)}]$, 
our posterior on the scores of the untried solution constraints,
denoted $J^{\overline{1:t}}$, is 
$$J^{\overline{1:t}} | J^{1:t}  \sim  
\mathcal{N}(\hat{\mu}^{(t)}_{\bar{\Theta}_t},~\hat{\Sigma}^{(t)}_{\bar{\Theta}_t,
\bar{\Theta}_t})$$
where
\begin{equation}
\begin{aligned}
\hat{\mu}^{(t)}_{\bar{\Theta}_t} &=  \hat{\mu}_{\bar{\Theta}_t} + 
\hat{\Sigma}_{\bar{\Theta}_t,\Theta_t}
(\hat{\Sigma}_{\Theta_t,\Theta_t})^{-1} 
( J^{1:t} - \hat{\mu}_{\Theta_t} )\\
\hat{\Sigma}^{(t)}_{\bar{\Theta}_t} &= 
\hat{\Sigma}_{\bar{\Theta}_t,\bar{\Theta}_t}
- \hat{\Sigma}_{\bar{\Theta}_t,\Theta_t}
(\hat{\Sigma}_{\Theta_t,\Theta_t})^{-1}
\hat{\Sigma}_{\Theta_t,\bar{\Theta}_t}
\end{aligned}
\label{eqn:update}
\end{equation}
The constant $\zeta$ governs the size
of the confidence interval on the scores. 
We show how the constant can be set through
theoretical analysis of regret bounds in 
the next section. 
 The number of evaluations $k$ should be
chosen based on the desired trade-off between computation time
and solution quality.

\begin{algorithm}[tb]
\small
   \caption{BOX($\testprob, C, k, \solconspace, \bf{D}, J, \pi$)}
   \label{alg:BOX}
\begin{algorithmic}[1]
\STATE Compute $\hmu^{(0)}$ and $\hS^{(0)}$ according to Eqns~\ref{eqn:mu} and~\ref{eqn:sigma}
\FOR{ $t=1$ {\bfseries to} $k$ }
\STATE $\theta^{(t)} = \argmax_{i \in \bar \Theta_t } \hat{\mu}_i^{(t-1)} 
+ \zeta\cdot\sqrt{\hat{\Sigma}^{(t-1)}_{ii}} $
\emph{// $i^{th}$ entry  and $i^{th}$ diagonal entry   }
\STATE $x^{(t)} = \pi(\testprob,\solcon^{(t)})$
\STATE $J^{(t)} = J(\testprob,x^{(t)})$
\STATE Compute $\hat{\mu}^{(t)}$ and $\hat{\Sigma}^{(t)}$ using eqn. \ref{eqn:update}
\ENDFOR
\STATE $t^* = \argmax_{t\in\{1,\cdots,k\}} J^{(t)}$ 
\STATE{\bf return} $x^{(t^*)}$
\end{algorithmic}
\end{algorithm}

\begin{algorithm}[tb]
\small
\caption{GenerateTrainingData($n, \pi, J, [\prob_1,\cdots,\prob_n]$)}
\label{alg:GenTrainData}
\begin{algorithmic}[1]
\FOR{$\prob$ {\bfseries in} $ [\prob_1,\cdots,\prob_n] $} 
\STATE $x_i = \pi(\prob)$ \emph{// repeat to get multiple solutions if desired}
\STATE $\solcon_i = {\it extractConstraint}(x_i)$ {\emph // elements of $\solconspace$}
\ENDFOR

\FOR{$\prob$ {\bfseries in} $ [\prob_1,\cdots,\prob_n] $} 
\FOR{$\solcon$ {\bfseries in} $\apprxsolconspace$}
\STATE $J(\prob,\solcon) = J(\prob,\pi(\prob, \solcon))$ \emph{// elements of $\mathbf{D}$}
\ENDFOR
\ENDFOR
\STATE {\bf return} $\mathbf{D}, \solconspace$
\end{algorithmic}
\end{algorithm}

In order to create the score matrix $\mathbf{D}$ and
solution constraints $\apprxsolconspace$, we run Algorithm $\ref{alg:GenTrainData}$.  This
algorithm takes as input $n$, the number of training problem
instances, $\pi$ a planning algorithm that can solve problem instances
$\testprob$ without additional constraints, $J$, the scoring function
for a plan, and $ [\prob_1,\cdots,\prob_n]$, a set of training sample 
problem instances drawn iid from $P(\prob)$.  For
each problem instance, a solution is generated using $\pi$, and a
constraint is extracted from the solution and added to set
$\apprxsolconspace$.  The process of extracting constraints is
domain-dependent; several examples are illustrated in
the experiment section.  Each new solution constraint
is used to generate a solution $\pi(\testprob,\solcon)$ whose
score $J(\testprob,\solcon)$ is stored in the $\mathbf{D}$ matrix.

\subsection{3.2 Illustrative examples}
We now provide concrete examples of running \boxalg~on some simple
examples. Suppose that our constraint is a set of four different grasps,
defined by approach vectors, from the top, left, bottom, and right. Our
problem is to plan a collision-free path to grasp a target object.
The planner is constrained to use the chosen grasp approach direction
 to grasp the target object. We arbitrarily choose $\zeta=1.96$ to ensure 
$95\%$ confidence interval in our experiments, but it could be 
tuned via cross validation for
better performance.

In Figure~\ref{fig:score_matrix_ex}, we show an example of a score matrix 
$\expmat$ obtained by running~Algorithm \ref{alg:GenTrainData}. In this figure,
the target object is represented with a black circle, and the blue
rectangular objects represent obstacles. We have four training problem
instances, shown across the rows of the score matrix, and the four constraints
across the columns. For illustrative purposes, we assume a 
simple binary score function, which
outputs one if a constraint is feasible for the given problem instance, and
zero otherwise. For example, for the first training problem instance, 
the top-approaching direction is feasible because there is 
no obstacle blocking the object
in that direction, whereas the left-approaching 
direction is blocked with an obstacle. 
For a such binary score function, other prior assumption on the target function,
such as Bernoulli distribution, might be more suited; however, we will in general
consider score functions that take on real numbers, as we will 
demonstrate in our experiment section.

In Figure~\ref{fig:sigma_ex}, we show the result of computing the 
covariance matrix $\hat{\Sigma}$ using $\expmat$ and equation \ref{eqn:sigma}. 
In order to understand \boxalg~more thoroughly, we note 
some salient correlation information  in $\hat{\Sigma}$. First, the top-approaching
constraint is positively correlated with the right-approaching direction,
whereas it is negatively correlated with the bottom-approaching direction.
So, in a new problem instance, if we find that the top-approaching constraint
fails, then it will increase the UCB value of the score for the bottom-approaching
direction while decreasing the UCB value of the right-approaching direction.

We will illustrate these types of behaviors of using two
 problem instances shown in
Figure~\ref{fig:two_pinsts}, where the task is to plan a collision-free
path to grab the circular magenta object, which is occluded by
red obstacles. Clearly, for the problem 
shown in the first row, the only constraint that would work
is the bottom-approaching direction. 
For the second problem instance, the left-approaching direction would be
the only feasible constraint.

Figure~\ref{fig:evolution_1} shows the evolution of UCB values,
$$ \hat{\mu}_i^{(t)} + \zeta\cdot \sqrt{\hat{\Sigma}_{ii}^{(t)}}$$
of the different constraints we denoted with $i$, as \boxalg~suggests constraints and
receives feedback from the planner and the environment.
For example, from the score matrix shown in Figure~\ref{fig:score_matrix_ex},
we can see that the average values of the scores for top-,left-, and 
right-approaching directions are 0.5. The elements in the 
diagonal of the covariance  matrix, which are variances of the scores
of different constraints, are approximately 0.33 for these three constraints.
These give approximate UCB values of 0.83 for these constraints.

The first plot in Figure~\ref{fig:evolution_1} shows the UCB values 
when $t=1$. There is a tie among 
UCB values of the first, second, and the last constraints, so we randomly
break the tie and select the second constraint, marked with
the red circle. After trying to plan a
path with this constraint, we see that it is infeasible. The
second plot shows the updated UCB values  after
observing that the second constraint has a score of zero, using
Eqn~\ref{eqn:update}. We see that the UCB values of the first and
 the last constraint remained unchanged, while the third constraint
has increased. This is because the second constraint 
 has zero correlation with the first and last
constraints, but has negative correlation with the third constraint,
as shown in the Figure~\ref{fig:sigma_ex}. After this, the last and
the first constraints have the same UCB values, and we again randomly
break the tie; unfortunately, we chose the fourth one, but from
this we can update our UCBs such that all other constraints except
the third one are infeasible. 

This example is a particularly hard for~\boxalg, because from our prior 
experience the third constraint was feasible only 1/4 of the time
with the lowest variance. Therefore, our belief about its score was
quite low. We now consider the second problem instance
in Figure~\ref{fig:two_pinsts}, which is more favorable. 
Figure~\ref{fig:evolution_2} shows the evolution of UCB values. 
At $t=1$, shown in the first plot,
 we randomly break the tie, and chose the first constraint.
After observing this is infeasible, at $t=2$, the UCB value of the fourth
constraint, which has positive correlation with the first constraint,
also reduces to almost 0. The third constraint was negatively correlated with
the first constraint, so its UCB value increased; however, the second
constraint, which is the correct constraint for this problem instance,
has zero correlation with the first constraint, and its UCB value is
higher than the third one even after the update. Hence \boxalg~ends up
choosing the feasible constraint after just a single mistake.

\begin{figure*}[htb]
\centering
\begin{subfigure}[b]{0.3\textwidth}	
\includegraphics[scale=0.2]{./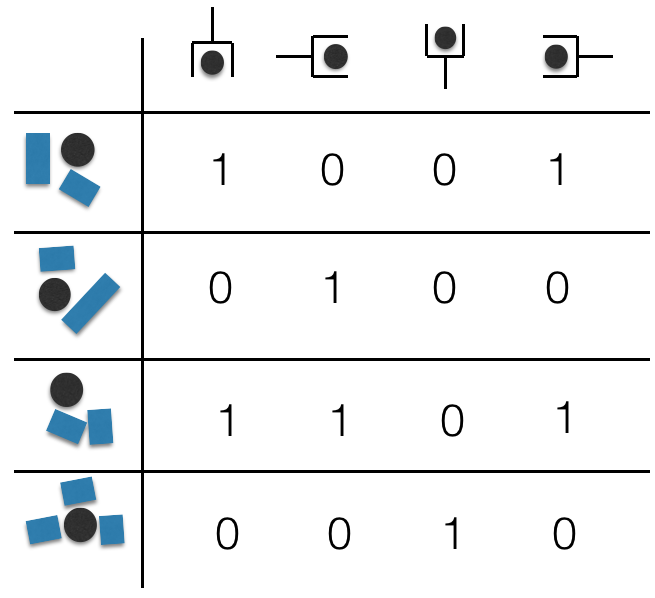}
\captionsetup{justification=centering,margin=1cm}
\caption{Score matrix, $\mathbf{D}$}
\label{fig:score_matrix_ex}
\end{subfigure}
\begin{subfigure}[b]{0.3\textwidth}	
\includegraphics[scale=0.2]{./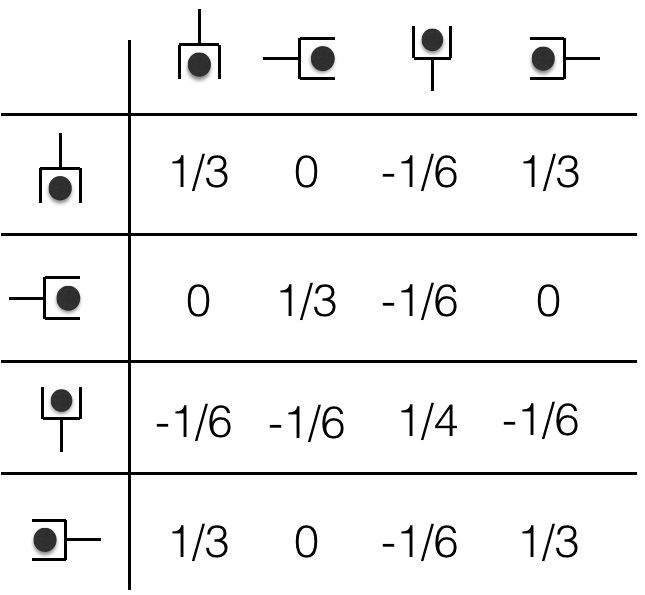}
\captionsetup{justification=centering,margin=1cm}
\caption{Covariance matrix, $\hat{\Sigma}$  }
\label{fig:sigma_ex}
\end{subfigure}
\begin{subfigure}[b]{0.3\textwidth}	
\includegraphics[scale=0.2]{./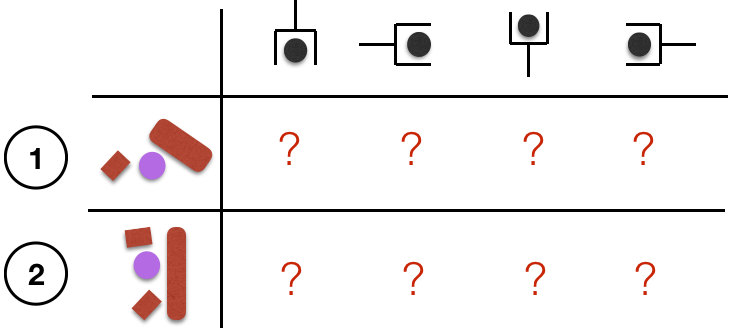}
\vspace{0.7cm}
\captionsetup{justification=centering,margin=0.5cm}
\caption{ Two new problem instances }
\label{fig:two_pinsts}
\end{subfigure}
\captionsetup{justification=centering,margin=0.5cm}
\caption{Score and covariance matrices for running \boxalg, and two new problem instances}
\end{figure*}

\begin{figure*}[htb]
\centering
\begin{subfigure}[b]{1\textwidth}	
\centering
\includegraphics[scale=0.2]{./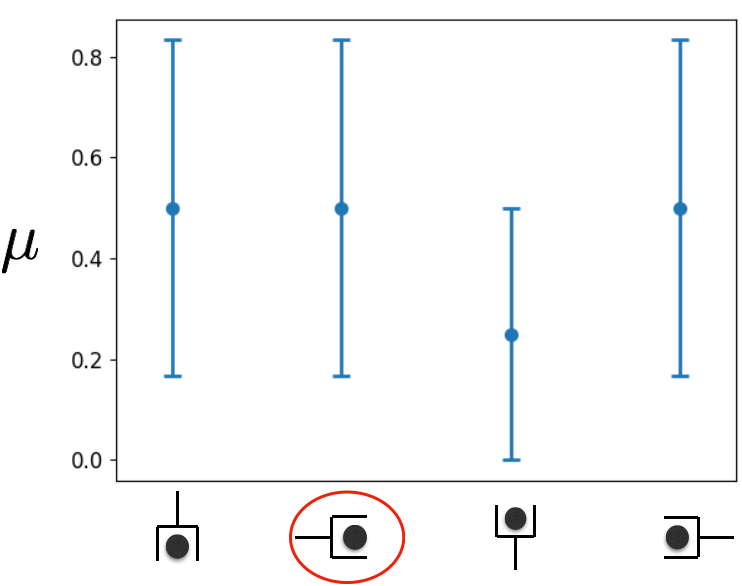}
\includegraphics[scale=0.2]{./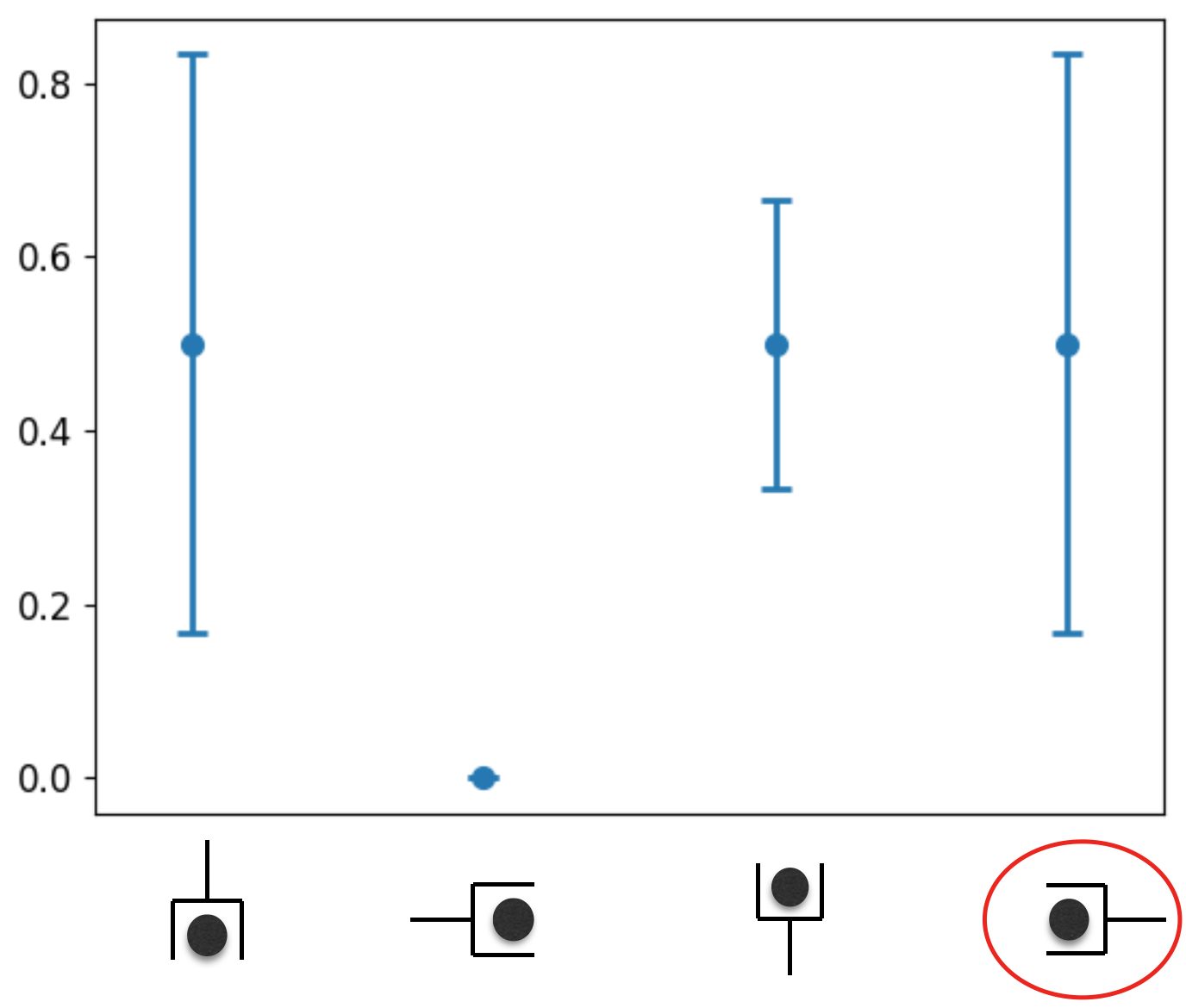}
\includegraphics[scale=0.2]{./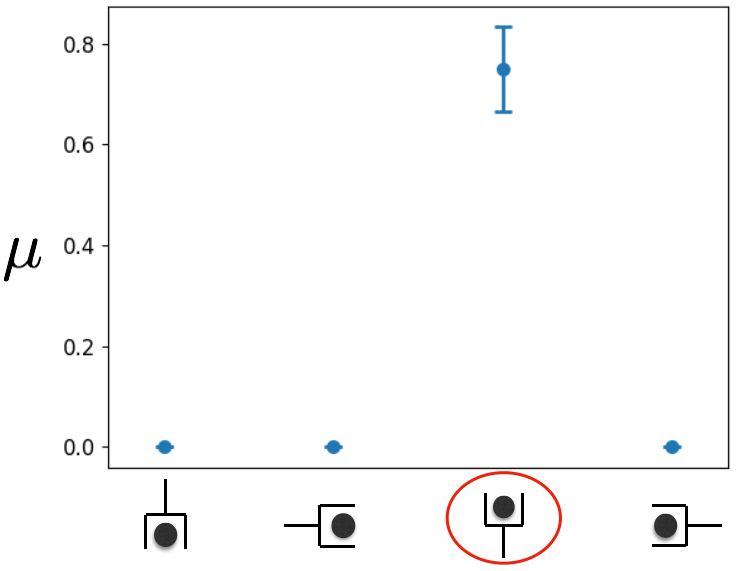}
\captionsetup{justification=centering,margin=2cm}
\caption{Evolution of $\mu$ and UCBs for the first problem instance}
\label{fig:evolution_1}
\end{subfigure} \\
\begin{subfigure}[b]{1\textwidth}	
\centering
\includegraphics[scale=0.2]{./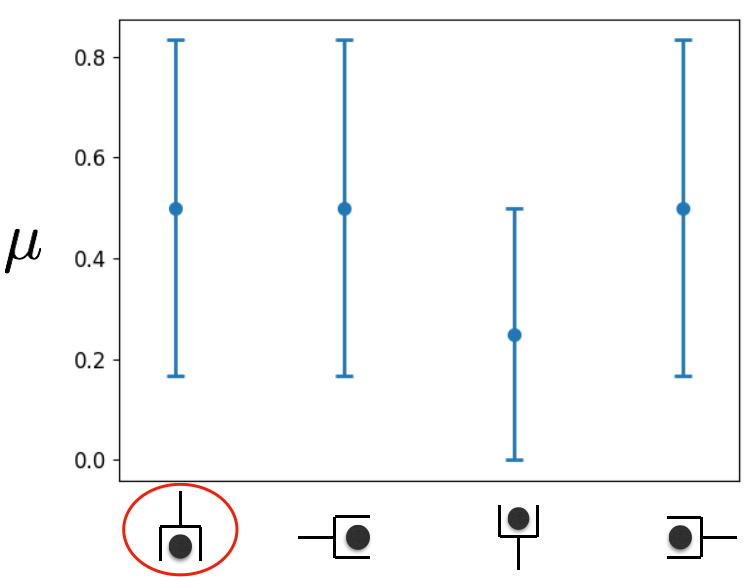}
\includegraphics[scale=0.2]{./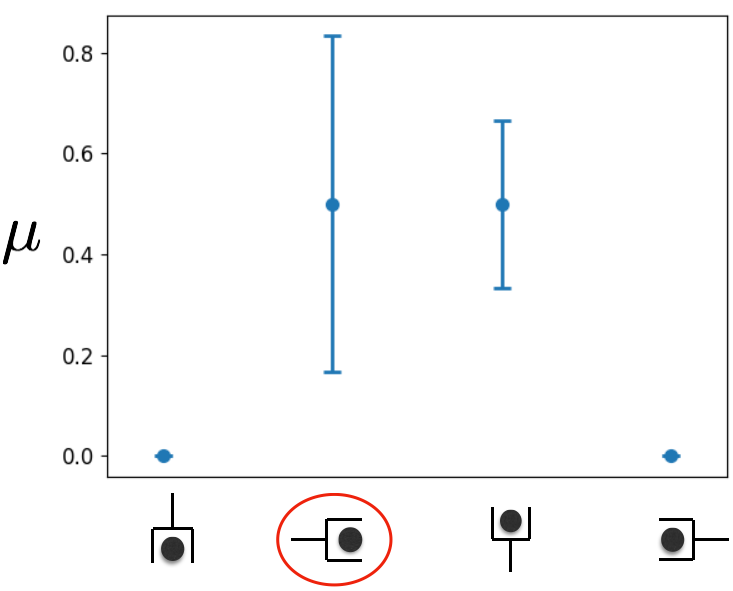}
\captionsetup{justification=centering,margin=2cm}
\caption{Evolution of $\mu$ and UCBs for the second problem instance}
\label{fig:evolution_2}
\end{subfigure}
\captionsetup{justification=centering,margin=0.5cm}
\caption{Illustration of how UCBs change as \boxalg~uses constraints in two
different problem instances}
\end{figure*}

\section{4. Theoretical analysis of \boxalg}
\label{sec:bound}
In this section, we analyze how the difference between the score of the
best constraint in our constraint set $\apprxsolconspace$ and 
the score of the best evaluated constraint changes over the
iterations of~\boxalg~for a given problem instance $\prob$. 
We first describe our notation and 
assumptions.

Since we focus on analyzing \boxalg~for a \emph{new} $\prob$, 
with slight abuse of notation we denote the scores of all the
 constraints on problem instance $\prob$ as 
$J= [J(\prob, \theta_i)]_{i=1}^m \in \R^m$. 
We assume that $J\sim \mathcal N(\hat\mu, \hat\Sigma)$. In other words, this assumption means that
 there exist a multi-variate Gaussian parameterized by $\hat \mu$ and 
$\hat\Sigma$ such that $J$ is a sample drawn 
from $\mathcal N(\hat\mu, \hat\Sigma)$. 
The Gaussian parameters $\hat \mu$ and $\hat \Sigma$ 
are defined in in Eq.~\eqref{eqn:mu} and 
Eq.~\eqref{eqn:sigma}. 
We use the shorthand $\hat\Sigma_{A}$ to denote the submatrix
$\hat\Sigma_{A,A}$ for any subset of constraints $A\subset \Theta$. 

We are also going to assume that the diagonal terms of $\hat\Sigma$ are  
bounded, meaning that there exists a constant $c>0$ such 
that $\hat\Sigma_{\theta} < c, \forall \theta\in \Theta$. 
This assumption limits the variance of the scores of 
each constraint to be finite. It is a valid assumption 
in practice because the scores themselves are typically 
bounded both above and below.

We use \emph{regret} as the performance measure for~\boxalg, as typically 
done in the Bayesian optimization literature. The \emph{regret} of a 
black-box function maximization algorithm is defined as the difference between the 
score of the best constraint selected 
within the given budget $k$ and the optimal score $J_{\optsolcon}$ evaluated at the best constraint $\optsolcon$ in the pre-built 
finite constraint set  $\Theta$; that is
$$r_k = J_{\optsolcon} - \max_{t\in[k]} J^{(t)},$$
where $[k] = \{1,2,\cdots,k\}$ for any positive integer $k\leq m$.

\subsubsection{Theoretical results}
\label{ssec:thm}

We first state our main theorem, and then explain its implications and the proof strategy. 
\begin{thm} \label{thm:regret}
Pick $\delta\in(0,1)$ and $\sigma > 0$ such that $\hat \Sigma - \sigma^2\mI$
 is positive semi-definite. Assume that for $c>0,\ \hat{\Sigma}_\theta < c,\ \forall \solcon \in \apprxsolconspace.$ 
Then with probability at least $1-\delta$, 
the regret of~\boxalg~with $\zeta = (2\log \frac{1}{\delta})^\frac12$ satisfies
$$ r_k  \leq   2 \sqrt{2\log(\frac{1}{\delta}) \left(\frac{2(c-\sigma^2)\rho_k}{k\log(c \sigma^{-2})}  + \sigma^2\right)} ,$$
where $\rho_k = \max_{A\subseteq \Theta, |A|= k} \frac12\log\det(\sigma^{-2} \hat\Sigma_{A} )$.\end{thm}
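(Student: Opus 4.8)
The plan is to recast \boxalg\ as a Gaussian-process UCB algorithm with ``virtual'' observation noise and then adapt the regret analysis of \gpucb\ from \cite{SrinivasICML10}. The hypothesis that $\hat\Sigma-\sigma^2\mI$ is positive semi-definite is exactly what makes this possible: writing $\hat\Sigma = K+\sigma^2\mI$ with $K\succeq 0$, the fixed draw $J\sim\mathcal N(\hat\mu,\hat\Sigma)$ can be viewed as $J = g+\eta$ with independent $g\sim\mathcal N(\hat\mu,K)$ and $\eta\sim\mathcal N(0,\sigma^2\mI)$. Because \boxalg\ never evaluates a constraint twice, revealing the exact score $J^{(t)}=J_{\theta^{(t)}}$ is the same as revealing $g_{\theta^{(t)}}$ through a fresh noise term of variance $\sigma^2$, so the Gaussian conditioning in Eq.~\eqref{eqn:update} coincides with GP regression on the $g$-part plus the noise term on queried coordinates. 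In particular, for every untried $\theta$ at round $t$ we get $\hat\Sigma^{(t-1)}_{\theta\theta}=s^2_{t-1}(\theta)+\sigma^2$, where $s^2_{t-1}$ denotes the posterior variance of the GP with kernel $K$ and noise $\sigma^2$ after the first $t-1$ observations, and $s^2_{t-1}(\theta)\le K_{\theta\theta}=\hat\Sigma_{\theta\theta}-\sigma^2<c-\sigma^2$.

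With this in place I would run the standard UCB argument. Conditioned on the history $J^{1:t-1}$, the values $J_{\optsolcon}$ and $J^{(t)}=J_{\theta^{(t)}}$ (note that $\theta^{(t)}$ is a deterministic function of that history) are Gaussian with the mean and variance given by Eq.~\eqref{eqn:update}, so Gaussian tail bounds together with a union bound over the $k$ rounds give, with probability at least $1-\delta$ and $\zeta=(2\log\tfrac1\delta)^{1/2}$, that $J_{\optsolcon}\le\hat\mu^{(t-1)}_{\optsolcon}+\zeta\sqrt{\hat\Sigma^{(t-1)}_{\optsolcon\optsolcon}}$ and $|J^{(t)}-\hat\mu^{(t-1)}_{\theta^{(t)}}|\le\zeta\sqrt{\hat\Sigma^{(t-1)}_{\theta^{(t)}\theta^{(t)}}}$ for all $t\in[k]$. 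The selection rule in line 3 of Algorithm~\ref{alg:BOX} makes the UCB of $\theta^{(t)}$ at least that of $\optsolcon$, so chaining these three facts bounds the instantaneous regret by $J_{\optsolcon}-J^{(t)}\le 2\zeta\sqrt{\hat\Sigma^{(t-1)}_{\theta^{(t)}\theta^{(t)}}}$. Since $r_k=\min_{t\in[k]}(J_{\optsolcon}-J^{(t)})\le\frac1k\sum_{t=1}^k(J_{\optsolcon}-J^{(t)})$, Cauchy--Schwarz yields $r_k\le 2\zeta\sqrt{\frac1k\sum_{t=1}^k\hat\Sigma^{(t-1)}_{\theta^{(t)}\theta^{(t)}}}$.

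It remains to control $\sum_{t=1}^k\hat\Sigma^{(t-1)}_{\theta^{(t)}\theta^{(t)}}=\sum_{t=1}^k s^2_{t-1}(\theta^{(t)})+k\sigma^2$ by the information-gain quantity $\rho_k$. Using the elementary inequality $u/\log(1+u)\le\kappa/\log(1+\kappa)$ valid for $0\le u\le\kappa$, applied with $u=\sigma^{-2}s^2_{t-1}(\theta^{(t)})$ and $\kappa=\sigma^{-2}(c-\sigma^2)$ (so that $1+\kappa=c\sigma^{-2}$), gives $s^2_{t-1}(\theta^{(t)})\le\frac{c-\sigma^2}{\log(c\sigma^{-2})}\log(1+\sigma^{-2}s^2_{t-1}(\theta^{(t)}))$; summing over $t$ and invoking the classical telescoping identity $\sum_{t=1}^k\frac12\log(1+\sigma^{-2}s^2_{t-1}(\theta^{(t)}))=\frac12\log\det(\mI+\sigma^{-2}K_{\Theta_k})=\frac12\log\det(\sigma^{-2}\hat\Sigma_{\Theta_k})\le\rho_k$ yields $\sum_{t=1}^k s^2_{t-1}(\theta^{(t)})\le\frac{2(c-\sigma^2)\rho_k}{\log(c\sigma^{-2})}$. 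Substituting $\sum_{t=1}^k\hat\Sigma^{(t-1)}_{\theta^{(t)}\theta^{(t)}}\le\frac{2(c-\sigma^2)\rho_k}{\log(c\sigma^{-2})}+k\sigma^2$ into the regret bound and plugging in $\zeta$ gives the stated inequality.

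The main obstacle is the reduction in the first paragraph: making precise that the noiseless Gaussian conditioning \boxalg\ actually carries out behaves, for the purposes of the regret analysis, exactly like a noisy GP-regression problem, which is what lets the maximum-information-gain machinery apply and what produces the extra additive $\sigma^2$ underneath the square root. A secondary subtlety is the confidence-bound step, where $\theta^{(t)}$ is data-dependent, so one must condition on the filtration round by round before applying the union bound; this is what pins down the value of $\zeta$ (a fully careful accounting would carry a mild logarithmic-in-$k$ factor here).
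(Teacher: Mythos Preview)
Your proposal is correct and follows essentially the same route as the paper: the data-augmentation/virtual-noise reduction $\hat\Sigma=K+\sigma^2\mI$ is exactly the paper's Lemma~\ref{lem:con} and Lemma~\ref{lem:rho}, the UCB chaining step is identical, and the log-inequality you use to pass to $\rho_k$ is the paper's Corollary~\ref{cor:math} in disguise. The one mechanical difference is that the paper, instead of averaging the instantaneous regrets and applying Cauchy--Schwarz, picks the single round $\tau=\argmin_{t}\hat\Sigma^{(t-1)}_{\theta^{(t)}}$, bounds $r_k\le 2\zeta\sqrt{\hat\Sigma^{(\tau-1)}_{\theta^{(\tau)}}}$ directly, and then uses $\min\le\text{average}$ under the square root; this lets them invoke the Gaussian tail bound at a single time step rather than at all $k$, which is how they arrive at $\zeta=(2\log\tfrac1\delta)^{1/2}$ without the logarithmic-in-$k$ factor you (rightly) flag as a subtlety of the union-bound route.
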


On a high level, Theorem~\ref{thm:regret} shows that \boxalg can achieve almost zero regret with high probability, under some mild assumptions on the score vector $J$ and its distribution. As long as the score vector $J$ is a sample from $\mathcal N(\hat \mu, \hat\Sigma)$ and $\hat \Sigma$ has  decaying spectrum, \boxalg is guaranteed to converge to a constraint whose score is almost the same as the best possible score. We explain in more details below. 

In the proof of this theorem, we use a data augmentation trick~\citep{van2001art} 
that adds an auxiliary random variable $f\in \R^{m}$ to the graphical model of $J$. 
Fig.~\ref{fig1} illustrates our problem setup where the generative model is 
simply $J\sim \mathcal N(\hat\mu, \hat\Sigma)$. Fig.~\ref{fig2} illustrates 
the new augmented graphical model and the relation between $f$ and $J$, 
where the new generative model is,
\begin{enumerate}
\item Draw $f\sim \mathcal N(\hat\mu, \hat\Sigma - \sigma^2\mI)$;
\item Draw $J\sim \mathcal N(f, \sigma^2\mI)$.
\end{enumerate}
Notice that we can integrate out $f$ in Fig.~\ref{fig2} and arrive at the same distribution of $J$ as Fig.~\ref{fig1}, $J\sim \mathcal N(\hat\mu, \hat\Sigma)$.

\begin{figure}[h]
\centering
    \begin{minipage}{.4\textwidth}
\centering
\begin{tikzpicture}
\tikzstyle{main}=[circle, minimum size = 6mm, thick, draw =black!80, node distance = 8mm]
\tikzstyle{para}=[circle, minimum size = 5pt, inner sep=0pt]
\tikzstyle{connect}=[-latex, thick]
\tikzstyle{box}=[rectangle, draw=black!100]

  \node[para, fill = black!100] (alpha) [label=below:${\hat\mu,\hat\Sigma}$] { };
  \node[main, fill = black!10] (theta) [right=of alpha,label=below:$J$] { };

  \path (alpha) edge [connect] (theta);
\end{tikzpicture}
\caption{The graphical model of $J$  with Gaussian parameters $\hat\mu$ and $\hat\Sigma$.}
\label{fig1}
\end{minipage}
\hspace{1em}
\begin{minipage}{0.4\textwidth}
\centering
\begin{tikzpicture}
\tikzstyle{main}=[circle, minimum size = 6mm, thick, draw =black!80, node distance = 8mm]
\tikzstyle{para}=[circle, minimum size = 5pt, inner sep=0pt]
\tikzstyle{connect}=[-latex, thick]
\tikzstyle{box}=[rectangle, draw=black!100]

  \node[para, fill = black!100] (alpha) [label=below:${\hat\mu,\hat\Sigma-\sigma^2\mI}$] { };
  \node[main] (theta) [right=of alpha,label=below:$f$] { };
  \node[main, fill = black!10] (f) [right=of theta,label=below:$J$] { };
    \node[para, fill = black!100] (sigma) [right=of f, label=below:${\sigma^2}$] { };
  \path (alpha) edge [connect] (theta)
  (theta) edge [connect] (f)
    (sigma) edge [connect] (f);
\end{tikzpicture}
\caption{The augmented graphical model of $f$ and $J$ with Gaussian parameters $\hat\mu$, $\hat\Sigma$ and $\sigma$. The free parameter $\sigma>0$ is chosen such that $\hat \Sigma - \sigma^2\mI$
 is positive semi-definite.}
\label{fig2}
\end{minipage}
\end{figure}

Then, $\rho_k$ can be interpreted as the maximum mutual 
information gain of observations of size 
$k$~\citep{SrinivasICML10}: $$\rho_k =  \max_{A\subseteq \Theta, |A|= k} I(J_{A}; f_{A}).$$ We make the relation clear in Lemma~\ref{lem:rho}. In particular, the mutual information gain is closely related to the predictive variances of the observations.
\begin{lem}
\label{lem:rho}
Let $A \subseteq \Theta$, $|A|=k$ and suppose $\hS-\sigma^2\mI$
 is positive semi-definite. Then the mutual information between 
the augmented variable $f_A$ and the observation $J_A$ satisfy
$$I(f_A;J_A) = \frac12 \log\det(\sigma^{-2} \hat\Sigma_{A}),$$
and the information gain of the observations selected by \boxalg~satisfies
$$I(f_{\Theta_k};J_{\Theta_k}) = \frac12 \sum_{t=1}^k\log(\sigma^{-2} \hat\Sigma^{(t-1)}_{\theta^{(t)}}).$$
\end{lem}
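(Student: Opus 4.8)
The plan is to treat the two claimed identities separately, both following from the standard formula for the entropy of a multivariate Gaussian and the structure of the augmented model in Fig.~\ref{fig2}. For the first identity, I would start from the definition of mutual information $I(f_A;J_A) = H(J_A) - H(J_A \mid f_A)$. Under the augmented generative model, marginally $J_A \sim \mathcal N(\hat\mu_A, \hat\Sigma_A)$, so $H(J_A) = \frac12\log\det(2\pi e\,\hat\Sigma_A)$; and conditionally $J_A \mid f_A \sim \mathcal N(f_A, \sigma^2\mI)$, so $H(J_A\mid f_A) = \frac12\log\det(2\pi e\,\sigma^2\mI)$, which is constant in $f_A$ (a key point: Gaussian conditional covariance does not depend on the conditioning value). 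Subtracting, the $2\pi e$ factors cancel and $\log\det\hat\Sigma_A - \log\det(\sigma^2\mI) = \log\det(\sigma^{-2}\hat\Sigma_A)$, giving the claimed $\frac12\log\det(\sigma^{-2}\hat\Sigma_A)$. The positive semi-definiteness of $\hat\Sigma - \sigma^2\mI$ is what guarantees the augmented model is well-defined (the covariance $\hat\Sigma - \sigma^2\mI$ of $f$ is a valid PSD matrix), and it also ensures $\sigma^{-2}\hat\Sigma_A \succeq \mI$ so the log-determinant is nonnegative.

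For the second identity, the idea is to expand the joint information gain for the specific sequence $\Theta_k = [\theta^{(1)},\dots,\theta^{(k)}]$ chosen by \boxalg~using the chain rule for mutual information:
$$I(f_{\Theta_k};J_{\Theta_k}) = \sum_{t=1}^k I(f_{\Theta_k}; J^{(t)} \mid J^{1:t-1}).$$
Because $J^{(t)} = f_{\theta^{(t)}} + \varepsilon_t$ with $\varepsilon_t \sim \mathcal N(0,\sigma^2)$ independent of everything else, conditioning on $J^{1:t-1}$ the only relevant part of $f_{\Theta_k}$ for $J^{(t)}$ is $f_{\theta^{(t)}}$, so each term reduces to $I(f_{\theta^{(t)}}; J^{(t)}\mid J^{1:t-1}) = H(J^{(t)}\mid J^{1:t-1}) - H(J^{(t)}\mid f_{\theta^{(t)}})$. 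The second entropy is again the constant $\frac12\log(2\pi e\,\sigma^2)$. For the first, the conditional distribution of $J^{(t)}$ given the past observations $J^{1:t-1}$ is Gaussian with variance equal to the posterior predictive variance of $f_{\theta^{(t)}}$ plus $\sigma^2$; I would identify this quantity with $\hat\Sigma^{(t-1)}_{\theta^{(t)}}$ using the posterior update formula in Eq.~\eqref{eqn:update} applied to the augmented model (noting that the posterior covariance of $J$ restricted to an untried coordinate, given the tried coordinates, is exactly what $\hat\Sigma^{(t-1)}$ tracks in Algorithm~\ref{alg:BOX}). Then $H(J^{(t)}\mid J^{1:t-1}) = \frac12\log(2\pi e\,\hat\Sigma^{(t-1)}_{\theta^{(t)}})$, the $2\pi e$ cancels with the subtracted term, and summing gives $\frac12\sum_{t=1}^k \log(\sigma^{-2}\hat\Sigma^{(t-1)}_{\theta^{(t)}})$.

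The main obstacle I anticipate is the careful bookkeeping in the second identity: making precise that the $\hat\Sigma^{(t-1)}_{\theta^{(t)}}$ appearing in the \boxalg~pseudocode—which is the posterior variance in the \emph{original} model $J\sim\mathcal N(\hat\mu,\hat\Sigma)$ after observing $J^{1:t-1}$—coincides with the conditional variance of the observation $J^{(t)}$ in the \emph{augmented} model. This requires checking that marginalizing out $f$ commutes with the Gaussian conditioning, i.e. that the predictive distribution of $J^{(t)}\mid J^{1:t-1}$ is the same whether computed directly in Fig.~\ref{fig1} or through the augmented chain in Fig.~\ref{fig2}; this is exactly the consistency already noted after Fig.~\ref{fig2} that integrating out $f$ recovers $J\sim\mathcal N(\hat\mu,\hat\Sigma)$, so conditionals match too. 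Once that correspondence is pinned down, everything else is the routine Gaussian entropy computation, and I would also remark that the sequential formula combined with the first identity is what lets one bound $I(f_{\Theta_k};J_{\Theta_k}) \le \rho_k$, which is the form needed downstream in the proof of Theorem~\ref{thm:regret}.
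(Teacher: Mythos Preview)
Your proposal is correct and follows essentially the same approach as the paper: both identities are obtained from $I=H(J)-H(J\mid f)$ together with the closed-form Gaussian entropy, and for the second identity both you and the paper reduce the joint information to a telescoping sum of one-step terms $H(J^{(t)}\mid J^{1:t-1})-\tfrac12\log(2\pi e\,\sigma^2)$. The only cosmetic difference is that the paper applies the chain rule directly to the entropy $H(J_{\Theta_k})$ rather than to the mutual information, but the resulting computation is identical; your extra care in identifying $\hat\Sigma^{(t-1)}_{\theta^{(t)}}$ with the predictive variance in the augmented model is a point the paper leaves implicit.
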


The exact quantity of  $\rho_k$ depends on $\hat \Sigma$ 
and how fast the spectrum of $\hat \Sigma - \sigma^2\mI$ decays. For example, if the 
off-diagonal terms in $\hat\Sigma$ are all 0, there is no 
correlation between constraints, in which case 
$\rho_k=O(k)$. Theorem 1  shows that the regret is bounded by $O(1)$ which
does not decrease as $k$ increases. This is because the scores of all 
the constraints are independent, so that when some 
constraints are evaluated, there is no information gained about 
the scores of other constraints. For robotic domains that we consider,
 constraints are usually correlated: a constraint on
the approach vector of a grasp for an object, for example, would have
similar scores across different problem instances if the directions are
similar.

Suppose that our original constraint space has dimension $d$; that means, for any constraint $\theta\in \Theta$, it holds that $\theta \in \mathbb{R}^d$ and $\Theta \in \R^{m\times d}$. If the covariance matrix satisfies 
$\hat\Sigma\propto \Theta \Theta\T + \sigma^2\mI$,  we have $\rho_k=O(d\log(k))$ by~\cite[Theorem 5]{SrinivasICML10}. That means our regret bound yields
$$r_k \lesssim O\Big(\sqrt{\frac{d(\log k)^2}{k}}\Big)$$ 
which decreases toward 0 as $k$ increases. We use ``$\lesssim$'', approximately less than,  to reflect that the bound on $r_k$ also depends on a free parameter $\sigma>0$, which can be very small but is always non-zero.

The proof of Theorem~\ref{thm:regret} depends on the proof 
of~\cite[Theorem 3.1]{SrinivasICML10}, which shows that the 
cumulative regret of GP-UCB with noisy observations 
increases sublinearly in the number of evaluations.~\cite{SrinivasICML10}
assume that the diagonal terms of the covariance matrix are all  
smaller than 1 in their Theorem 3.1. However, in our problem formulation, 
the score function is noise-free. Moreover, the diagonal terms of 
our covariance matrix are not bounded by 1. 
We use the data augmentation trick~\citep{van2001art} 
that adds the auxiliary random variable $f$ to introduce 
artificial noise to our observations. More importantly, 
we adapt the artificial noise to the scale of the 
covariance matrix and the number of evaluations. 
Then, the proof strategy of Srinivas et al.  
can be reshaped to prove Theorem~\ref{thm:regret}.

The additional artificial noise is critical to the proof of Theorem~\ref{thm:regret} 
because it enabled the mutual information to bound the regret. If the artificial noise is 
0, the mutual information is not well bounded. The proofs of both Theorem~\ref{thm:regret} and 
Lemma~\ref{lem:rho} can be found in the appendix.
\label{ssec:explain}

\section{5. Constructing a minimal set}
In this section, we propose an algorithm for \boxalg~which tries to reduces
the cardinality  of $\apprxsolconspace$ while maintaining important properties,
such as the probability that the set will contain a constraint that is applicable
to a new problem instance.

We begin with the problem formulation.
We wish that, for all problem instances, there is at least one solution in our set. 
We define a constraint to be \emph{feasible} for a problem instance 
if we can find a solution that satisfies it.
We will say a constraint \emph{covers} a problem instance if it is feasible for a problem 
 instance. Given a set of constraints, $\apprxsolconspace$, we are interested in a 
minimum cardinality subset of the original constraint set that covers all the problem
 instances in the training data.
We will call such subset a \emph{minimal set}, and denote it with $\minset$.

Out of all the minimal sets, we are interested in those whose probability 
of success is maximized, so that the set can be applied to a wide range of problem instances.
 The probability of success is measured by
$$P(\minset\text{ succeeds on } \prob) = 1-\prod_{\solcon \in \minset}^{k}(1-p_\solcon) $$ 
where $k$ is the cardinality of the minimal set, and $p_\solcon$ is the probability of
 constraint $\theta_i$ in the minimal set being feasible. 
Notice that $p_\solcon$ can be approximated by empirical
counts of successes divided by the number of problem instances from our training data.
We will call a minimal set whose probability of success is maximum a
\emph{ maximally successful minimal set}.

Out of all the maximally successful minimal sets, we are interested in those 
that give maximum information on the values of the scores of other constraints,
in order to minimize the number of evaluations.
First note that the differential entropy $h$
of the multivariate Gaussian distribution $\mathcal{N}(\mu,\Sigma_{\minset})$ over a random vector
$\minset$
is defined by 
$$h(\minset) = \frac{|\minset|}{2}\Big[1+\log(2\pi)\Big] + \frac{1}{2} \log \det \Sigma_{\minset} $$
where $n$ is $|\minset|$ and $|\Sigma_{\minset}|$ is a determinant
of the covariance matrix $\Sigma_{\minset}$. Now, using this, we can define
the {\it gain function}, $g$, which characterizes the information
gain for scores of other constraints given an evaluation of a constraint $\theta_i$
\begin{align*}
g(\Sigma_{\minset},\theta_i)&= h( \minset ) - h( \minset | \theta_i) \\
&= \log( |\Sigma_{\minset}| ) - \log( |\Sigma_{\minset|\theta_i}| )
\end{align*}
where $|\Sigma_{\minset}|$ is the determinant of the covariance 
matrix of the minimal set $\minset$,
and $|\Sigma_{\minset|\theta_i}|$ is the determinant of the covariance matrix of the minimal
set after evaluating the constraint $\theta_i$ in $\minset$. We will call a maximally successful
minimal set that maximizes the gain function an \emph{optimally minimal set (OMS)} and denote it with $\minset^*$.

We now formulate the optimally minimal set construction problem as follows.
Given an original constraint set $\apprxsolconspace$, construct a minimal set
$\minset$ that maximizes the function
\begin{align*}
 c(\minset) = \sum_{\solcon \in \minset} p_\solcon + \lambda \cdot g(\Sigma_{\minset},\theta) 
\end{align*}
The first term is responsible for maximizing the probability of success for
$\minset$, and the second term is responsible for maximizing the sum of information
gain of each constraint in $\minset$. Now the optimally minimal set is defined as 
$$ \minset^* = \arg\max_{\minset \in 2^{\apprxsolconspace}} c(\minset) $$

Clearly, constructing $\minset^*$ is an NP-complete problem\footnote{We can
make a polynomial-time reduction to the minimum set-cover problem.}
This motivates us to devise a greedy approach that approximately optimizes
the function $c$, as described in Algorithm~\ref{algo:construct_minset}.

This algorithm takes as inputs: the experience matrix $\expmat$, 
the original constraint set
$\apprxsolconspace$, the approximated parameters of 
a distribution of score vectors $\hat{\mu}$ and $\hat{\Sigma}$,
the number of problem instances $n$ and the number of constraints 
of the original set $m$, and outputs
an approximation of an OMS, $L$. 

It operates by progressively constructing a list of constraints, 
$L$, using the gain function and probability
of success of a constraint, which is measured by the mean score function $\mu$.

It begins by first adding the index of the
 constraint that has the maximum mean score value. 
Then, it checks the number of problem instances covered by 
the current list of constraints $L$, 
using the helper function $nCoveredBy$. It then computes the uncovered
 problem instance indices, $U$.
The algorithm then computes the set of next candidate constraint 
to add to $L$, $\apprxsolconspace_{cand}$, by taking the 
constraint that maximally covers the currently 
uncovered indices $U$.  This maximal coverage step 
is to ensure that we are minimizing
the cardinality of $L$.

From this set, the algorithm updates $\apprxsolconspace_{cand}$ 
by considering only those constraints 
whose mean score is the maximum; it
is still a set, since there may be a tie in mean scores. From this set, the algorithm
chooses the next
constraint to add to $L$, by taking the one that has the maximum gain function value.
 We update the number of problem instances covered, and repeat until we cover
 all the problem instances.
Lastly the algorithm returns $L$, the set of constraints 
that approximates the OMS.
\begin{algorithm}[tb]
\small
\caption{ConstructOMS($\expmat,\apprxsolconspace,\hat{\mu},\hat{\Sigma},n, m$)}
\begin{algorithmic}
\STATE $\nextsolconidx = \arg\max_{\solcon \in \apprxsolconspace} \mu_\solcon$
\STATE $L = CreateList(\nextsolconidx)$ 
\STATE $C_{\max} = nCoveredBy(L,\{1,\cdots,n\}) $
\WHILE{ $C_{\max} \neq n$ }
\STATE $U = \{1,\cdots,n\})\setminus CoveredBy( L,\{1,\cdots,n\}) $ 
\STATE $\apprxsolconspace_{\text{cand}} = \{i | \max_{i\in{1,\cdots,m}} 
len(CoveredBy( \{L,i\}, U) ) \}$  
\STATE $\apprxsolconspace_{\text{cand}} = \{i|i = \arg\max_{i \in I_{\text{cand}}} \mu_i\} $  
\STATE $\nextsolconidx = \arg\max_{\solcon \in \apprxsolconspace_{\text{cand}}} 
g(\Sigma_{L},\theta)\} $ 
\STATE $L = \{L,\nextsolconidx\}$
\STATE $C_{\max} = nCoveredBy(L,\{1,\cdots,n\}) $ 
\ENDWHILE
\STATE {\bf return} $L$
\end{algorithmic}
\label{algo:construct_minset}
\end{algorithm}

\section{6. Experiments} 
\label{sec:experiments}
We demonstrate the effectiveness of score-space algorithms \staticalg~and 
\boxalg~in four robotic planning domains: grasp-selection,
grasp-and-base-selection, pick-and-place, and
conveyor-belt unloading. Each of these
domains has several decision variables and different types of
solution constraints. For all the problems, a problem instance 
varies in the sizes of the objects being manipulated, and the poses
of obstacles. 

In each of these domains, $\pi(\cdot,\solcon)$ finds values
of decision variables that are not specified in $\solcon$.
For example, for the grasp-and-base-selection domain $\pi(\cdot,\solcon)$ consists
of an IK solver and a path planner, and $\solcon$ specifies values such
as a robot base pose or a grasp for picking an object that is relevant
for achieving a goal. To implement~\rawalg, $\pi(\prob)$, we first uniformly
sample $\theta$ from their original space, such as $\mathbb{R}^2$ for
robot base pose, instead of from $\apprxsolconspace$, and then use
$\pi(\prob,\solcon)$ with the sampled constraints.

We are interested in both running time and solution quality. 
We compare score-space algorithms, \staticalg~and \boxalg, with 
\rawalg~as well as two other methods that
generate plans by selecting a subset of size $k$ of the 
solution constraints from $\apprxsolconspace$ 
and return the highest scoring one. As previously mentioned,
\staticalg~sequentially selects constraints based on their average
score values, without considering their correlation information.
{\randalg} selects $k$ of the $\theta_i$ values at random from $\hat{\Theta}$;
{\dooalg} is an adaptation of {\sc doo}~\citep{MunosNIPS11} to optimization of a 
black-box function over a discrete set, which is $\apprxsolconspace$ in 
our case.  Like \boxalg, it alternates between evaluating $\solcon_j$ and constructing upper
bounds on the unevaluated $\solcon_i$ for $k$ rounds. It assumes that the function is
Lipschitz continuous with constant $\lambda$, and uses the bound
$ J^\prob(\solcon_i) \leq J^\prob(\solcon_j) + \lambda\cdot l(\solcon_i,\solcon_j)$ for some
semi-metric $l$, $\lambda \in \mathbb{R}$. We use the Euclidean metric 
for $l$, and $\lambda=1$.

To show that score-space algorithms can work with different planners, 
we show results using two different planners: bidirectional
RRT with path smoothing implemented in OpenRAVE~\citep{openRave}, seeded with a fixed 
randomization seed value,  and Trajopt~\citep{SchulmanIJRR14}. 
In the pick-and-place and conveyor-belt unloading domains, where there is 
a narrow-passage path planning problem, Trajopt
cannot find feasible paths without being given a good initial solution, so
we omit it.

In each domain, we report the results using two plots, the first 
showing the time to find the first feasible solution and the second showing
how the solution quality improves as the algorithms are given more time.
Each data point on each plot is produced using leave-one-out
cross-validation.  That is, given a total data set of $n$ problem
instances and associated solutions, we report the average of $n$
experiments, in which $n-1$ of the instances are used as training data
and the remaining one is used as a test problem instance.

Grasp-selection, grasp-and-base-selection, and
pick-and-place problems are satisficing problems in which
we are mainly interested in finding a feasible solution. So, a binary
score function that specifies the feasibility of a given plan would be
sufficient to use~\boxalg. However, in many problems, we want to 
find a low-cost plan, rather than just a feasible one, using~\boxalg's
ability to seek optimal solutions from its library of constraints.

To do this, we design a score function that measures the trajectory 
length for a feasible plan, and that assigns a large cost 
if the plan is infeasible in the given problem instance. So, given
a plan $\pi(\prob)=(q_1,\cdots,q_l)$ where $q_i$ denotes a configuration
of the robot, our score function is
\begin{align}
J^\prob(x) = \begin{cases}
-\sum_{i=1}^{l-1} || q_{i+1} - q_{i} || \text{ if $x$ feasible in $\prob$}\\
d, \text{ otherwise}
\end{cases}\label{eq:dist_fcn}
\end{align}
where $||\cdot||$ denotes a suitable distance metric between configurations
and $d = min(\mathbf{D}) - mean(\mathbf{D})$. This is our
strategy for finding a domain dependent minimum score for failing to solve a problem.
Conveyor-belt unloading domain, on the other hand, is not a satisficing
problem: we are interested in maximizing the number of objects that a robot
packs into a tight room.  The Conveyor-belt unloading 
domain, is not a satisficing problem: we
are interested in maximizing the number of objects that the
robot packs into a tight room.
Therefore, naturally, our score function is defined
as the number of objects packed by a plan $\pi(\prob)$.

\subsection{6.1 Grasp-selection domain}
Our first problem domain is to find an arm motion 
to grasp an object that lies randomly either on a desk or a bookshelf,
 where there also are randomly placed obstacles. 
Neither the grasp of the object nor the final
configuration of the robot is specified, so the complete planning problem includes
choosing a grasp, performing inverse kinematics to find a pre-grasp
configuration for the chosen grasp, and then solving a motion planning problem
to the computed pre-grasp configuration.  

A planning problem instance for this domain is defined by an
arrangement of several objects on a table. Figure \ref{fig:grasp_domain}
shows two instances of this problem, which are also part of the
training data. There are up to 20 obstacles in each problem instance.
The robot's active degrees of freedom (DOF) are its left and right arms, 
each of which has 7 DOFs, and torso height with 1 DOF, for a total of 15 DOF. 
$\apprxsolconspace$ consists of 81 different grasps per
each arm, computed using OpenRAVE's grasp model function. $\solconspace$ 
would be all possible grasps for an object.
Notice that since our search space for solution constraints 
is discrete, \rawalg~is equivalent to \randalg.

Given a solution constraint $\solcon$, which is a grasp (pose of robot
hand with respect to the object) and an arm to pick the object with, 
it remains for $\pi(\prob,\solcon)$  to find an IK solution and motion plan,
which can be expensive, but predicting a good
grasp makes the overall process much more efficient. The
trajectory of the arm to the pre-grasp configuration, with the base
fixed, is scored according to eqn. \ref{eq:dist_fcn}, with a score of $d$
assigned to problem instances and constraints for which no
solution is found within a fixed amount of computation.

The experiments were run on a data set of 1800 problem instances.  
Figure \ref{fig:ff_arm_and_grasp} compares the
time required by each method to find the first feasible plan with
RRT as the path planner, and Figure \ref{fig:ff_arm_and_grasp_TrajOpt} compares
the time with TrajOpt as the path planner. In both of the plots,
we can observe that the score-space algorithms \staticalg~and \boxalg~outperform 
all other algorithms in terms of finding a good solution with a given
 amount of time. \boxalg~performs about three times
faster than \staticalg, showing the advantage of using the correlation information.
Compared to \dooalg~and \randalg, \boxalg~is more than nine times faster.
\dooalg~does only slightly better than \randalg, which illustrates that 
in the space of grasps, the Euclidean metric is not effective. 

Figure \ref{fig:arm_and_grasp_t} compares the solution quality 
vs time when RRT is used; figure \ref{fig:arm_and_grasp_t_TrajOpt}
compares the same quantities when TrajOpt is used.  Here, the score-space 
algorithms again outperform the other algorithms, with \boxalg~outperforming
\staticalg.
\begin{figure*}
\centering
	\begin{subfigure}[b]{0.31\textwidth}	
		\centering
		\includegraphics[width=\linewidth]{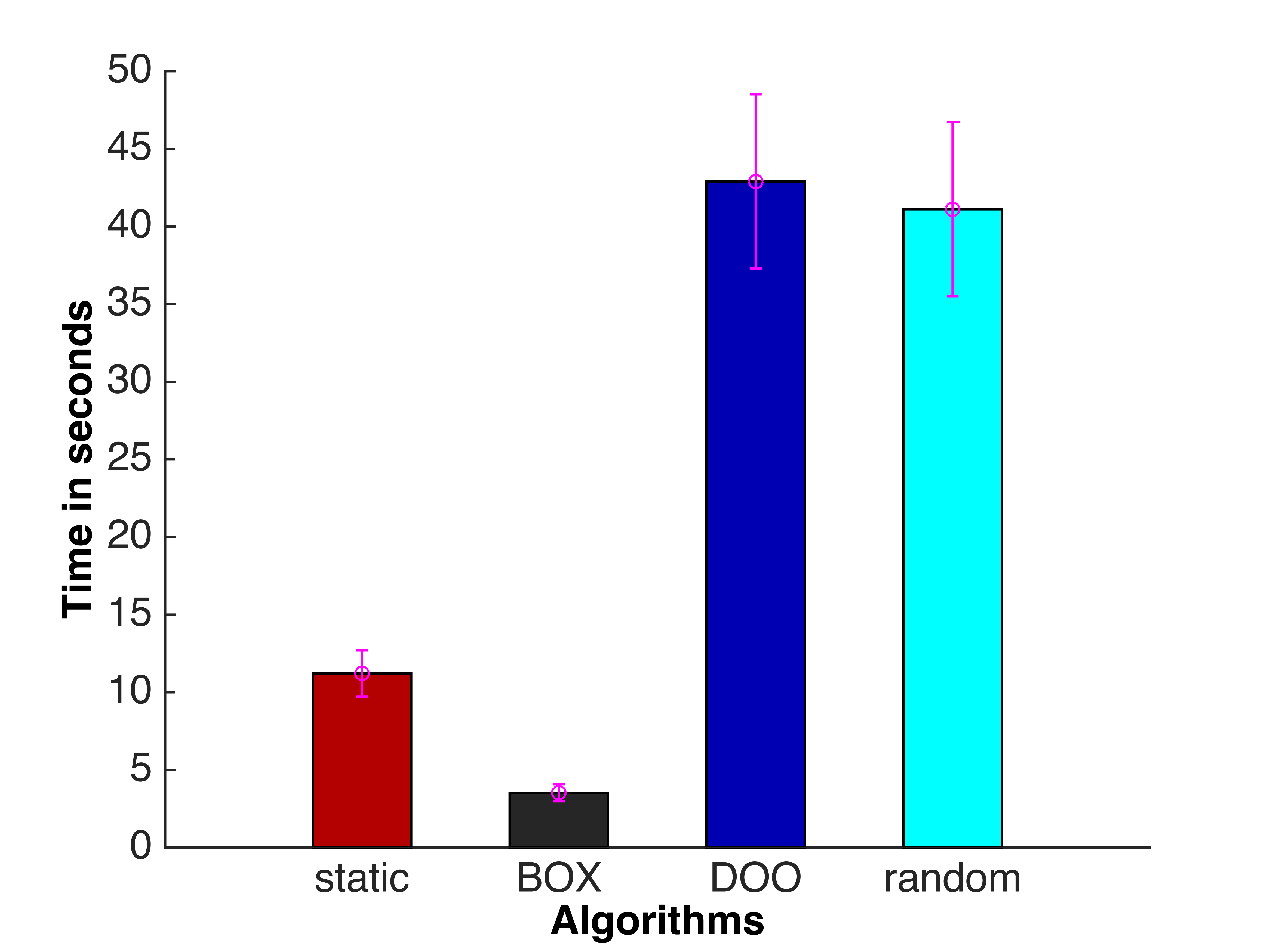}
		\caption{Grasp-selection (RRT)}
		\label{fig:ff_arm_and_grasp}
	\end{subfigure} 
	\begin{subfigure}[b]{0.31\textwidth}	
		\centering
		\includegraphics[width=\linewidth]{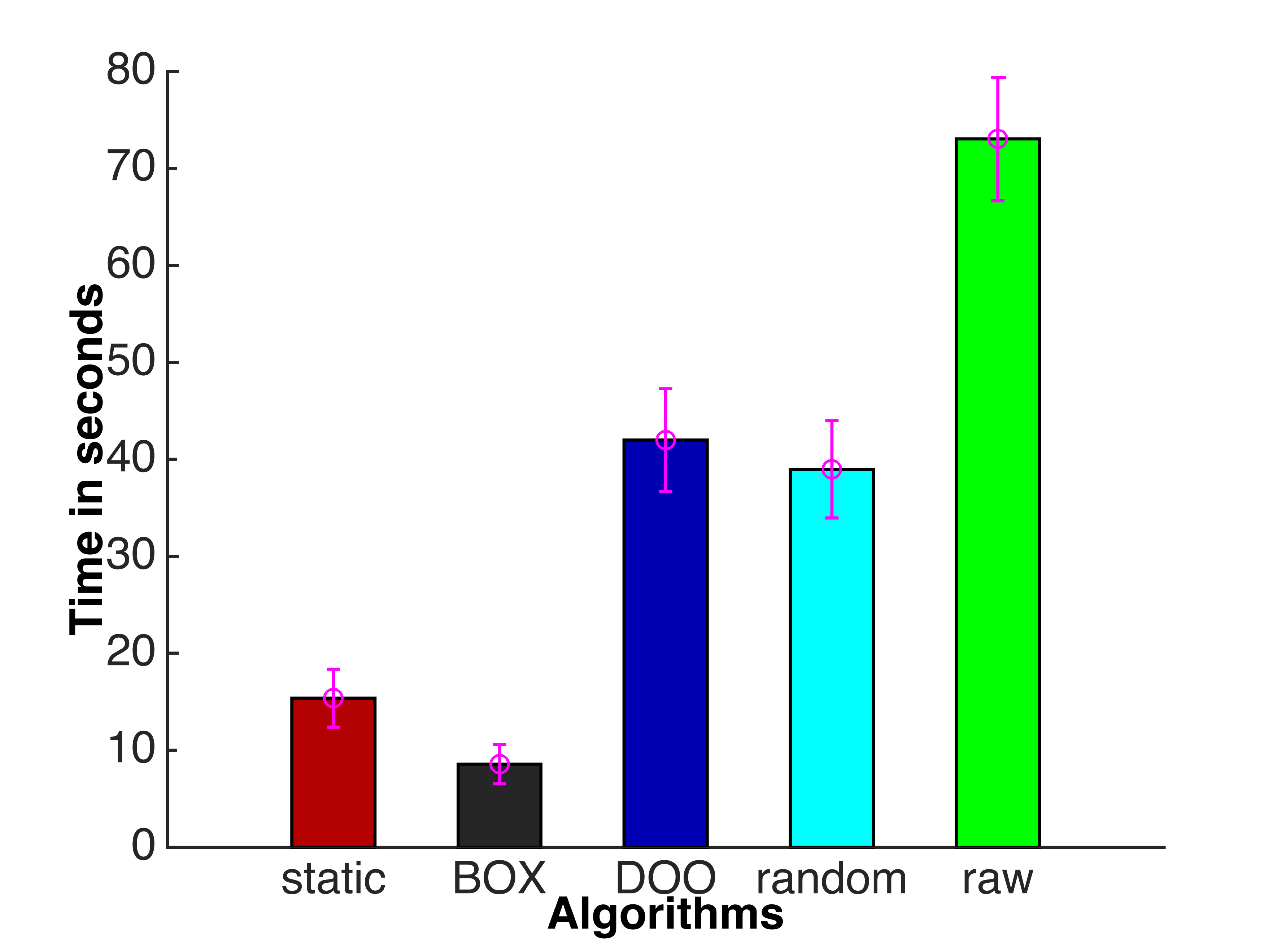}
		\caption{Grasp-and-base selection (RRT)}
		\label{fig:ff_pick_base}
	\end{subfigure} 
	\begin{subfigure}[b]{0.31\textwidth}	
		\centering
		\includegraphics[width=\linewidth]{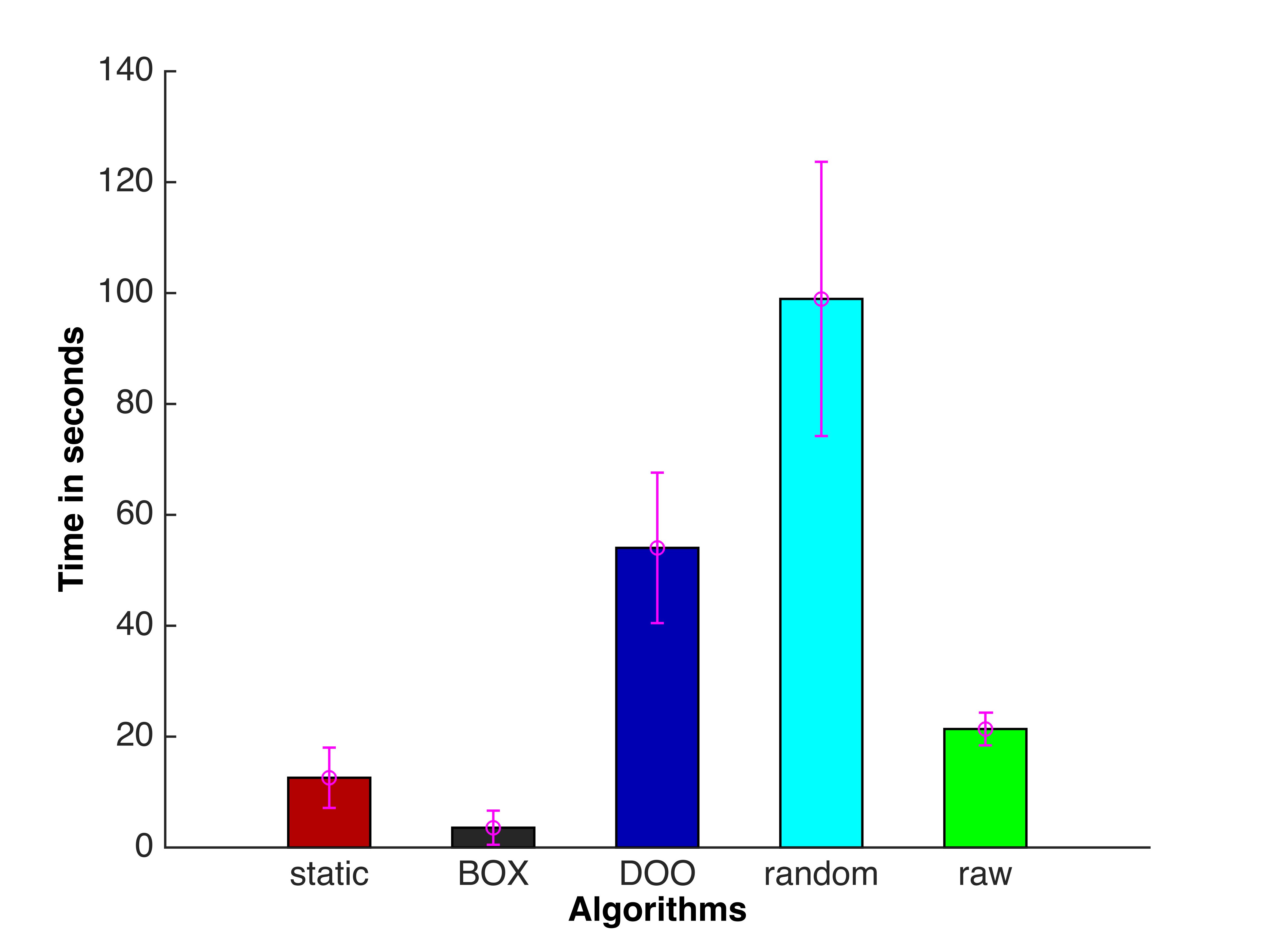}
		\caption{Pick-and-place (RRT)}
		\label{fig:ff_biggest}
	\end{subfigure} 
	\begin{subfigure}[b]{0.31\textwidth}
		\includegraphics[width=\linewidth]{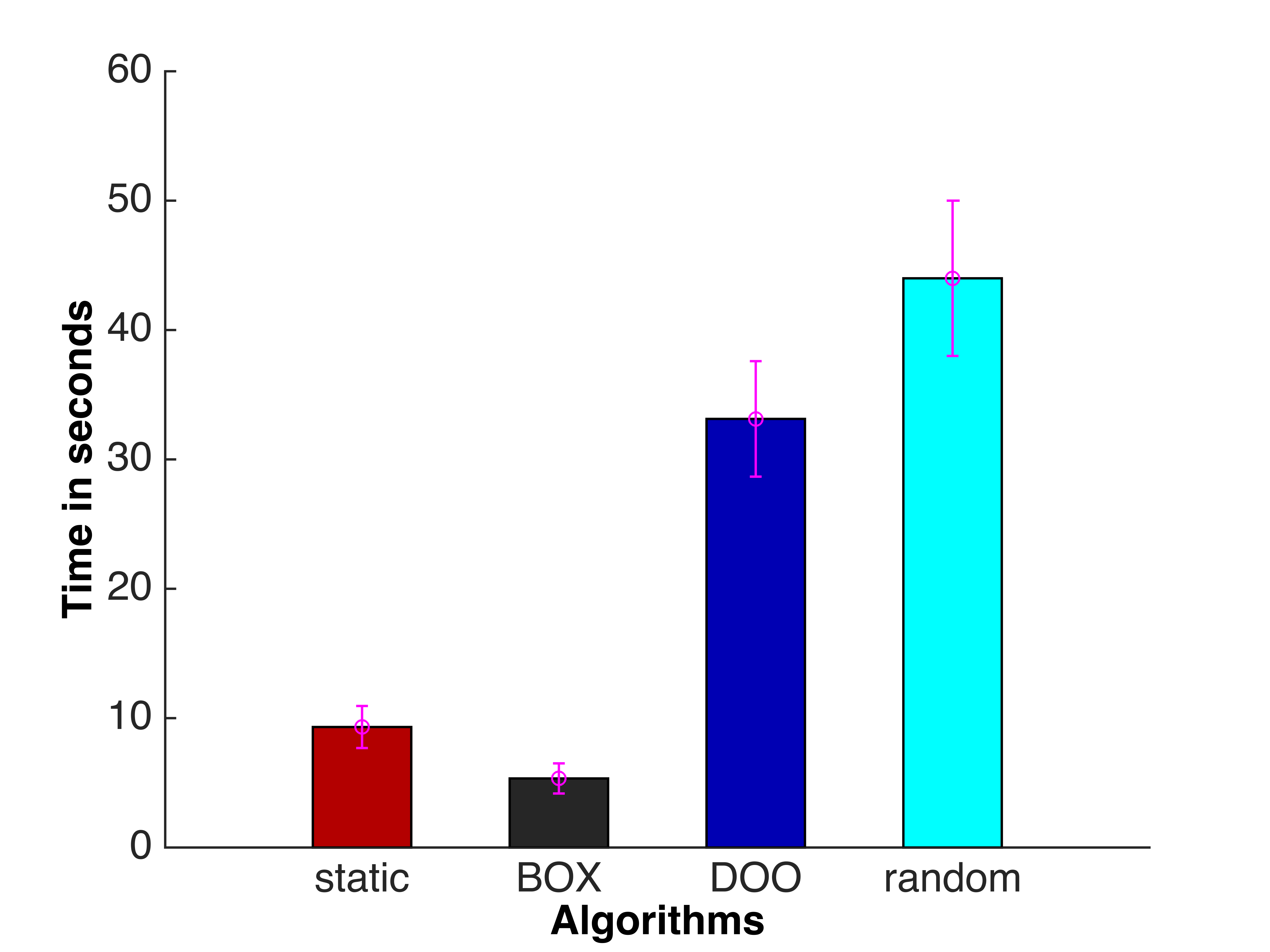}
	\caption{Grasp-selection (TrajOpt)}
	\label{fig:ff_arm_and_grasp_TrajOpt}
	\end{subfigure}
	\begin{subfigure}[b]{0.31\textwidth}
		\includegraphics[width=\linewidth]{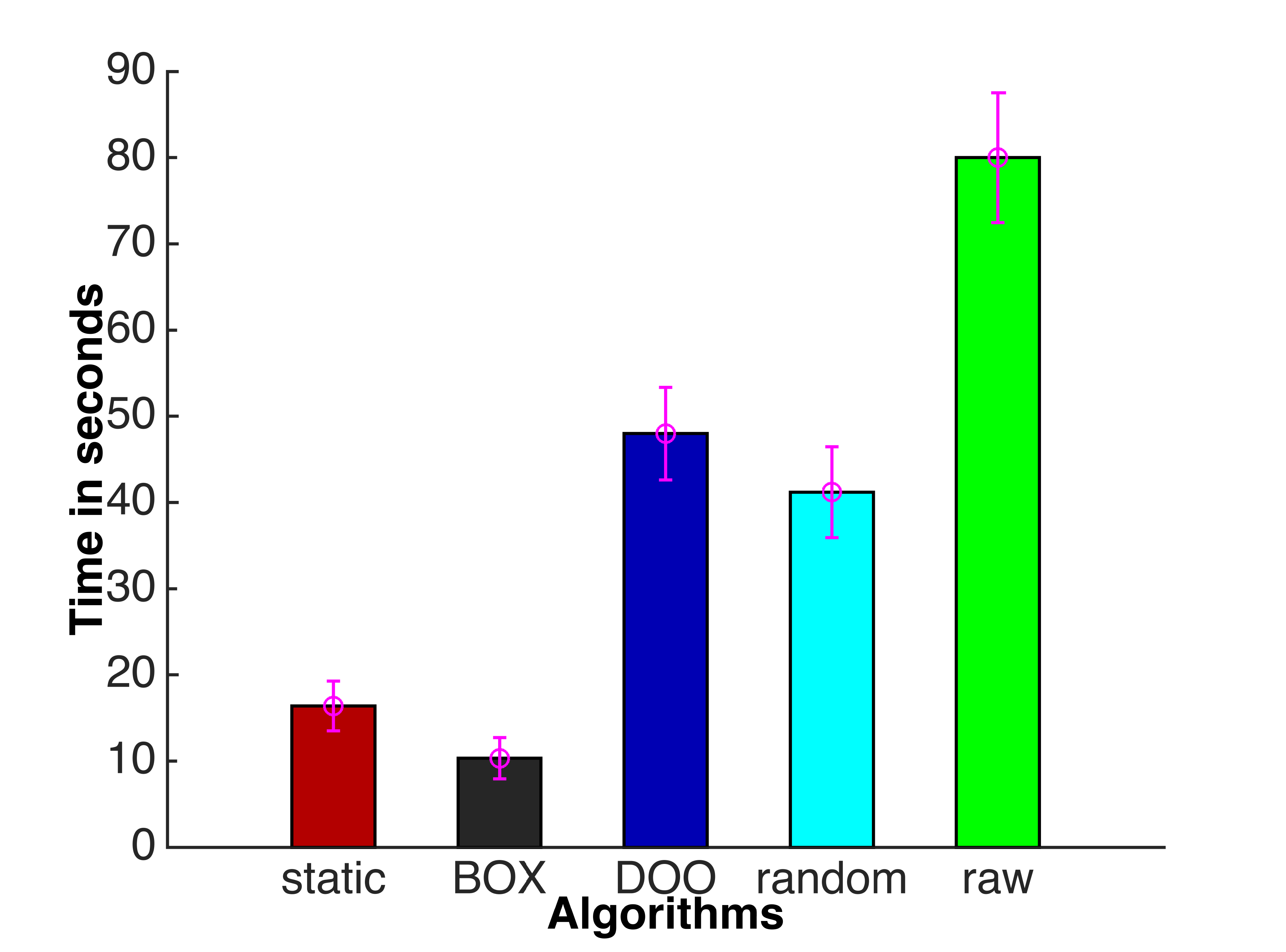}
	\caption{Grasp-and-base selection (TrajOpt)}
	\label{fig:ff_pick_base_TrajOpt}
	\end{subfigure} 
	\caption{LOOCV estimate of time to find first feasible solution, for
	  each method in different domains. Whiskers indicate 95\% confidence interval on mean.
The top row uses RRT and the bottom row uses TrajOpt}
\end{figure*} 
\begin{figure*}

\centering
	\begin{subfigure}[b]{0.31\textwidth}
	\includegraphics[width=\linewidth]{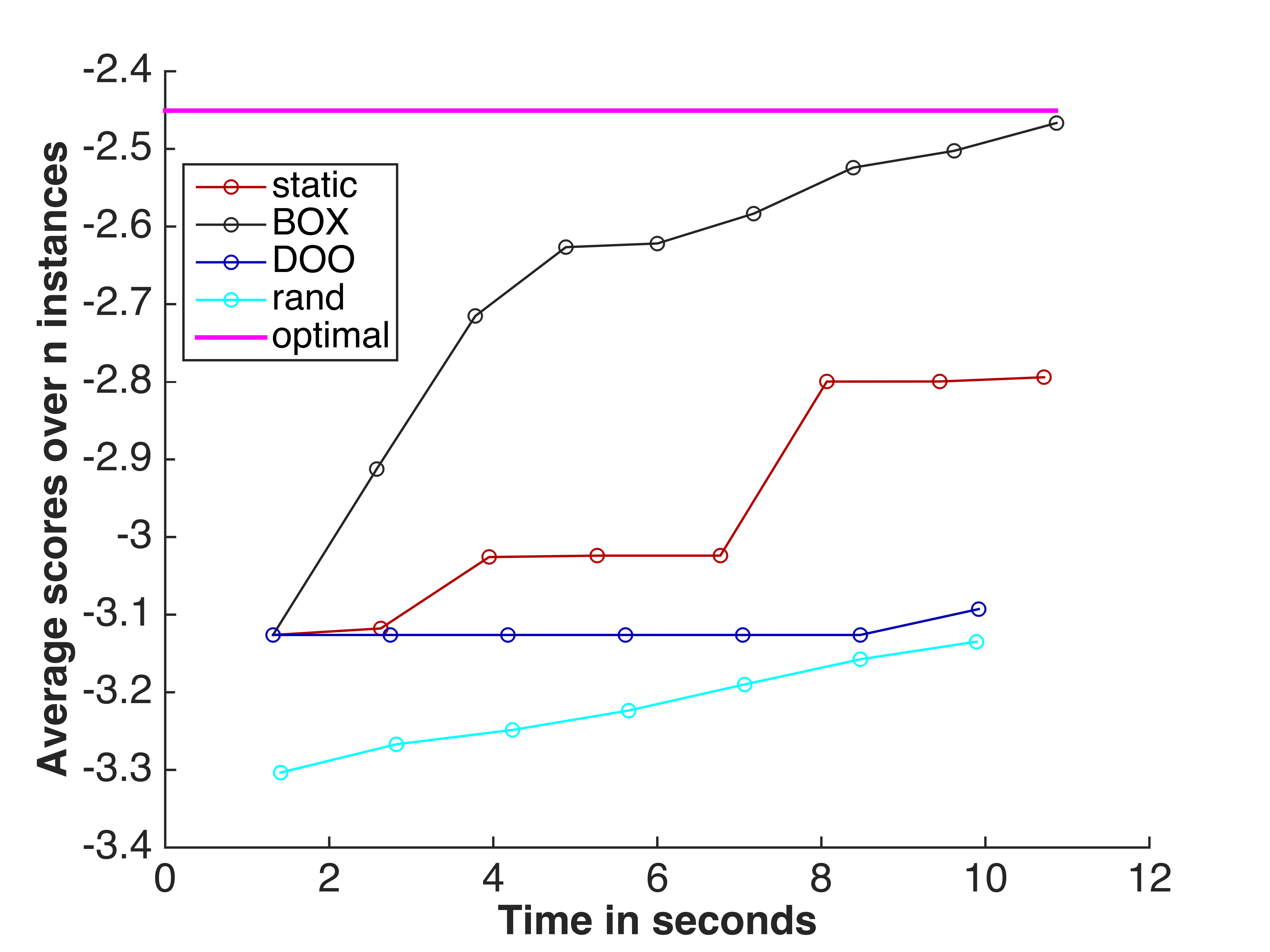}
	\caption{Grasp-selection (RRT)}
	\label{fig:arm_and_grasp_t}
	\end{subfigure}
	\begin{subfigure}[b]{0.31\textwidth}
	\includegraphics[width=\linewidth]{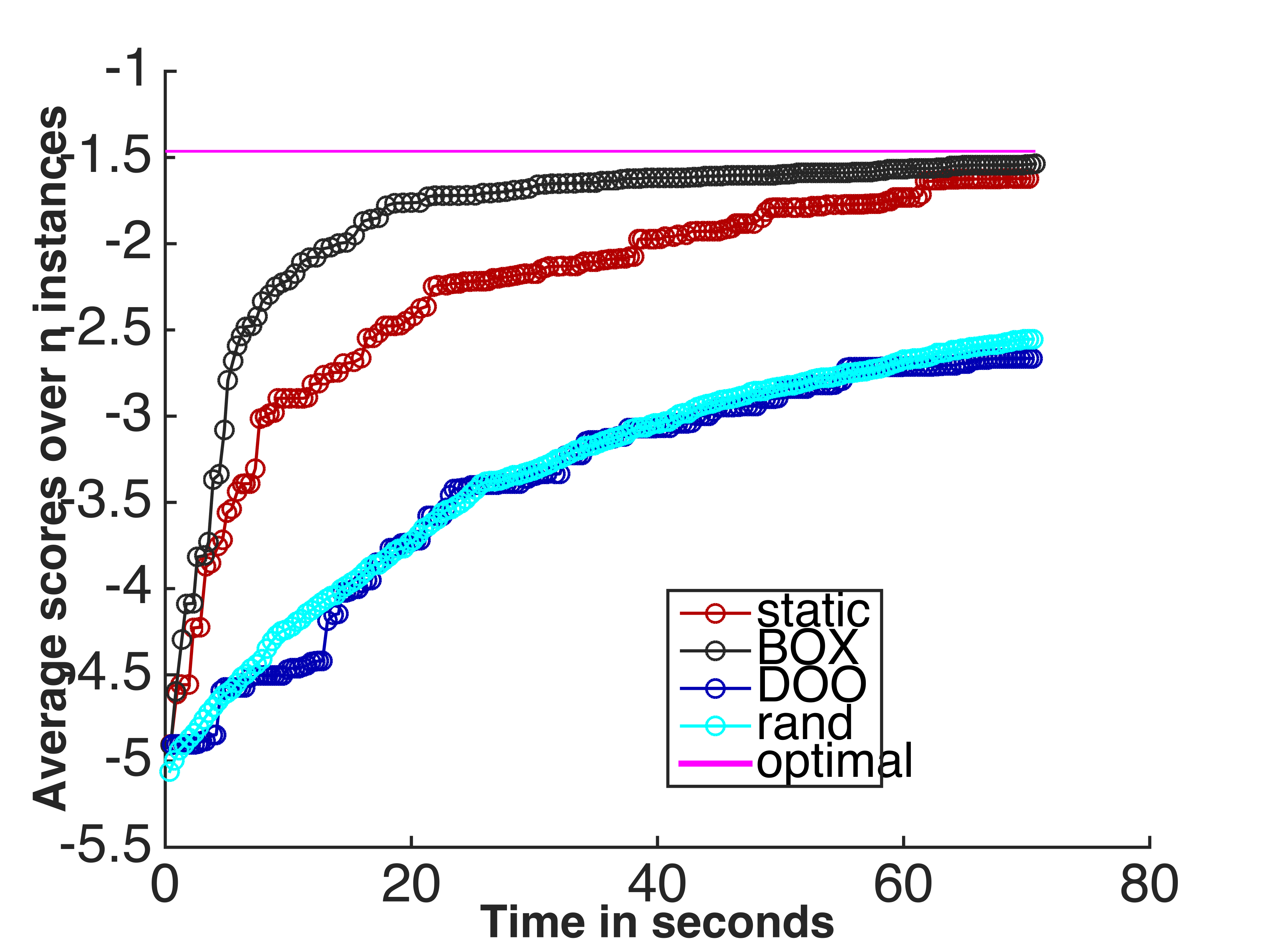}
	\caption{Grasp-and-base selection (RRT)}
	\label{fig:pick_base_t}
	\end{subfigure} 
	\begin{subfigure}[b]{0.31\textwidth}
	\includegraphics[width=\linewidth]{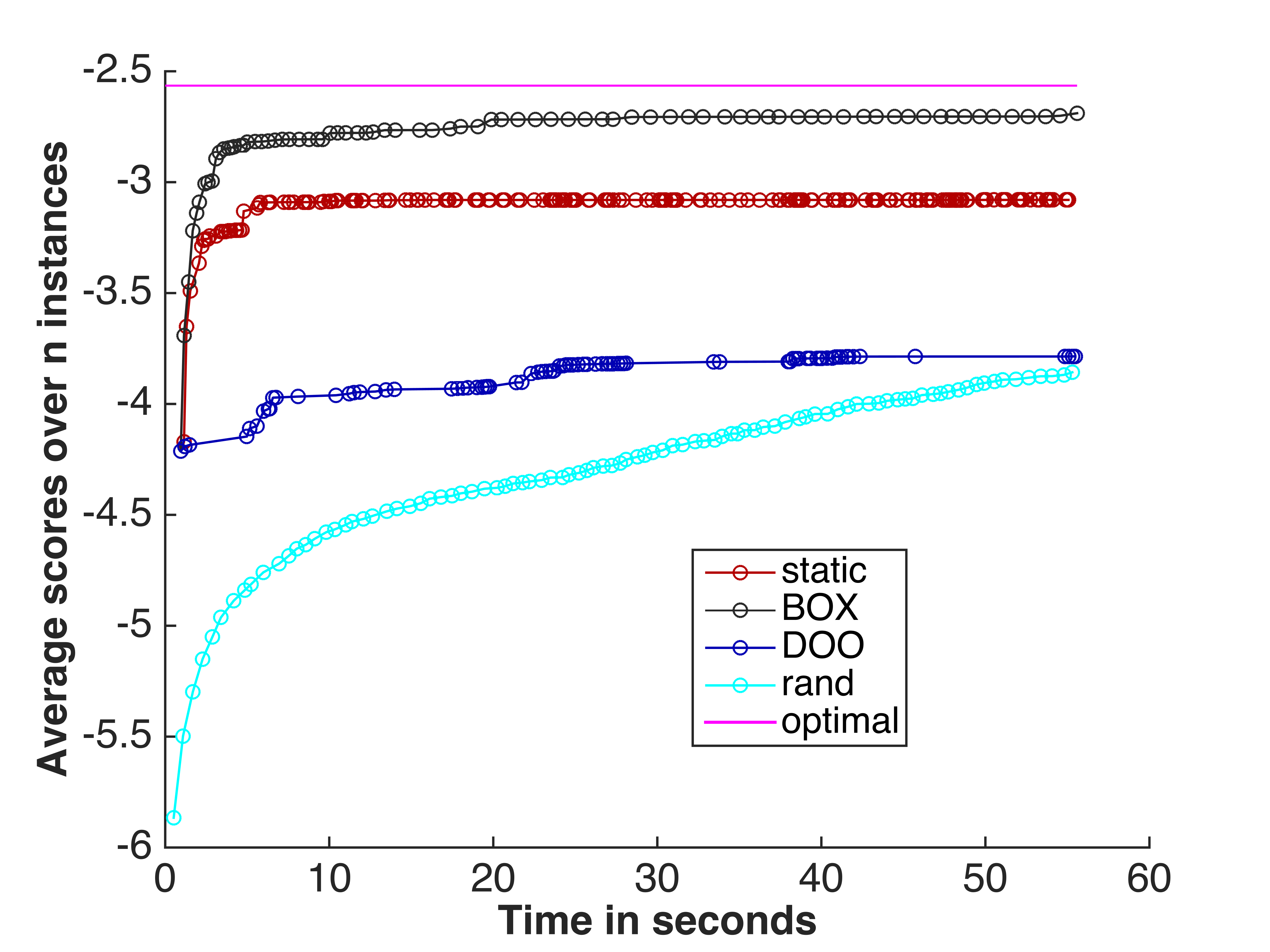}
	\caption{Pick-and-place (RRT)}
	\label{fig:biggest_t}
	\end{subfigure} 
	\begin{subfigure}[b]{0.31\textwidth}	
		\centering
		\includegraphics[width=\linewidth]{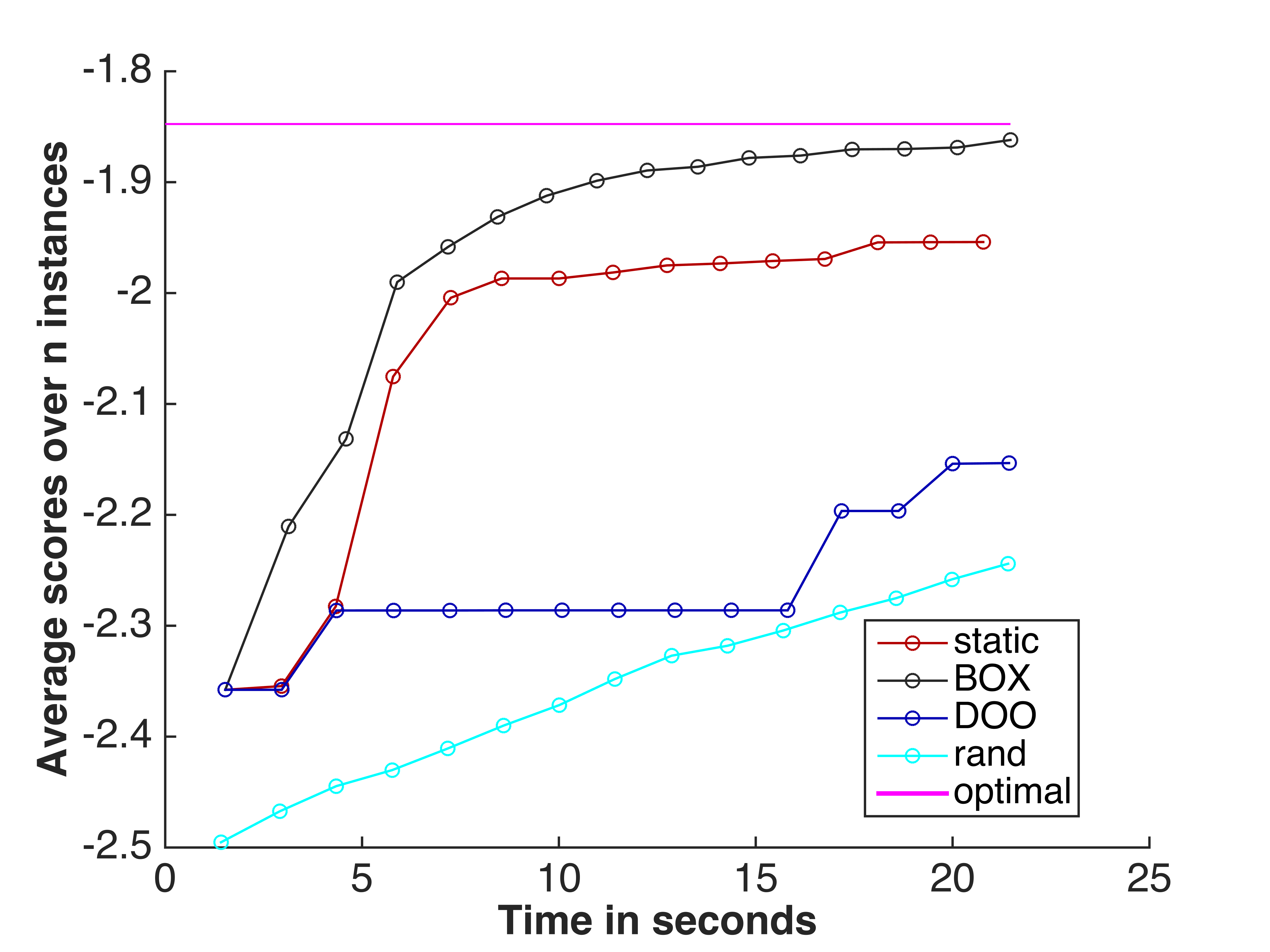}
		\caption{Grasp-selection (TrajOpt)}
		\label{fig:arm_and_grasp_t_TrajOpt}
	\end{subfigure} 
	\begin{subfigure}[b]{0.31\textwidth}	
		\centering
		\includegraphics[width=\linewidth]{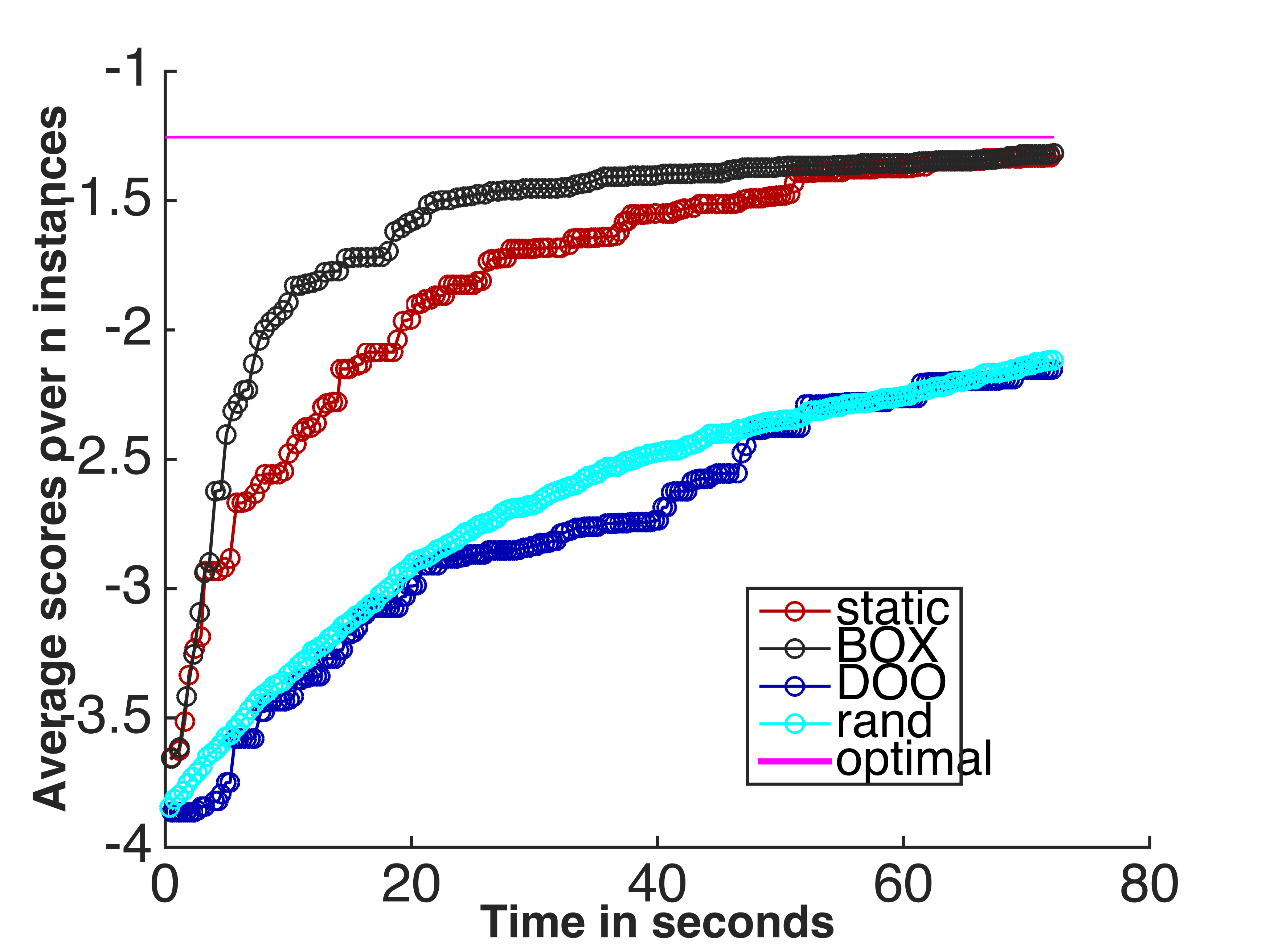}
		\caption{Grasp-and-base (TrajOpt)}
		\label{fig:pick_base_t_TrajOpt}
	\end{subfigure} 
	\caption{Solution score versus run time for different algorithms in
		  various domains. The time axis goes until the first algorithm reaches
		95\% of the optimal score, marked with magenta. This optimal line is obtained by taking the
$\theta$ from $\apprxsolconspace$ that achieved maximum score for each problem instance. 
		The top row uses RRT and the bottom row uses TrajOpt.}
\end{figure*}

\subsection{6.2 Grasp-and-base selection domain} 
In this experiment, we evaluate how the score-space algorithms perform when
we construct the matrix $\apprxsolconspace$ by sampling from a continuous space.
Here, the robot needs to search for a base configuration, a left arm
pre-grasp configuration, and a feasible path between these configurations
to pick an object.

A planning problem instance is again defined by the arrangement of
objects.  Figure \ref{fig:gb_domain} shows three different training
problem instances.  We have 20 rectangular boxes as obstacles, all
resting on the two tables both of which remain fixed in all instances.
For each of the red obstacles and the blue target object, the $(x,y)$
location and orientation in the plane of the table are
randomly chosen subject to the constraint that they are not in
collision.  It is possible that the problem instances will be
infeasible (the target object is too occluded or kinematically
unreachable by the robot).  The robot always starts at the same
initial configuration.

The robot's active DOFs include its base configuration, torso height, 
and left arm configuration, for a total of 11 DOF. 
A solution constraint for this domain consists of the robot base configuration to pick
the target object, $(x,y,\psi)$, where $\psi$ is an orientation of 
the robot, as well as one of 81 grasps from the previous section.  

\begin{figure*}[htb]
\centering
\includegraphics[scale=0.5]{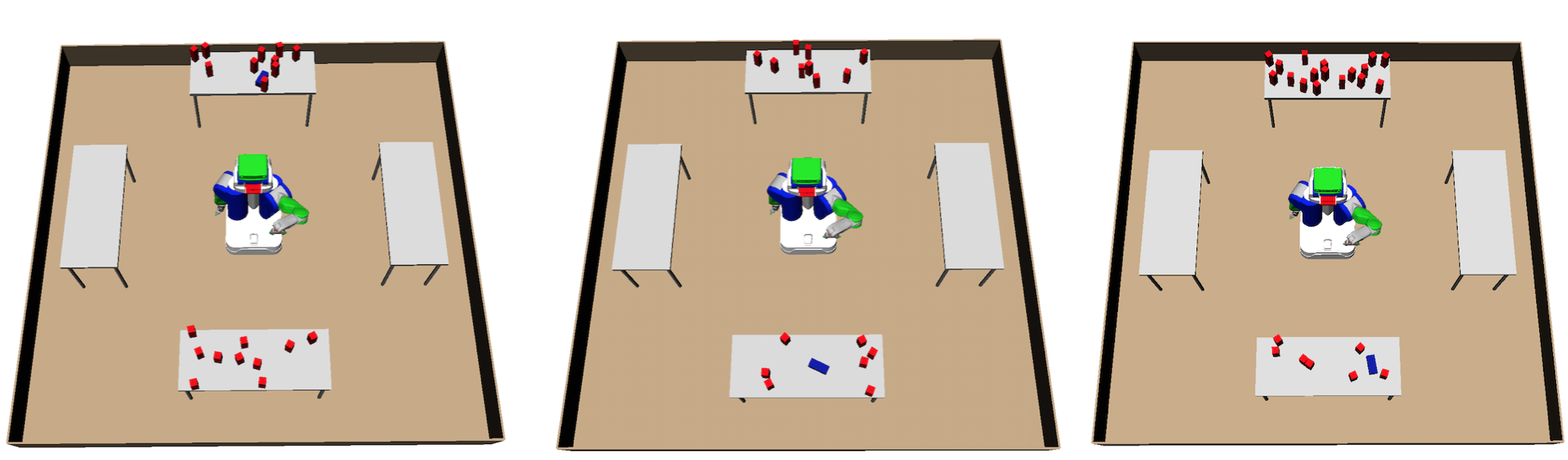}
\caption{Three instances in which the robot
  must select base configuration, grasp, and paths, to pick the target object (blue). 
The poses of the objects are randomly varied between instances.
}
\label{fig:gb_domain}
\end{figure*}

The solution constraints in this case are the grasp $g$,
and the base configuration $k$. 
Given a planning problem instance with no constraints, the
\rawalg~for this
domain performs three sampling procedures, using a uniform random
sampler, backtracking among them as needed to find a feasible solution:
\begin{enumerate}
\item Sample a base configuration, $k=(x,y,\psi)$, from a circular 
region of free configuration space, with radius equal to the
length of the robot's arm, centered at the location of the object.
\item Sample, without replacement, from the 81 grasps until a legal
  one is found, i.e. one for which there is an IK solution in which the robot is
  holding the target object using that grasp in a collision-free
  configuration.
\item Use bidirectional RRT or TrajOpt to find a path for the arm and torso
  between the configurations found in steps 1 and 2.
\end{enumerate}
We assume that the configuration from step 1 is reachable from the
initial configuration. 
To extract a solution constraint from the resulting plan, we simply
return the base configuration from step 1 and the grasp from step 2.

Unlike \rawalg, which has to search for $k$ and $g$,  $\pi(\prob,\solcon)$ 
simply solves the inverse
kinematics and motion planning problems as in the previous example.  The
trajectory of the arm to the pre-grasp configuration, with the base
fixed according to the constraint, is scored according to equation 
\ref{eq:dist_fcn}, with a score of $d$
assigned to problem instances and constraints for which no feasible
solution is found within a fixed number of iterations of the RRT.
The experiments were run on a data set of 1000 problem instances.
The set $\apprxsolconspace$ contained 1000 pairs of grasp and 
robot base configuration, each extracted from a different problem instance.

Figures~\ref{fig:ff_pick_base} and~\ref{fig:ff_pick_base_TrajOpt} show the
time required by each method to find the first feasible plan, using
RRT and TrajOpt as the planner.  The score-space algorithms
perform orders of magnitude better than the other algorithms, with
\boxalg~again outperforming \staticalg. \dooalg~and 
\randalg~do provide some advantage by using previously stored 
solution constraints compared to  \rawalg. \rawalg~has to 
sample in the continuous space of base configurations 
and check whether an IK solution and feasible path exist by 
running IK and path planning. This causes a significant increase
in time to find a solution.

Figure \ref{fig:pick_base_t} compares the solution quality 
vs time when RRT is used and Figure \ref{fig:pick_base_t_TrajOpt}
compares the same quantities when TrajOpt is used. Again, the 
score-space approaches
outperform all other methods, with \boxalg~performing better
than \staticalg, by using the correlation information
from the score space. \dooalg~and \randalg~perform similarly,
mainly because that simple Euclidean distance is not effective
for the hybrid space of base configuration and grasps.

\subsection{6.3 Pick-and-place domain}

In this experiment, with problem instances as shown in
figure~\ref{fig:biggest_domain},
 we introduce solution constraints involving the placements of objects.
Here, the robot needs to pick a large object (shown in
black) up off of a table in one room, carry it through a narrow door,
and place the object on a table. The initial poses of the target 
object and the robot are fixed, but problem instances vary 
in terms of
 the initial poses of 28 obstacles on both the starting and
  final tables, which are chosen uniformly at random on the table-tops
  subject to non-collision constraints, and 
the length of the target object, which is chosen at random from
  three fixed sizes.

\begin{figure*}[htb]
\centering
\includegraphics[scale=0.24]{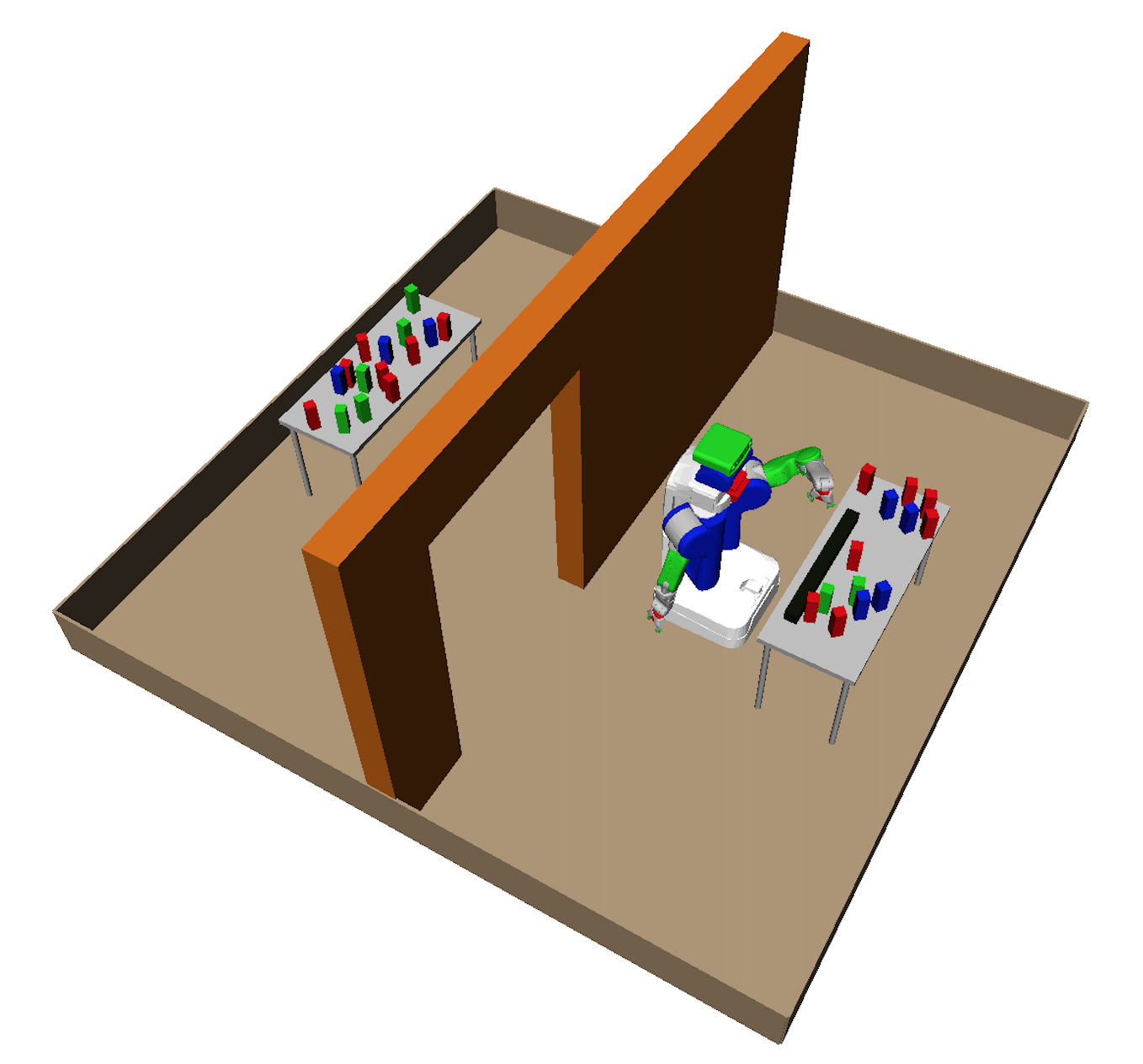}
\includegraphics[scale=0.24]{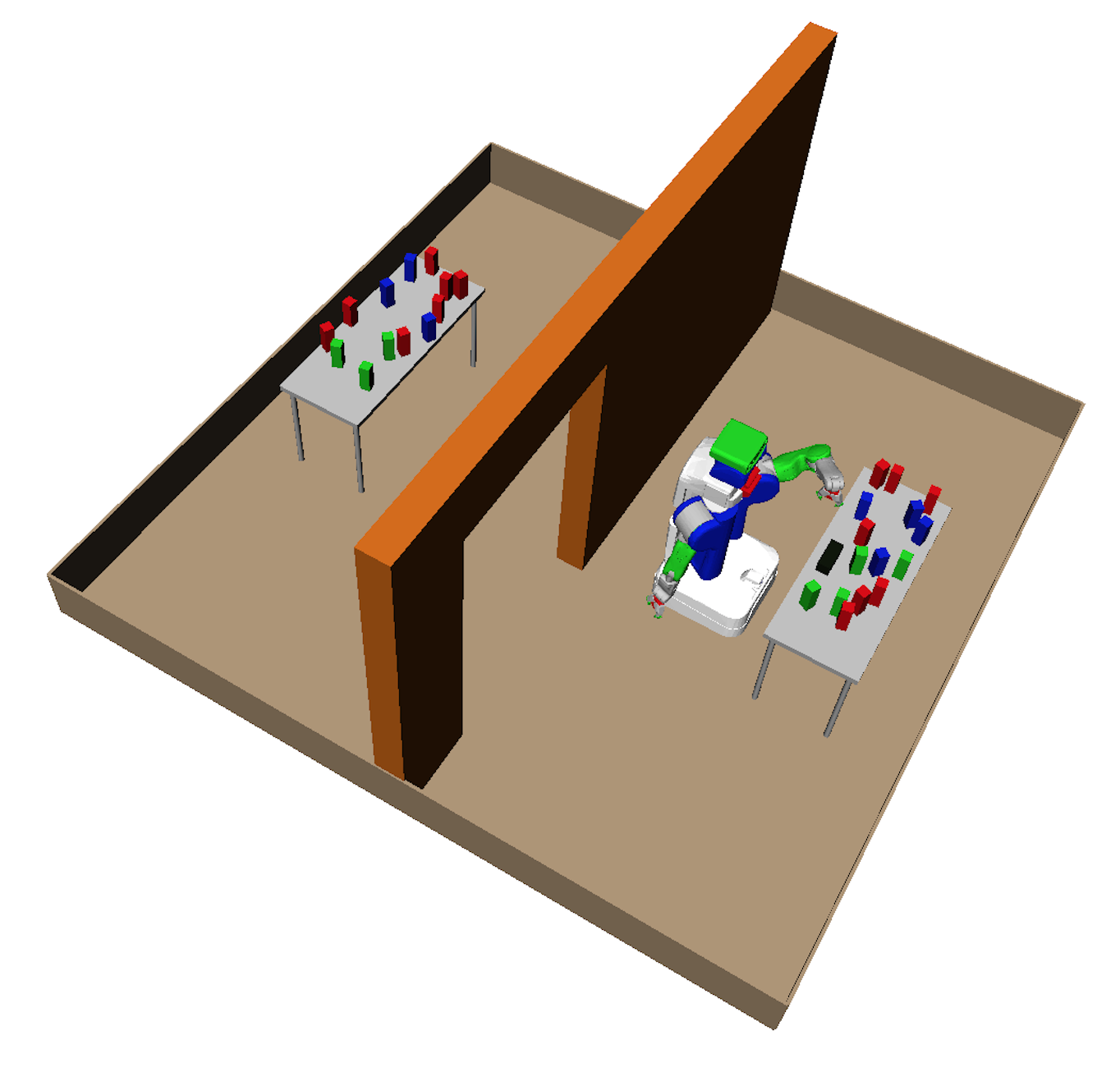}
\includegraphics[scale=0.24]{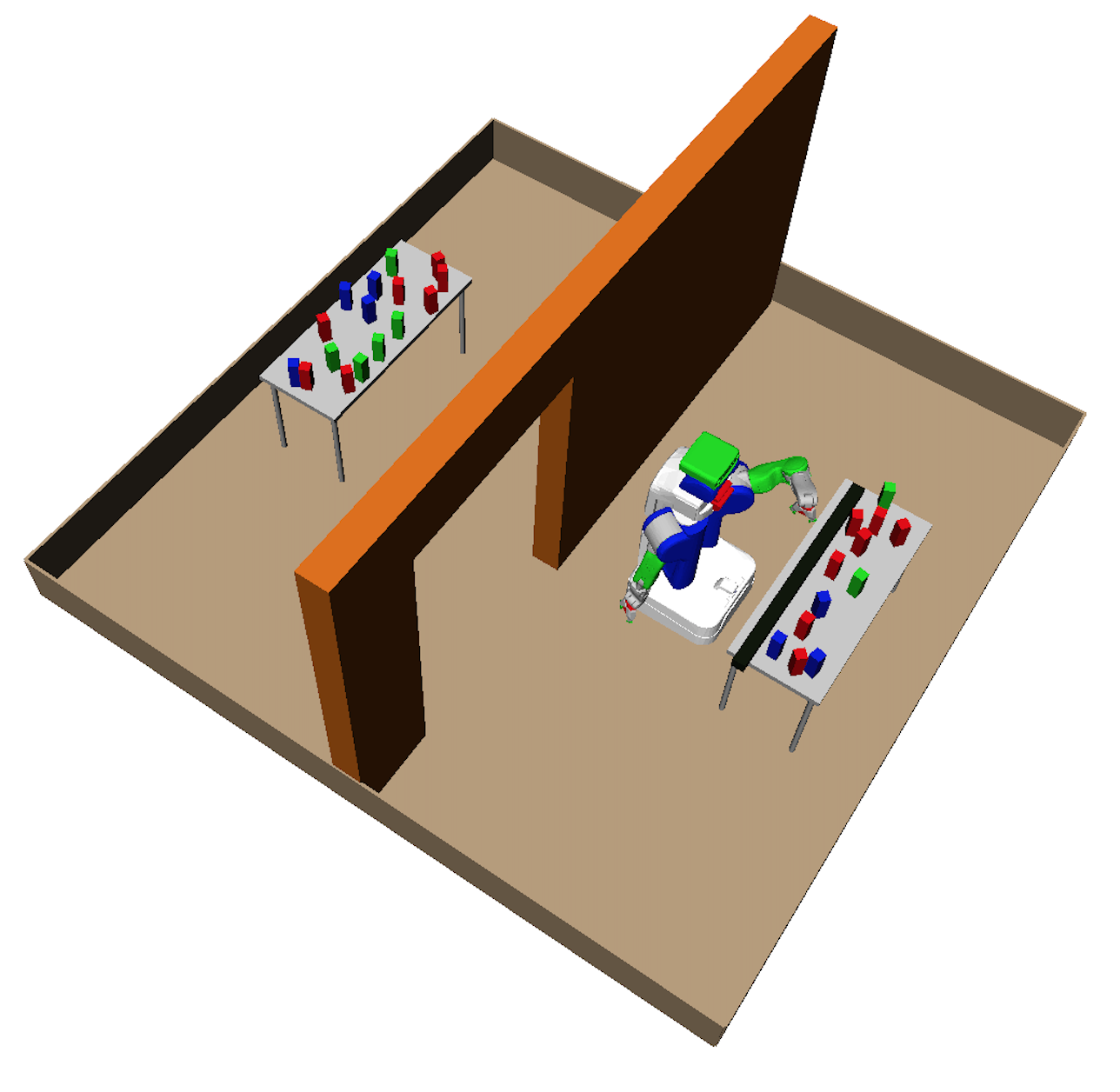}
\caption{Three problem instances from the pick-and-place domain.  The
robot's initial configuration and the black object's initial pose
are fixed across different planning scenes, but other objects' poses
and the black object's length vary.}
\label{fig:biggest_domain}
\end{figure*}

The robot's active DOFs are the same 11 DOFs as in the previous problem domain.
The solution constraints in this domain consists of grasp $g$ to pick
the object, $o$, the placement pose of the object on the table in the
back room, $k_b$, the pre-placement base configuration of the robot
for placing the object at pose $o$, and $k_{sg}$, the subgoal base
configuration for path planning through the narrow passage to $k_b$
from the initial configuration.

Given a problem instance with no constraint, \rawalg~performs 
six sampling procedures, similarly to the previous domain, using a 
uniform random sampler:
\begin{enumerate}
\item Sample a grasp $g$, without replacement, from the 81 grasps
  until a legal one is found.
\item Plan a path for the arm and torso to the pre-grasp configuration found in
step 1. If none is found, choose another grasp.
\item Sample a collision-free object pose $o$ on the table in the other room.
\item Sample $k_b$, the pre-placement base configuration, from a
  circular region of free configuration space around $o$, with radius
  equal to the length of the robot's arms. If none is found, go back to
  the previous step.
\item Plan a path from the initial configuration to $k_b$. If none is found,
go back to the previous step.
\item Plan a path from $k_b$ to a place configuration for putting the
  object down at $o$. If none is found, go back to the previous step.
\end{enumerate}
In contrast, given a solution constraint, $\pi(\prob,\solcon)$ simply
solves for inverse kinematics and path plans.


The experiments were run on a data set of $1500$ problem instances,
with 500 instances per rod size.  
The set $\hat{\Theta}$ contained 1000 tuples of solution-constraint values,
obtained first by running Algorithm \ref{alg:GenTrainData} 
and then randomly subsampling them to reduce the size to 1000.

Figure \ref{fig:ff_biggest} shows the time required by each method
to find the first feasible plan. Again, the score-space algorithms
significantly outperform the other algorithms and \boxalg~outperforms
\staticalg. One noticeable difference between this domain 
and the previous two is that an ineffective solution constraint takes
a long time to evaluate, because computing a path plan or IK
solution for an infeasible constraint is computationally expensive.
This is evident in performance of \randalg~and
\dooalg~which perform worse than \rawalg~as they tend to choose
solution constraints that are infeasible and expensive to
evaluate.

Figure~\ref{fig:biggest_t} shows the average solution score as a
function of computation time. The graphs show a similar trend as in the
previous experiments,
with score-space algorithms outperfoming the other algorithms,
and \boxalg~performing better than \staticalg.
The fact that this domain requires a significant amount of time to try
an ineffective solution constraint is again evident
in \dooalg's plot, where consecutive dots have a large gap between them.
\boxalg~and \staticalg~are able to avoid this problem by exploiting the 
score-space information.

\begin{figure*}[htb]
\centering
\includegraphics[scale=0.35]{./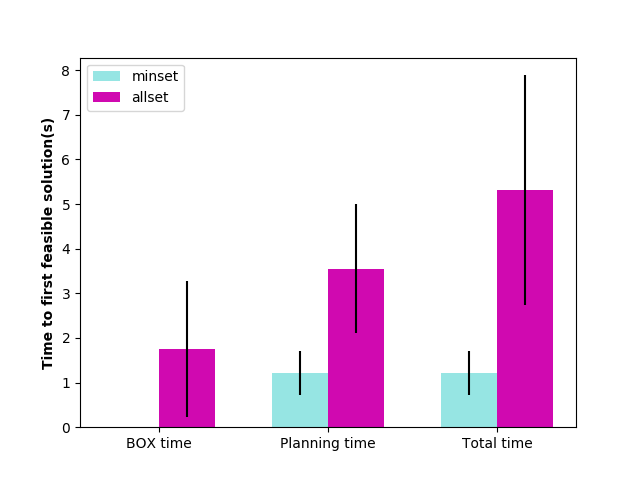}
\includegraphics[scale=0.35]{./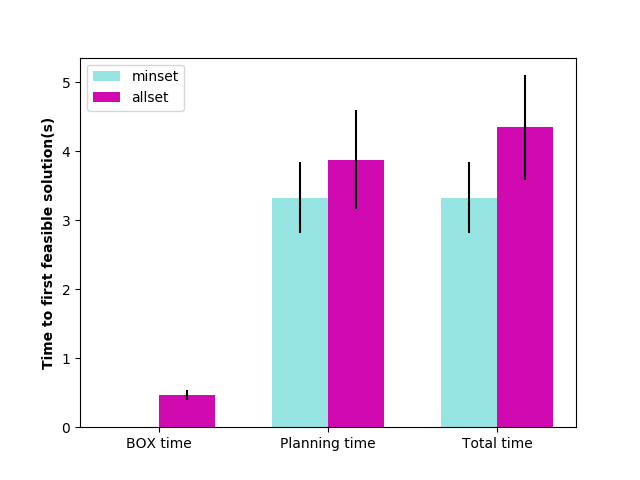}
\includegraphics[scale=0.35]{./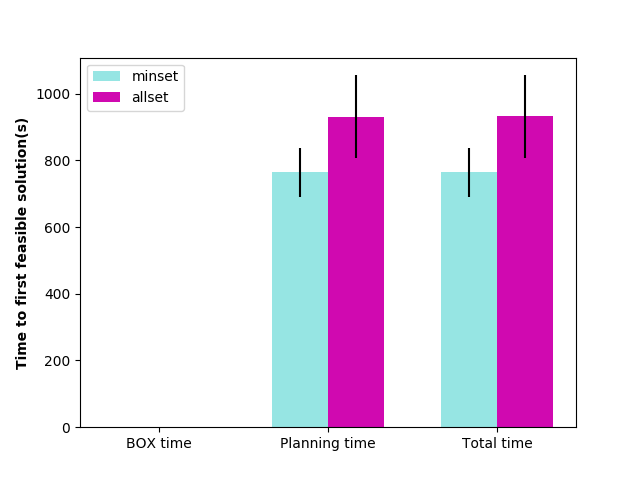}
\caption{A comparison of time to find a first feasible solution for 
grasp-and-base-selection domain (left), pick-and-place domain (middle), and conveyor 
belt domain (right).
\emph{minset} refers to running \boxalg~with the constraint set found by Algorithm~\ref{algo:construct_minset}, 
and \emph{allset} refers to running BOX using the original set $\apprxsolconspace$. \boxalg~time refers to time 
spent (mostly) inverting the covariance matrix, and planning time
refers to time spent on planning using chosen constraints.}
\label{fig:minset_ff}
\end{figure*}

\subsection{6.4 Conveyor belt unloading domain}
In this domain, the robot has
to manipulate box-shaped objects using two-handed grasps. 
The robot's objective is to receive five box-shaped objects 
with various sizes from a conveyor belt and
pack them into a room with a narrow entrance. A problem instance is defined by the
shapes and order of the objects that arrive on the conveyor belt. Examples of
problem instances are shown in Figure~\ref{fig:conv_belt}, and a solved
problem instance is shown in Figure~\ref{fig:solution_convbelt}.

The robot must make a plan for handling all the boxes, including a
grasp for each box, a placement for it in the room, and all of the
robot trajectories. The initial base configuration is always
fixed at the conveyor belt.  After it decides the object placement,
which uniquely determines the robot base configuration, a call to an
RRT is made to find, if possible,  a collision-free path from its
fixed initial configuration at the conveyor belt to the selected
placement configuration. The robot cannot move an object once it has
placed it.

The three previous problems involve an infinite branching factor,
but a relatively shorter planning horizon than this problem: if
we assume that a call to a motion planner is a ``step'' in our plan,
since we are using it as a primitive planner, the grasp-selection domain
has a horizon of one, the grasp-base-selection domain is a 
has a horizon of two (for base planning and arm planning 
for picking an object), and the pick-and-place domain has a horizon of
three (for picking an object, moving robot's base, and then to
place the held object). The conveyor-belt domain requires a horizon of ten, for
picking and placing five objects in total.

Further, this domain is particularly challenging compared to
the previous experiments for two reasons. First, the
robot is operating in an environment with tight free-space,
in which there may or may not be a collision-free path
from one robot configuration to another. If the object placements
are not carefully chosen, calls to the motion planner will be
extremely expensive, either because they are infeasible and will have
to run until a time-out is reached, or because the tolerances are
tight and so even if the problem is feasible, it may run for a long
time or time out. 
Second, they contain a large volume of 
``dead-end'' states that require the task-level planner to backtrack.
For example, if the planner greedily places early objects near the
door, then it will eventually find that it is infeasible to place the
rest of the objects and will have to backtrack to find different
placements of those objects.

As mentioned, we have a significantly longer horizon planning problem
than the previous domains. Therefore, we use graph-search with the 
sampled operators as our~\rawalg. It proceeds as follows:

\begin{enumerate}
\item Place the root node on the search agenda.
\item Pop the node from the agenda with the lowest heuristic value
 (estimated cost to reach the goal)
\item Expand the popped node by generating three operator instances by sampling
their parameters, and add their successor states on the search agenda.
\item If the popped node is a root node, add it back to the queue after expansion
\item If at the current node we cannot sample any feasible operators, then we discard
the node and continue with the next node in the agenda.
\item Go to step 2 and repeat until we arrive at a goal state.  
\end{enumerate}

We have two operators: {\it pick} and {\it place}. To sample parameters for the {\it pick}
 operation, the raw planner~\rawalg~executes the following steps:
\begin{enumerate}
\item Sample a collision-free base configuration, $(x,y,\psi)$, uniformly from a
  circular  region of free configuration space, with radius equal to
  the length of the robot's arm, centered at the location of the
  object, using a uniform sampler.
\item With the base configuration fixed at $(x,y,\psi)$, sample
  $(d,h,\gamma)$, where $d$ and $h$ are in the range $[0.5,1]$, and
  $\gamma$ is in the range $[\frac{\pi}{4},\pi]$, uniformly. If an
  inverse kinematics (IK) solution exists for both arms for this
  grasp, proceed to step 3, otherwise restart.
\item A linear path from the current arm configuration to
  the IK solution found in step 2 is planned.
\end{enumerate}
At each stage, if a collision is detected, this means that the
sampled parameters are infeasible, so sampling proceeds from
the step 1

We assume that the conveyor belt drops
objects into the same pose, and the robot can always reach them from
its initial configuration near the conveyor belt, so we omit step
3. From a state
in which the robot is holding an object, it can place it at a feasible
location in a particular region. To sample parameters for {\it place}, 
\rawalg~executes following steps:
\begin{enumerate}
\item Sample a collision-free base configuration, $(x,y,\psi)$, uniformly
from a desired region $R$.
\item Use bidirectional RRT from the current robot base configuration
to $(x,y,\psi)$. 
\end{enumerate}

\begin{figure}[htb]
\centering
\includegraphics[scale=0.2]{./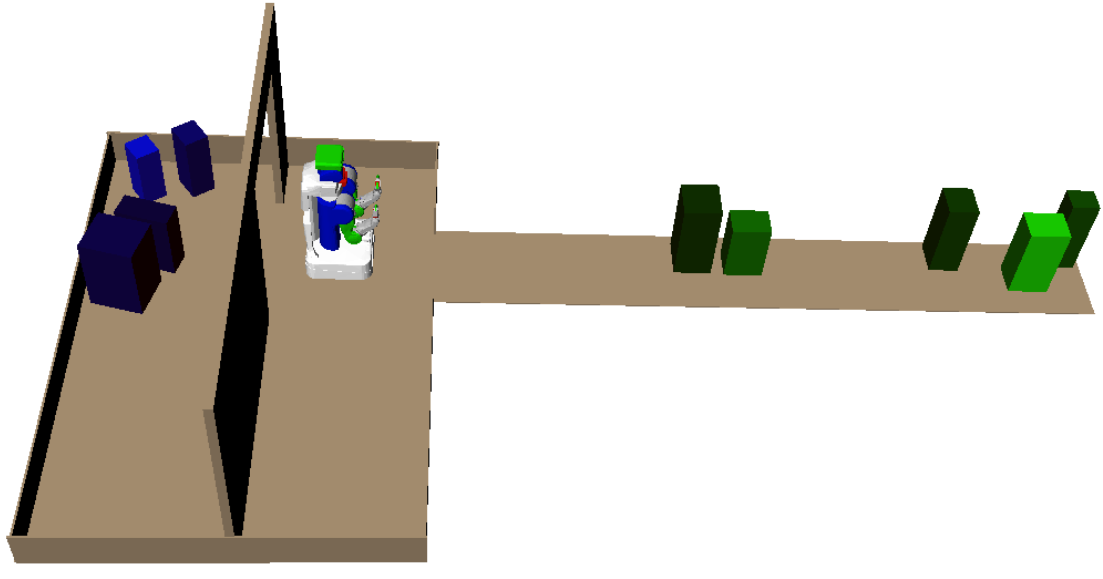}\\
\includegraphics[scale=0.2]{./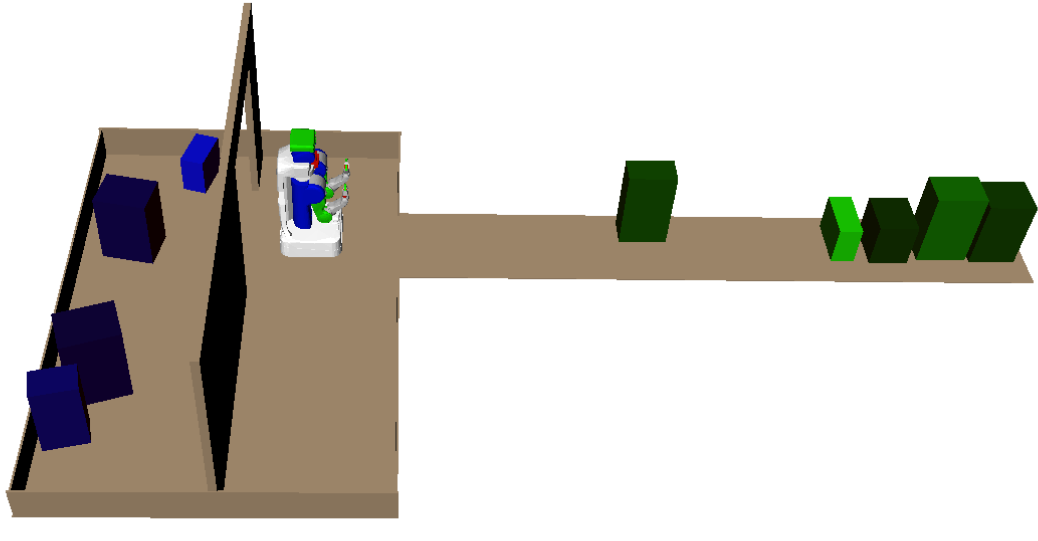}
\caption{Two instances of the conveyor belt domain}
\label{fig:conv_belt}
\end{figure}

\begin{figure}[htb]
\centering
\includegraphics[scale=0.25]{./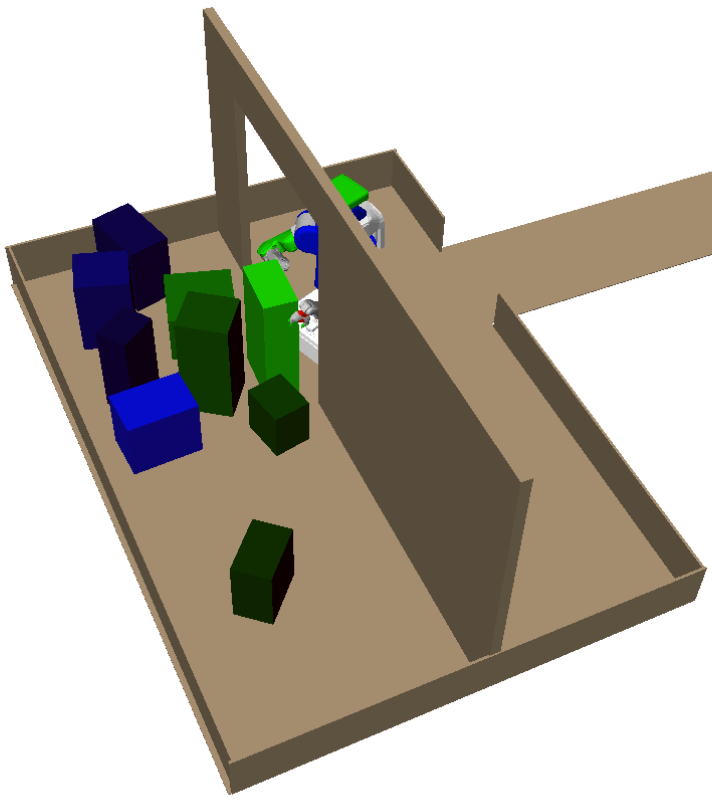}
\caption{A solution for the conveyor belt domain}
\label{fig:solution_convbelt}
\end{figure}

\begin{figure*}
\centering
  \begin{subfigure}[b]{0.45\textwidth}  
    \centering
    \includegraphics[width=\linewidth]{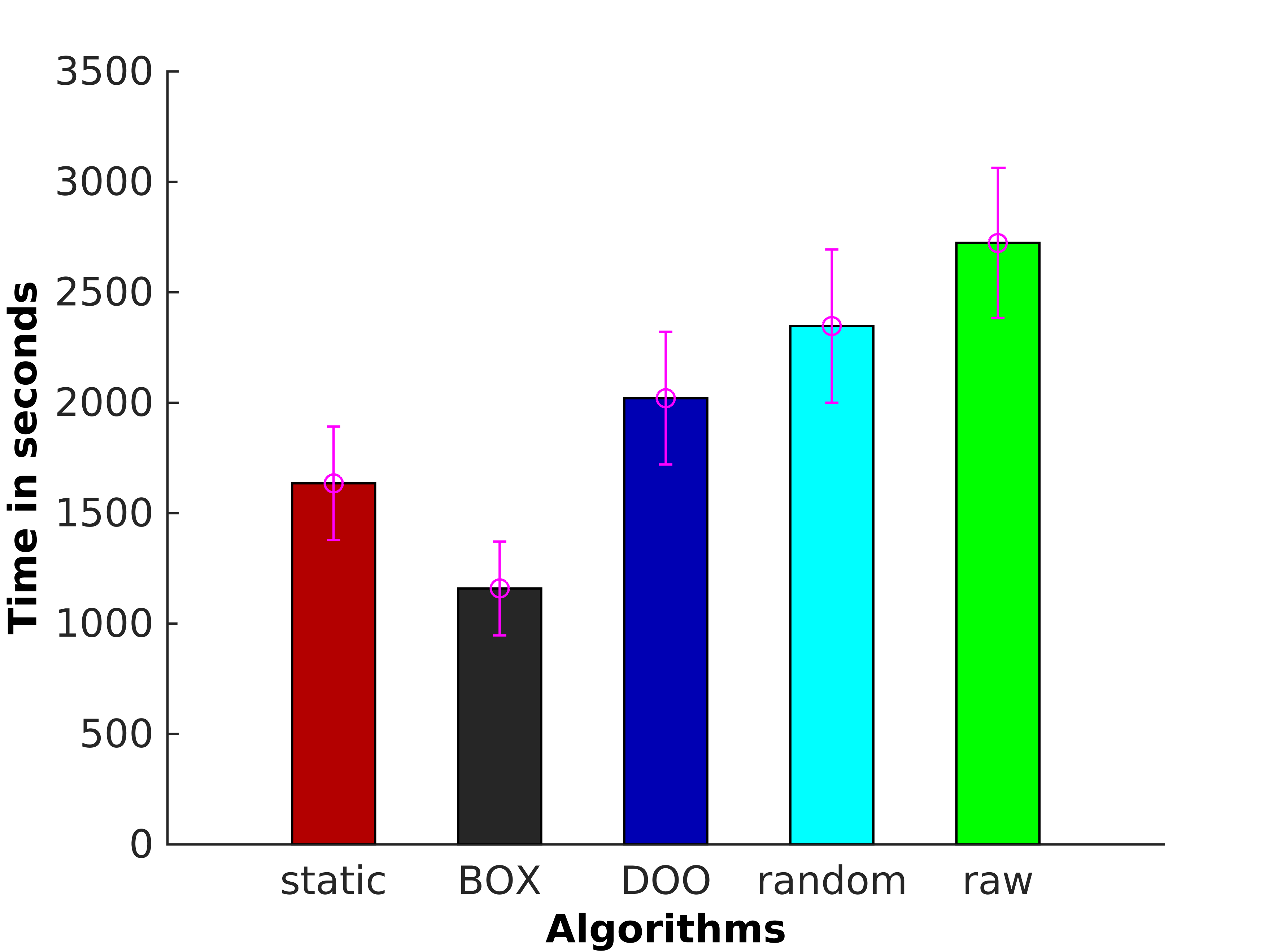}
    \caption{Time to pack four objects for the conveyor belt domain using RRT}
    \label{fig:conv_belt_ff}
  \end{subfigure} 
  \begin{subfigure}[b]{0.45\textwidth}  
    \centering
    \includegraphics[width=\linewidth]{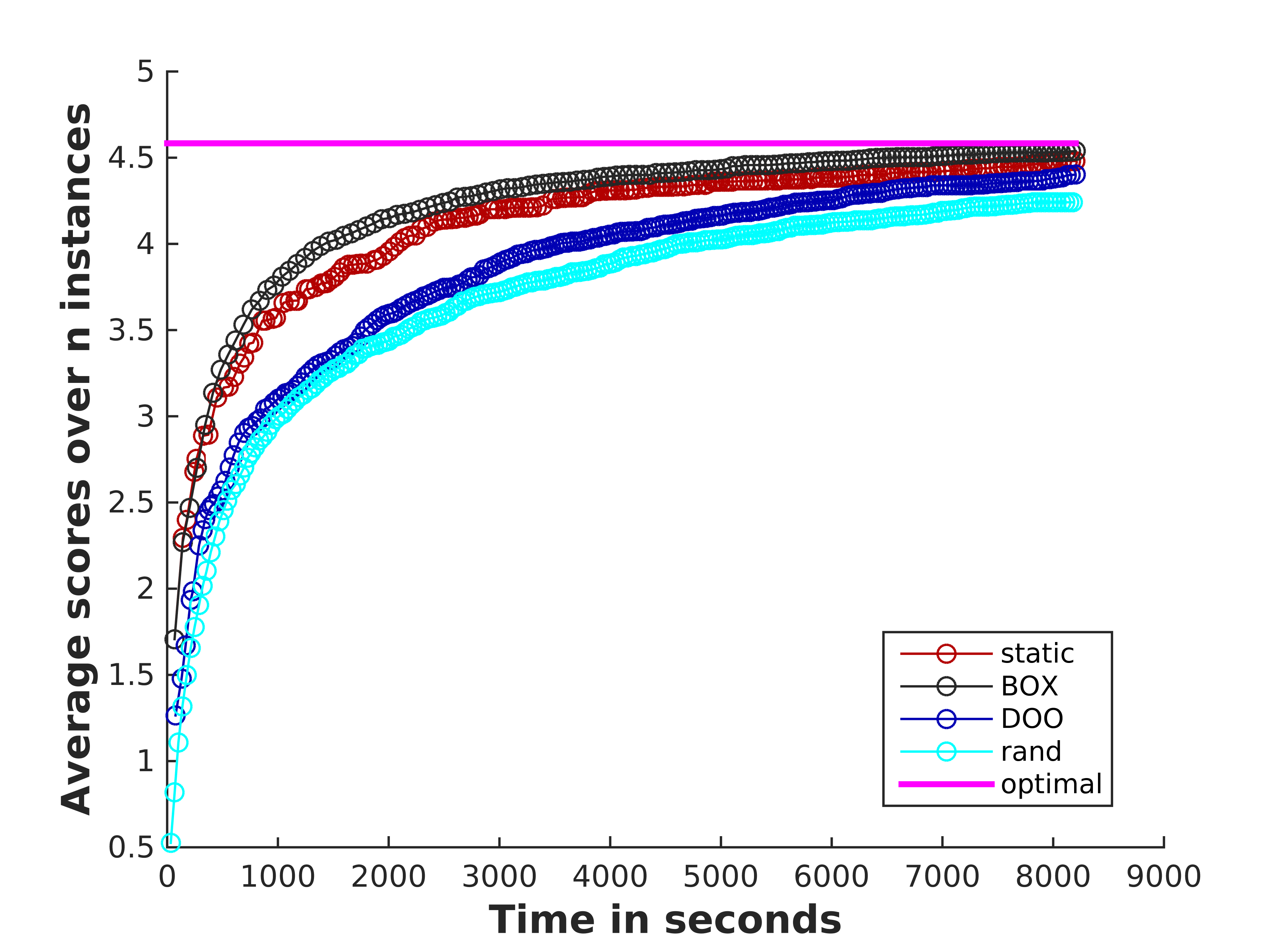}
    \caption{Solution score vs run time for the conveyor belt domain using RRT}
    \label{fig:conv_belt_vs_time}
  \end{subfigure} 
  \caption{LOOCV estimates of different performance metrics for the conveyor belt domain}
\end{figure*} 

The experiments were run on a dataset of 468 problem instances. The solution
constraints in this domain consists of the base configuration for the {\it pick} operator,
and the base configuration for {\it place} operator. The size of $\apprxsolconspace$
was 400, obtained by running Algorithm \ref{alg:GenTrainData} on total of 468 problem instances 
and then randomly subsampling them to reduce the size to 400.

Figure \ref{fig:conv_belt_ff} shows the time required by each method
to pack four objects. Again, the score-space algorithms
significantly outperform the other algorithms and \boxalg~outperforms
\staticalg. In this domain, each evaluation of a constraint is significantly
longer than in the previous domains, because trying a constraint
 involves calls to an RRT up to ten times, some of which may be an infeasible
motion planning problem. We also notice that \dooalg~does not perform as
badly as in the previous domains, because the constraints are defined on
the base poses of robots, in which Euclidean distance is a reasonable 
metric.

Figure~\ref{fig:biggest_t} shows the average solution score as a
function of computation time. The graphs show a similar trend as in the
previous experiments,
with score-space algorithms outperfoming the other algorithms,
and \boxalg~performing better than \staticalg.
Compared to the previous domains, the difference between \boxalg~and \staticalg~is
much smaller. This is because in this domain, any strategy 
that packs objects inside first and then gradually 
towards the narrow entrance will tend to have high scores. Therefore, 
\staticalg, which uses the mean of the scores, works reasonably well, although
\boxalg~generally does better.

\subsection{6.5 Experiments with optimally minimal set}
The purpose of this experiment is to verify our hypothesis that reducing
the size of the constraint set reduces the time for matrix inversion 
involved in updating the mean and covariance matrix in \boxalg, as well as the time to find
the first feasible solution. We test Algorithm~\ref{algo:construct_minset} in
the first three problems previously solved using \boxalg~with full constraints, the
grasp-and-base-selection domain, pick-and-place domain, and conveyor-belt domain,
 and provide comparisons. We omit the grasp-selection domain because it already
has a small constraint set of size 162, compared to 1500,1000, and 500 for
the other three domains.

Using Algorithm~\ref{algo:construct_minset}, we were
able to reduce the constraint set size significantly. 
For the grasp-and-base-selection domain, the algorithm reduced it to 41 from 1500 
for the pick-and-place domain it reduced the constraint
set size to 69 from 1000, and for the conveyor-belt domain it reduced from 500
to 76.

Figure~\ref{fig:minset_ff} shows the times to find a first-feasible 
solution for these domains using the reduced constraint set. As we can see,
it reduces both planning time and \boxalg~time,
 which confirms our hypothesis. In fact, in all of the
domains, it reduced the \boxalg~time, which mostly consists of
covariance matrix inversion time, to almost zero. In terms of reduction
in planning times, the reduction was approximately a factor of around 3
for the grasp-and-base-selection domain. The reduction is smaller in
the pick-and-place and conveyor belt domains, although there is
a notable reduction. The reason that the reduction is smaller
in these two domains is because there 
generally is a smaller reduction in variance of scores of other constraints
 compared to the grasp-and-base-selection domain when one constraint is evaluated.

\section{7. Discussion and future work}
In this paper, we proposed an algorithm for learning to guide 
a planner for task-and-motion planning problems by addressing
three important questions: what to predict, how to represent
a planning problem instance, and how to transfer planning 
knowledge from one instance to another. 

In order to trade-off between the burden on the learning algorithm and
the planner, we proposed to predict constraints on the planning
process rather than a complete solution. To eliminate the bias and
cumbersome feature design, we introduced a score-space 
representation, in which we
construct a representation of a problem instance using a set of
scores of plans that satisfy a pre-built discrete set of constraints
 on-line. To transfer knowledge, we proposed~\boxalg, an algorithm that
tries to both accurately construct a score-space representation of
the given problem instance and choosing a constraint to try next
from the given set.

As an extension to our original work, we also proposed an approach
for reducing the constraint-set size. This is motivated by
the fact that \boxalg~requires inverting a matrix, which gets
larger as the size of the constraint-set increases.
This algorithm effectively reduced the constraint-set size,
 which lead to reduced covariance matrix inversion time 
and reduced number of evaluations of constraints.  
We demonstrated effectiveness of these algorithms
in four challenging TAMP domains. We now discuss
limitations of the current approach and future work.

\subsection{7.1 Fixed plan skeletons}
In this work, we focused on TAMP problems such that even for a
fixed sequence of operators, also known as plan skeletons~\citep{TLPIROS14},
the planner would yield a solution even for different problem instances. For
example for the grasp-and-base selection domain, the sequence
{\it MoveBase,Pick } was sufficient, and for the pick-and-place domain
the sequence {\it Pick,MoveBase,Place} was sufficient for 
different problem instances. As noted by~\cite{TLPIROS14}, the same
plan skeleton can solve a large number of problem instances for
some TAMP problems.

However, there are more general TAMP problems in which 
a fixed plan skeleton would not work. For example, consider 
the problem of making a cup of coffee, where a problem 
instance is defined by the number of spoons of sugar to
put in. For such variation in problem instances, we would need
different plan skeletons depending on the request. 

To deal with this limitation, we are currently 
working on learning high-level constraints that 
constrains the search space of plan skeletons from 
planning experience. Since the number of plan skeletons
is discrete, if we can find a good set of constraints
that reduces the space of plan skeletons to a small but
promising set, then we can construct~$\apprxsolconspace$ for
each skeleton, and then use~\boxalg~appropriately.

\subsection{7.2 Discrete constraints}
To make use of the correlation information among scores
of constraints, our approach builds a 
discrete set of constraints and evaluates them on training
problem instances during the training phase. For some applications,
however, finding a solution that conforms to one of the 
constraints from a selected discrete set might be insufficient to
cope with changes in problem instances. For instance,
consider the task of moving objects to clear a path to the
 target object. Depending on the arrangement of moveable
obstacles, we would need different object placements each time,
and covering all possible such placements in a discrete set
would be difficult. 

For this problem, we are currently looking into
generative models for generating promising constraints
from the original space~$\solconspace$. The idea is
to use the recent advancement in generative model 
learning~\citep{GoodfellowNIPS2014} to generate constraints
with high scores, by training a generative model
for constraints using successful plans. The main
challenge would be how to incorporate score information
appropriately to generative adversarial network.

\bibliographystyle{apalike}
\bibliography{references}

\appendix
\section{Appendix A: Proofs in Section 4}
We first prove Lemma~\ref{lem:rho}.
\begin{proof} (Lemma~\ref{lem:rho}).
Mutual information can be expressed as the difference between entropies,
\begin{align*}
I(f_A;J_A) & = H(J_A) - H(J_A \mid f_A)\\
&= \frac12 \log\det(2\pi e \hat \Sigma_A) - \frac12 \log\det(2\pi e \sigma^2\mI) \\ 
& =  \frac12 \log\det(\sigma^{-2}\hat \Sigma_A).
\end{align*}
For the constraints evaluated by \boxalg, we have
\begin{align*}
I(f_{\Theta_k};J_{\Theta_k}) & = H(J_{\Theta_k}) - H(J_{\Theta_k} \mid f_{\Theta_k})\\
&= \sum_{t=2}^kH(J_{\theta^{(t)}} \mid J_{\Theta_{t-1}}) +  H(J_{\Theta_1})  - \frac12 \log\det(2\pi e \sigma^2\mI)\\
& = \sum_{t=1}^k \frac12\log(2\pi e \hat\Sigma^{(t-1)}_{{\theta^{(t)}}}) - \frac12 k\log(2\pi e \sigma^2)\\
&=\sum_{t=1}^k \frac12\log(\sigma^{-2}\hat\Sigma^{(t-1)}_{{\theta^{(t)}}})\qedhere 
\end{align*}
\end{proof}

Before we continue to the proof of Theorem~\ref{thm:regret}, we introduce some useful lemmas in the following. 
\begin{lem}
\label{lem:con}
Let $\sigma>0$. If $\hS-\sigma^2\mI$ is positive semi-definite, for 
any $t\in[k]$, $\hS^{(t-1)}_{\theta^{(t)}}  -\sigma^2\geq 0$.
\end{lem}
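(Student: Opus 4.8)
The plan is to exploit the data-augmentation construction from Figures~\ref{fig1} and~\ref{fig2} rather than manipulating Schur complements directly. First I would record a basic positivity fact: since $\sigma>0$ and $\hS-\sigma^2\mI$ is positive semi-definite, we have $\hS \succeq \sigma^2\mI \succ 0$, so every principal submatrix of $\hS$ --- in particular $\hS_{\Theta_{t-1},\Theta_{t-1}}$ --- is positive definite, hence invertible. This makes the posterior update in Eq.~\eqref{eqn:update} well-defined at every iteration, and it also guarantees that the augmented latent variable $f\sim\mathcal N(\hmu,\hS-\sigma^2\mI)$ is a legitimate Gaussian.

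Next I would set up the augmented model explicitly: write $J = f + \varepsilon$ with $f\sim\mathcal N(\hmu,\hS-\sigma^2\mI)$ and $\varepsilon\sim\mathcal N(0,\sigma^2\mI)$ independent of $f$; integrating out $f$ recovers $J\sim\mathcal N(\hmu,\hS)$, so the conditional covariance of $J_{\bar\Theta_{t-1}}$ given $J_{\Theta_{t-1}}$ computed in this model agrees with $\hS^{(t-1)}_{\bar\Theta_{t-1}}$ from Eq.~\eqref{eqn:update}. The crucial observation is that \boxalg~always selects $\solcon^{(t)}\in\bar\Theta_t$, i.e.\ $\solcon^{(t)}\notin\Theta_{t-1}$, so the coordinate $\varepsilon_{\solcon^{(t)}}$ of the noise is independent of $f_{\solcon^{(t)}}$ and of the entire observation vector $J_{\Theta_{t-1}}$.

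Given that independence, the conditional variance decomposes:
$$\hS^{(t-1)}_{\solcon^{(t)}} = \mathrm{Var}\big(J_{\solcon^{(t)}}\mid J_{\Theta_{t-1}}\big) = \mathrm{Var}\big(f_{\solcon^{(t)}}\mid J_{\Theta_{t-1}}\big) + \mathrm{Var}(\varepsilon_{\solcon^{(t)}}) = \mathrm{Var}\big(f_{\solcon^{(t)}}\mid J_{\Theta_{t-1}}\big) + \sigma^2.$$
Since conditional variances are nonnegative, this yields $\hS^{(t-1)}_{\solcon^{(t)}} - \sigma^2 \ge 0$, as claimed. (For $t=1$ there is nothing to condition on and $\hS^{(0)}_{\solcon^{(1)}} = \hS_{\solcon^{(1)}} \ge \sigma^2$ follows directly from $\hS\succeq\sigma^2\mI$.)

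The only real subtlety --- and where I would be most careful --- is justifying that the noise-free Gaussian conditioning performed by \boxalg~in Eq.~\eqref{eqn:update} coincides with conditioning on $J_{\Theta_{t-1}}$ in the augmented model, and that $\varepsilon_{\solcon^{(t)}}$ is genuinely independent of the observed block; both hinge on $\solcon^{(t)}$ never having been queried before, which is exactly the $\solcon^{(t)}\in\bar\Theta_t$ property of the algorithm. A purely linear-algebraic alternative would write $\hS = (\hS-\sigma^2\mI) + \sigma^2\mI$ and apply operator-monotonicity of matrix inversion to the Schur complement, but that route requires $\hS_{\Theta_{t-1},\Theta_{t-1}} - \sigma^2\mI$ to be invertible (it is only known to be positive semi-definite), so the probabilistic argument is the cleaner one and I would use it in the final write-up.
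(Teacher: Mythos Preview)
Your probabilistic argument is correct. The paper takes a different, purely linear-algebraic route: it sets $\hS' = \hS - \sigma^2 \mI \succeq 0$, expands the update formula as
\[
\hS^{(t-1)}_{\theta^{(t)}} = \hS'_{\theta^{(t)}} + \sigma^2 - \hS_{\theta^{(t)},\Theta_{t-1}}\bigl(\hS'_{\Theta_{t-1}} + \sigma^2\mI\bigr)^{-1}\hS_{\Theta_{t-1},\theta^{(t)}},
\]
and observes that $\hS'_{\theta^{(t)}} - \hS_{\theta^{(t)},\Theta_{t-1}}(\hS'_{\Theta_{t-1}} + \sigma^2\mI)^{-1}\hS_{\Theta_{t-1},\theta^{(t)}}$ is the Schur complement, with respect to the invertible lower-right block $\hS'_{\Theta_{t-1}}+\sigma^2\mI=\hS_{\Theta_{t-1}}$, of a positive semi-definite matrix (the principal submatrix of $\hS'$ indexed by $\{\theta^{(t)}\}\cup\Theta_{t-1}$ with $\sigma^2\mI$ added only to its $\Theta_{t-1}$-block), hence nonnegative. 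No operator monotonicity is invoked and no invertibility of $\hS'_{\Theta_{t-1}}$ itself is required, so the obstacle you flagged for the linear-algebraic alternative does not arise in the paper's version of it. Your route has the virtue of tying the lemma directly to the augmented model already introduced for Lemma~\ref{lem:rho}, making the $\sigma^2$ floor transparent as irreducible observation noise; the paper's route is a couple of lines shorter and entirely self-contained, relying only on the standard fact that Schur complements of positive semi-definite matrices are positive semi-definite.
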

\begin{proof} 
Define $\hS' = \hS-\sigma^2\mI \succeq 0$. By Eq.~\eqref{eqn:update}, for any $t\in[k]$,
\begin{align*}
\hat{\Sigma}^{(t-1)}_{\theta^{(t)}} &= 
\hat{\Sigma}_{\theta^{(t)}}
- \hat{\Sigma}_{\theta^{(t)},\Theta_{t-1}}
(\hat{\Sigma}_{\Theta_{t-1}})^{-1}
\hat{\Sigma}_{\Theta_{t-1},\theta^{(t)}}\\
& =
\hat{\Sigma}'_{\theta^{(t)}} + \sigma^2
- \hat{\Sigma}_{\theta^{(t)},\Theta_{t-1}}
(\hat{\Sigma}'_{\Theta_{t-1}} + \sigma^2\mI)^{-1}
\hat{\Sigma}_{\Theta_{t-1},\theta^{(t)}}\\
&\geq \sigma^2.
\end{align*}
The last inequality is because $\hat{\Sigma}'_{\theta^{(t)}} 
- \hat{\Sigma}_{\theta^{(t)},\Theta_{t-1}}
(\hat{\Sigma}'_{\Theta_{t-1}} + \sigma^2\mI)^{-1}
\hat{\Sigma}_{\Theta_{t-1},\theta^{(t)}} \geq 0$ is a Schur complement of a positive semi-definite matrix \[\begin{bmatrix}
\hS'_{\theta^{(t)}} & \hat{\Sigma}_{\theta^{(t)},\Theta_{t-1}} \\
\hat{\Sigma}_{\Theta_{t-1},\theta^{(t)}} &\hat{\Sigma}'_{\Theta_{t-1}} + \sigma^2\mI
\end{bmatrix}. \qedhere\]
\end{proof}

\begin{cor}[Corollary of Bernoulli's inequality]
\label{cor:math} For any $0\leq x\leq c$ and $a> 0$, we have $x \leq \frac{c\log(1+\frac{ax}{c})}{\log(1+a)}$.
\end{cor}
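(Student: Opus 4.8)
The plan is to normalize the inequality to a one-parameter statement and then read it off from Bernoulli's inequality. Since $a>0$ we have $\log(1+a)>0$, and $c>0$ (otherwise $x=c=0$ and the claim is vacuous), so the asserted bound $x\le \frac{c\log(1+ax/c)}{\log(1+a)}$ is equivalent — after multiplying by $\log(1+a)$ and dividing by $c$ — to
\begin{equation*}
\frac{x}{c}\,\log(1+a)\;\le\;\log\!\Big(1+a\,\frac{x}{c}\Big).
\end{equation*}
Setting $u=x/c$, which lies in $[0,1]$ because $0\le x\le c$, it therefore suffices to prove
\begin{equation*}
u\,\log(1+a)\;\le\;\log(1+au)\qquad\text{for all }u\in[0,1],\ a>0.
\end{equation*}

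First I would handle the endpoints, where the inequality is in fact an equality: at $u=0$ both sides are $0$, and at $u=1$ both sides equal $\log(1+a)$. For $u\in(0,1)$ I would invoke Bernoulli's inequality in its fractional-exponent form, $(1+a)^{u}\le 1+au$, which holds for $u\in[0,1]$ and $a>-1$; taking logarithms of both sides and using monotonicity of $\log$ yields $u\log(1+a)\le\log(1+au)$, exactly the normalized claim. If one prefers not to cite Bernoulli by name, the same bound follows from concavity: $t\mapsto\log(1+at)$ has second derivative $-a^{2}(1+at)^{-2}<0$ on $[0,1]$, so it lies above the chord joining its endpoint values, giving $\log(1+au)\ge u\log(1+a)+(1-u)\log 1=u\log(1+a)$.

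Finally I would unwind the substitution: multiplying $u\log(1+a)\le\log(1+au)$ by the positive constant $c/\log(1+a)$ and putting $u=x/c$ back recovers $x\le \frac{c\log(1+ax/c)}{\log(1+a)}$. There is essentially no obstacle here; the only points deserving a word of care are that $a>0$ makes $\log(1+a)>0$, so that multiplying and dividing preserve the inequality direction, and that $c>0$ is needed for the normalization $u=x/c$ to be well defined (with $c=0$ the statement is trivial).
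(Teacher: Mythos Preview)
Your proof is correct and follows essentially the same route as the paper: both reduce to Bernoulli's inequality $(1+a)^{x/c}\le 1+\tfrac{ax}{c}$ for $x/c\in[0,1]$ and then take logarithms, using $\log(1+a)>0$ to rearrange. Your write-up just adds a bit more care about the degenerate case $c=0$ and offers the concavity argument as an alternative.
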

\begin{proof}
By Bernoulli's inequality, $(1+a)^{\frac{x}{c}} \leq 1+\frac{ax}{c}$. Because $\log(1+a) > 0$, by rearranging, we have $x \leq \frac{c\log(1+\frac{ax}{c})}{\log(1+a)}$.
\end{proof}

\begin{cor}[\cite{ZiAISTATS16}]
\label{lem:normal} Let $\delta_0\in(0,1)$. For any Gaussian variable $x\sim \mathcal N(\mu, \sigma^2), x\in \R,$ with probability at least $1-\delta_0$,
$$  |x-\mu |  \leq \zeta_0\sigma,$$
where $\zeta_0 = (2\log(\frac{1}{\delta_0}))^\frac12$.
\end{cor}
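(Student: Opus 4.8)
The plan is to reduce this to the textbook one-sided Gaussian tail bound. First I would standardize: writing $x = \mu + \sigma z$ with $z \sim \mathcal N(0,1)$, the claim $|x-\mu| \le \zeta_0 \sigma$ is equivalent to $|z| \le \zeta_0$, so it suffices to show $\Pr[\,|z| \ge \zeta_0\,] \le \delta_0$. By symmetry of the standard normal density, $\Pr[\,|z| \ge r\,] = 2\Pr[\,z \ge r\,]$ for every $r \ge 0$, so the entire problem reduces to a bound on the upper tail $\Pr[\,z \ge r\,]$.

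For the upper tail I would use the exponential Markov (Chernoff) inequality: for any $\lambda > 0$, $\Pr[\,z \ge r\,] = \Pr[\,e^{\lambda z} \ge e^{\lambda r}\,] \le e^{-\lambda r}\,\Ex[e^{\lambda z}] = e^{-\lambda r + \lambda^2/2}$, using the standard normal moment generating function $\Ex[e^{\lambda z}] = e^{\lambda^2/2}$. Optimizing the exponent over $\lambda$ gives $\lambda = r$ and hence $\Pr[\,z \ge r\,] \le e^{-r^2/2}$; combined with the symmetry step, $\Pr[\,|z| \ge r\,] \le 2 e^{-r^2/2}$. If one prefers to track the sharper constant, the estimate $\Pr[\,z \ge r\,] \le \tfrac12 e^{-r^2/2}$ follows by substituting $s = r+u$ in $\tfrac{1}{\sqrt{2\pi}}\int_r^\infty e^{-s^2/2}\,ds$ and bounding $e^{-ru} \le 1$ on $u \ge 0$, which gives $\Pr[\,|z| \ge r\,] \le e^{-r^2/2}$. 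Setting $r = \zeta_0 = (2\log(1/\delta_0))^{1/2}$ makes the right-hand side exactly $\delta_0$, so $\Pr[\,|z| \ge \zeta_0\,] \le \delta_0$; on the complementary event, of probability at least $1-\delta_0$, we have $|z| < \zeta_0$, hence $|x-\mu| = \sigma|z| \le \zeta_0\sigma$, which is the claim.

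I do not expect any genuine obstacle here: this is a standard sub-Gaussian concentration fact (it is quoted from \cite{ZiAISTATS16}), and the only real decision is cosmetic — whether to derive the Gaussian tail through the moment generating function, which is short and self-contained, or through a direct integral estimate, which recovers the extra factor $\tfrac12$. Since every subsequent use of the corollary only needs the form $|x-\mu| \le \zeta_0 \sigma$ with $\zeta_0 = (2\log(1/\delta_0))^{1/2}$, either route is sufficient, and I would present the Chernoff version for brevity.
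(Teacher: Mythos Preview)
Your direct-integral route (substitute $s=r+u$, bound $e^{-ru}\le 1$ on $u\ge 0$) is precisely the paper's argument: they write $e^{-z^2/2}$ in the form $e^{-(z-\zeta_0)^2/2}\cdot e^{-\zeta_0^2/2}\cdot e^{-(z-\zeta_0)\zeta_0}$, drop the last factor on $z\ge\zeta_0$, and obtain $\Pr[z>\zeta_0]\le\tfrac12 e^{-\zeta_0^2/2}$, then combine the two tails. So your Option~B matches the paper and is correct.

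Your closing remark, however, is off. The Chernoff route is \emph{not} sufficient for the statement as written: it gives $\Pr[z\ge r]\le e^{-r^2/2}$ and hence $\Pr[|z|\ge r]\le 2e^{-r^2/2}$, so plugging $r=(2\log(1/\delta_0))^{1/2}$ yields a bound of $2\delta_0$, not $\delta_0$. To land exactly on the stated $\zeta_0=(2\log(1/\delta_0))^{1/2}$ you need the extra factor $\tfrac12$ on each one-sided tail, which is precisely what your integral estimate (and the paper) supplies. So the two routes are not interchangeable here; present Option~B.
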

\begin{proof}
Let $z=\frac{\mu - x}{\sigma}\sim\mathcal N(0,1) $. We have 
\begin{align*}
\Pr[z>\zeta_0] &= \int_{\zeta_0}^{+\infty} \frac{1}{\sqrt{2\pi}}e^{-z^2/2}\dif z \\
&=\int_{\zeta_0}^{+\infty} \frac{1}{\sqrt{2\pi}}e^{-(z-\zeta_0)^2/2-\zeta_0^2/2-z\zeta_0}\dif z\\
&\leq e^{-\zeta_0^2/2}\int_{\zeta_0}^{+\infty} \frac{1}{\sqrt{2\pi}}e^{-(z-\zeta_0)^2/2}\dif z\\
&=\frac12 e^{-\zeta_0^2/2}.
\end{align*}
Similarly, $\Pr[z < -\zeta_0] \leq \frac12 e^{-\zeta_0^2/2}.$ Hence, by union bound, $\Pr[|z| > \zeta_0] \leq e^{-\zeta_0^2/2};$ and so, $\Pr[ |x-\mu |  \leq \zeta_0\sigma] > 1-\delta_0$.
\end{proof}

Finally, we prove Theorem~\ref{thm:regret}.
\begin{proof} (Theorem~\ref{thm:regret}). 
By Corollary~\ref{lem:normal}, with probability at least $1-\delta$, 
\begin{align*}
r_k &=  J_{\optsolcon} - \max_{t\in[k]} J^{(t)} \\
& \leq J_{\optsolcon} - J^{(\tau)} \\
&\leq J_{\optsolcon}  - \hmu^{(\tau-1)}_{\theta^{(\tau)}} +  \hmu^{(\tau-1)}_{\theta^{(\tau)}} 
- J_{\theta^{(\tau)}} \\
&\leq \hmu^{(\tau-1)}_{\optsolcon} + \zeta\sqrt{\hS^{(\tau-1)}_{\optsolcon}}  - \hmu^{(\tau-1)}_{\theta^{(\tau)}} +\zeta\sqrt{\hS^{(\tau-1)}_{\theta^{(\tau)}}},
\end{align*}
where $\tau = \argmin_{t\in[T]} \hS^{t-1}_{\theta^{(t)}}$ and $\zeta = (2\log(\frac{1}{\delta}))^\frac12$.
Because of how constraints are selected in each iteration of \boxalg, 
$$\hmu^{(\tau-1)}_{\optsolcon} + \zeta\sqrt{\hS^{(\tau-1)}_{\optsolcon}}  \leq \hmu^{(\tau-1)}_{\theta^{(\tau)}} +\zeta\sqrt{\hS^{(\tau-1)}_{\theta^{(\tau)}}}.$$
Hence we have 
\begin{align}
r_k & \leq 2\zeta \sqrt{\hS^{(\tau-1)}_{\theta^{(\tau)}}}. \nonumber
\end{align}
Applying Corollary~\ref{cor:math}, we get
\begin{align*}
\hS^{(\tau-1)}_{\theta^{(\tau)}} & \leq \frac{1}{k}\sum_{t=1}^k \hS^{(t-1)}_{\theta^{(t)}} \\
&\leq \frac{1}{k}\sum_{t=1}^k (\hS^{(t-1)}_{\theta^{(t)}} -\sigma^2) + \sigma^2\\
& \leq \frac{1}{k}\sum_{t=1}^k  \frac{(c-\sigma^2)\log(1+\frac{(c \sigma^{-2} -1)(\hS^{(t-1)}_{\theta^{(t)}} - \sigma^2)}{c-\sigma^2})}{\log(c \sigma^{-2})} + \sigma^2 \\
& =  \frac{c-\sigma^2}{k\log(c \sigma^{-2})}\sum_{t=1}^k \log(\sigma^{-2}\hS^{(t-1)}_{\theta^{(t)}}) + \sigma^2.
\end{align*}
Notice that here Corollary~\ref{cor:math} applies because $0\leq\hS^{(t-1)}_{\theta^{(t)}}  -\sigma^2\leq \hS_{\theta^{(t)}} -\sigma^2 < c-\sigma^2$ by assumption and Lemma~\ref{lem:con}. Moreover, it is clear that $c\sigma^{-2} -1> 0$.

By Lemma~\ref{lem:rho}, $I(f_{\Theta_k};J_{\Theta_k}) = \frac12 \sum_{t=1}^k\log(\sigma^{-2} \hat\Sigma^{(t-1)}_{\theta^{(t)}}) \leq \rho_k$, so 
$$\hS^{(\tau-1)}_{\theta^{(\tau)}}   \leq  \frac{2(c-\sigma^2)\rho_k}{k\log(c \sigma^{-2})}  + \sigma^2,$$
which implies 
\begin{align*}
r_k &  \leq 2\zeta \sqrt{\frac{2(c-\sigma^2)\rho_k}{k\log(c \sigma^{-2})}  + \sigma^2} \\
& \leq 2 \sqrt{2\log(\frac{1}{\delta}) \left(\frac{2(c-\sigma^2)\rho_k}{k\log(c \sigma^{-2})}  + \sigma^2\right)}. \qedhere
\end{align*}
\end{proof}

\end{document}